\DeclareMathAlphabet{\mathpzc}{OT1}{pzc}{m}{it}
\def\Alg{\ensuremath{\textsc{Francis}}}
\def\Lphi{L_\phi}
\def\DeltaRdef{r_t(s_{ti},a_{ti}) - \phi_{ti}^\top\theta_t^r}
\def\IBEE{\mathcal I}
\def\Radius{\mathcal R}
\def\mathringdeffullBe{\max_{(s,a)} \big| [\phi_t(s,a)^\top\mathring \theta_t (Q_{t+1}) - \T^P_t(Q_{t+1})(s,a)]\big| \leq \IBEE(\mathcal Q_{t},\mathcal Q_{t+1})}
\newcommand{\Mdef}[3]{\max_{\pi,\eta \in \R^{d_{#3}} : \| \eta \|_{#2} \leq \sqrt{#1} }\overline \phi_{\pi,{#3}}^\top \eta}
\newcommand{\MagicValue}[1]{8d_p\ln\frac{2d_p}{\delta''}}
\def\sqrtbetadef{\sqrt{2} \times 2\sqrt{\frac{d_t}{2}\ln\(1+\Lphi^2k/d_t\)  + d_{t+1}\ln(1+4\Radius_{t+1}/(2\Lphi\sqrt{k})) + \ln\(\frac{1}{\delta'} \)} + 2}
\def\etatdef{ \Vhat_{t+1}(s^{+}_{t+1,i}) -  \E_{s'\sim p(s_{ti},\pi_{ti}(s_{ti}))} \Vhat_{t+1}(s')}
\def\etardef{  r_{ti} - r_t(s_{ti},a_{ti})}
\def\sigmamaindef{c_\sigma/(d_p\ln(\frac{d_p}{\delta\epsilon}))}
\def\alphamaindef{c_\alpha H^2 (d_p+d_{p+1}) \ln(\frac{d_p}{\epsilon\delta})}
\def\emaxmaindef{c_e \frac{d^2_p\sigma}{\epsilon^2}}
\def\lsvi{\textsc{Lsvi}}
\def\Bound{\widetilde O\(H^2\sum_{t=1}^H \frac{d^2_t (d_t + d_{t+1})}{\epsilon^2}\)}
\def\Ckdef{\Bigg\{ \Mdef{\sigma}{\Sigma_{pk}}{p} > \epsilon'' > \overline \epsilon \Bigg\}}
\def\Ekdef{\Bigg\{ \E_{x_1 \sim \rho} \Vhat_{1k}(x_1) - \overline\epsilon  \geq \Mdef{\sigma}{\Sigma_{pk}}{p}  \Bigg\}}
\def\zetadef{\overline \phi_{\pi_{k(e,i)},p}^\top \xi_{p,k(e,i)} -  \phi_{p,k(e,i)}^\top \xi_{p,k(e,i)}}
\def\Adef{\sqrt{8\ln(\frac{1}{\delta''})}}
\def\gammadef{\sqrt{\frac{2\sigma_{t}d_t}{\lambda_{min}(\Sigma_{p,k(e,i)})}\ln\frac{2d_t}{\delta''}}}
\def\twonormbound{\sqrt{2\sigma_{t}d_t\ln\frac{2d_t}{\delta''}}}
\def\dumpalgo{
\begin{wrapfigure}[22]{r}{0.57\textwidth}
	\vspace{-0.3in}
\begin{minipage}{0.57\textwidth}
\begin{algorithm}[H]
\algsetup{linenosize=\footnotesize}
\footnotesize
\caption{\emph{Forward Reward Agnostic Navigation with Confidence by Injecting Stochasticity} (\Alg{})}
   \label{main.alg:MainAlg}
\begin{algorithmic}[1]
\STATE \textbf{Inputs}: failure probability $\delta \in [0,1]$, target precision $\epsilon >0$, feature map $\phi$
\STATE Initialize $\Sigma_{t1} = \lambda I, \widehat \theta_t = 0, \forall t \in [H]$, $\mathcal D = \emptyset$; set $c_e,c_\sigma,c_\alpha \in \R$ (see appendix), $\lambda = 1$
\FOR{phase $p = 1,2,\dots,H$}
\STATE $k = 1$, 
set $\sigma = \sigma_{start} \defeq \sigmamaindef$
\WHILE{$\sigma < \alphamaindef$} 
\FOR{$i = 1,2,\dots, \emaxmaindef$}
\STATE $k = k+1$, receive starting state $s_{1} \sim \rho$
\STATE $\xi_p \sim \mathcal N(0,\sigma \Sigma^{-1}_{pk})$; \quad $ \textsc{r}_p(s,a) \defeq \phi_p(s,a)^\top\xi_p$
\STATE $\pi \longleftarrow $\lsvi{}($p, \textsc{r}_p,\mathcal D$)
\STATE Run $\pi$; \; $\mathcal D \leftarrow \mathcal D \cup (s_{pk}, a_{pk},s^+_{p+1,k})$;
\STATE $\phi_{pk} \defeq \phi_{p}(s_{pk},a_{pk})$; $\Sigma_{p,k+1} \leftarrow \Sigma_{pk} + \phi_{pk}\phi_{pk}^\top$ 
\ENDFOR
\STATE  $\sigma \longleftarrow 2\sigma$
\ENDWHILE
\ENDFOR 
\STATE \textbf{return} $\mathcal D$
\normalsize
\end{algorithmic}
\end{algorithm}
\end{minipage}
\end{wrapfigure}
}
\def\dumplsvi{
\begin{algorithm}[H]
\algsetup{linenosize=\footnotesize}
\footnotesize
   \caption{\lsvi($H$,$ \textsc{r}_H$, $\mathcal D;\lambda =1$) - This is for use in \Alg{} with reward signal $\textsc{r}_H$}
   \label{alg:lsviFrancis}
\begin{algorithmic}[1]
\STATE \textbf{Input}: horizon $H$, dataset $\mathcal D$, regularization $\lambda$. \\
\STATE Extract pseudo-reward parameter $\xi_H$ from $\textsc{r}_H$ function
\STATE Set $\widehat \theta_H = \xi_H$
\FOR{timestep $t = H-1,\dots,1$}
\STATE Solve $\widehat \theta_t = \argmin_{\theta} \sum_{k=1}^{n(t)} \big[\phi_{t}(s_{tk},a_{tk})^\top \theta - \max_{a'}\phi_{t+1}(s_{t+1,k}^{+},a')^\top \widehat \theta_{t+1} \big]^2 + \lambda \| \theta\|_2^2$
\ENDFOR
\STATE \textbf{Return $\pi: (s,t) \mapsto \argmax_{a} \phi_t(s,a)^\top \widehat \theta_t$}
\end{algorithmic}
\end{algorithm}

\begin{algorithm}[H]
\algsetup{linenosize=\footnotesize}
\footnotesize
   \caption{\lsvi($H$, $\mathcal D;\lambda =1$) - This is the regular batch algorithm}
   \label{alg:lsviBatch}
\begin{algorithmic}[1]
\STATE \textbf{Input}: horizon $H$, dataset $\mathcal D$, regularization $\lambda$. \\
\STATE \textbf{Set} $\widehat \theta^{R+PV}_{H+1} = 0$.
\FOR{timestep $t = H,H-1,\dots,1$}
\STATE Solve $\widehat \theta^{R+PV}_t = \argmin_{\theta} \sum_{k=1}^{n(t)} \big[\phi_{t}(s_{tk},a_{tk})^\top \theta - r_{tk} - \max_{a'}\phi_{t+1}(s_{t+1,k}^{+},a')^\top \widehat \theta^{R+PV}_{t+1} \big]^2 + \lambda \| \theta\|_2^2$
\ENDFOR
\STATE \textbf{Return $\pi: (s,t) \mapsto \argmax_{a} \phi_t(s,a)^\top \widehat \theta^{R+PV}_t$}
\end{algorithmic}
\end{algorithm}
}
\title{Provably Efficient Reward-Agnostic Navigation \\ with Linear Value Iteration}
\author{%
  Andrea Zanette \\
    Stanford University \\
  \texttt{zanette@stanford.edu} \\
  \And
  Alessandro Lazaric \\
  Facebook Artificial Intelligence Research \\
  \texttt{lazaric@fb.com} \\
  \AND
  Mykel J. Kochenderfer \\
  Stanford University \\
  \texttt{mykel@stanford.edu} \\
  \And
  Emma Brunskill \\
  Stanford University \\
  \texttt{ebrun@cs.stanford.edu} \\
}
\begin{document}

\maketitle

\begin{abstract}
There has been growing progress on theoretical analyses for provably efficient learning in MDPs with linear function approximation, but much of the existing work has made strong assumptions to enable exploration by conventional exploration frameworks. Typically these assumptions are stronger than what is needed to find good solutions in the batch setting. In this work, we show how under a more standard notion of low inherent Bellman error, typically employed in least-square value iteration-style algorithms, we can provide strong PAC guarantees on learning a near optimal value function provided that the linear space is sufficiently ``explorable''. 
We present a computationally tractable algorithm for the reward-free setting and show how it can be used to learn a near optimal policy for any (linear)  reward function, which is revealed only once learning has completed.  If this reward function is also estimated from the samples gathered during pure exploration, our results also provide same-order PAC guarantees on the performance of the resulting policy for this setting. 
\end{abstract}

\section{Introduction}
Reinforcement learning (RL) aims to solve complex multi-step decision problems with stochastic outcomes framed as a Markov decision process (MDP).
RL algorithms often need to explore large state and action spaces where function approximations become necessity. In this work, we focus on exploration with linear predictors for the action value function, which can be quite expressive \citep{sutton2018reinforcement}.

\paragraph{Existing guarantees for linear value functions}
Exploration has been widely studied in the tabular setting \citep{Azar17,zanette2019tighter,efroni2019tight,jin2018q,dann2019policy}, but obtaining formal guarantees for exploration with function approximation appears to be a challenge even in the linear case.
The minimal necessary and sufficient conditions to reliably learn a linear predictor are not fully understood \emph{even with access to a generative model} \citep{du2019good}. We know that when the best policy is unique and the predictor is sufficiently accurate it can be identified \citep{du2019provably,du2020agnostic}, but in general we are  interested in finding only near-optimal policies using potentially misspecified approximators.

To achieve this goal, several ideas from tabular exploration and linear bandits \citep{lattimore2020bandit} have been combined to obtain provably efficient algorithms in low-rank MDPs  \citep{yang2020reinforcement,zanette2020frequentist,jin2020provably} and their extension \citep{wang2019optimism,wang2020provably}. We shall identify the core assumption of the above works  as \emph{optimistic closure}: all these settings assume the \emph{Bellman operator maps any value function of the learner to a low-dimensional space $\mathcal Q$} that the learner knows. When this property holds, we can add exploration bonuses because \emph{by assumption the Bellman operator maps the agent's optimistically modified value function back to $\mathcal Q$}, which the algorithm can represent and use to propagate the optimism and drive the exploration. However, the optimistic closure is put as an assumption to enable \emph{exploration} using traditional methods, but is stronger that what is typically required in the batch setting.  

\paragraph{Towards batch assumptions}
This work is motivated by the desire to have exploration algorithms that we can deploy under more mainstream assumptions, ideally when we can apply well-known \emph{batch} procedures like least square policy iteration (\textsc{Lspi}) \citep{lagoudakis2003least}, and least square value iteration (\lsvi{}) \citep{munos2005error}.

\textsc{Lspi} has convergence guarantees when the action value function of \emph{all} policies can be approximated with a linear architecture \citep{lazaric2012finite}, i.e., $Q^{\pi}$ is linear for all $\pi$; in this setting, \citet{lattimore2020learning} recently use a design-of-experiments procedure from the bandit literature to obtain a provably efficient algorithm for finding a near optimal policy, but they need access to a generative model. \lsvi{}, another popular batch algorithm, requires low inherent Bellman error \citep{munos2008finite,chen2019information}.
In this setting, \citet{zanette2020learning} present a near-optimal (with respect to noise and misspecification) regret-minimizing algorithm that operates online, but a computationally tractable implementation is not known. It is worth noting that both settings are more general than linear MDPs \citep{zanette2020learning}.

A separate line of research is investigating settings with low Bellman rank \citep{jiang17contextual} which was found to be a suitable measure of the learnability of many complex reinforcement learning problems. The notion of Bellman rank extends well beyond the linear setting.

The lack of computational tractability in the setting of \citet{zanette2020learning} and in the setting with low Bellman rank \citep{jiang17contextual} and of a proper online algorithm in \citep{lattimore2020learning} highlight the hardness of these very general settings which do not posit additional assumptions on the linear value function class $\mathcal Q$ beyond what is required in the batch setting.

\paragraph{Reward-free exploration}
We tackle the problem of designing an exploration algorithm using batch assumptions by adopting a pure exploration perspective: our algorithm can return a near optimal policy for any linear reward function that is revealed after an initial learning phase. It is therefore a probably approximately correct (PAC) algorithm.
Reward-free exploration has been investigated in the tabular setting with an end-to-end algorithm \citep{jin2020rewardfree}.  \citet{hazan2018provably} design an algorithm for a more general setting through oracles that also recovers guarantees in the tabular domains. Others \citep{du2019provablyefficient,misra2019kinematic} also adopt the pure exploration perspective assuming a small but unobservable state space. More recently, reward free exploration has gained attention in the tabular setting \cite{kaufmann2020adaptive,tarbouriechreward,menard2020fast} as well as the context of function approximation \cite{wainwright2019high,agarwal2020pc}.

\paragraph{Contribution}
This works makes two contributions. It presents a  statistically and computationally efficient online PAC algorithm to learn a near-optimal policy 1) for the setting with low inherent Bellman error \citep{munos2008finite} and 2) for reward-free exploration in the same setting.

From a technical standpoint, 1) implies we cannot use traditional exploration methodologies and 2) implies we cannot learn the full dynamics, which would require estimating all state-action-state transition models.
Both goals are accomplished by driving exploration by approximating G-optimal experimental design \citep{lattimore2020bandit} in online reinforcement learning through randomization. 
Our algorithm returns a dataset of well chosen state-action-transition triplets, such that invoking the \lsvi{} algorithm on that dataset (with a chosen reward function) returns a near optimal policy on the MDP with that reward function.

\section{Preliminaries and Intuition}
We consider an undiscounted $H$-horizon MDP~\citep{puterman1994markov} $M = (\mathcal{S}, \mathcal{A},p, r, H)$ defined by a possibly infinite state space $\mathcal{S}$ and action space $\mathcal{A}$. For every $t \in [H] = \{1, \ldots, H\}$ and state-action pair $(s,a)$, we have a reward function $r_t(s,a)$ and a transition kernel $p_t(\cdot \mid s,a)$ over the next state.
A policy $\pi$ maps a $(s,a,t)$ triplet to an action and defines a reward-dependent action value function $Q^{\pi}_{t}(s,a) = r_{t}(s,a) + \mathbb{E} \left[ \sum_{l = t+1}^{H} r_{l}(s_{l}, \pi_{l}(s_l))\mid s,a \right]$
and a value function $V^{\pi}_t(s) = Q^{\pi}_t(s, \pi_t(s))$. For a given reward function there exists an optimal policy $\pi^\star$ whose value and action-value functions on that reward function are defined as $\Vstar_t(s ) =  \sup_{\pi} V^{\pi}_t(s)$ and $\Qstar_t(s,a) = \sup_{\pi} Q^{\pi}_t(s,a)$. We indicate with $\rho$ the starting distribution. 
The Bellman operator $\T_t$ applied to the action value function $Q_{t+1}$ is defined as $\T_t(Q_{t+1})(s,a) = r_t(s,a) + \E_{s' \sim p_t(s,a)} \max_{a'}Q_{t+1}(s',a')$. For a symmetric positive definite matrix $\Sigma$ and a vector $x$ we define $\| x \|_{\Sigma^{-1}} = \sqrt{x^\top\Sigma^{-1}x}$. The $O(\cdot)$ notation hides constant values and the $\widetilde O(\cdot)$ notation hides constants and $\ln(dH\frac{1}{\epsilon}\frac{1}{\delta})$, where $d$ is the feature dimensionality described next.

\paragraph{Linear Approximators}
For the rest of the paper we restrict our attention to linear functional spaces for the action value function, i.e.,  where $ Q_t(s,a) \approx \phi_t(s,a)^\top\theta$ for a known feature extractor $\phi_t(s,a)$ and a parameter $\theta$ in a certain set $\mathcal B_t$, which we assume to be the Euclidean ball with unit radius $\mathcal B_t = \{\theta \in \R^{d_t} \mid \| \theta \|_2\leq  1 \}$.
This defines the value functional spaces as
\begin{align*}
	\mathcal Q_{t} \defeq \{ Q_t \mid Q_t(s,a) = \phi_t(s,a)^\top\theta, \; \theta\in\mathcal B_t \}, \quad
	\mathcal V_{t} \defeq \{ V_t \mid V_t(s) = \max_{a} \phi_t(s,a)^\top\theta, \; \theta\in\mathcal B_t \}.
\end{align*}

\paragraph{Inherent Bellman error}
The inherent Bellman error condition is typically employed in the analysis of \lsvi{} \citep{munos2008finite,chen2019information}. It measures the closure of the prescribed functional space $\mathcal Q$ with respect to the Bellman operator $\T$, i.e, the distance of $\T Q$ from $\mathcal Q$ \emph{provided that $Q \in \mathcal Q$}. In other words, low inherent Bellman error ensures that if we start with an action value function in $\mathcal Q$ then we approximately remain in the space after performance of the Bellman update. For finite horizon MDP we can define the inherent Bellman error as:
\begin{align}
\label{main.eqn:IBE_old}
\max_{\substack{Q_{t+1}\in \mathcal Q_{t+1}}} \min_{Q_t \in \mathcal Q_{t}} \max_{(s,a)} |[Q_t - \T_t(Q_{t+1})](s,a)|.
\end{align}
When linear function approximations are used and the inherent Bellman error is zero, we are in a setting of low Bellman rank \citep{jiang17contextual}, where the Bellman rank is the feature dimensionality. This condition is more general than the low rank MDP setting or optimistic closure \citep{yang2020reinforcement,jin2020provably,zanette2020frequentist,wang2019optimism}; for a discussion of this see \citep{zanette2020learning}.

\paragraph{Model-free reward-free learning}
In the absence of reward signal, how should $\mathcal Q_t$ look like? Define the reward-free Bellman operator $\T^P_t(Q_{t+1})(s,a) = \E_{s' \sim p_t(s,a)} \max_{a'}Q_{t+1}(s',a')$. 
It is essentially equivalent to measure the Bellman error either on the full Bellman operator $\T_t$ or directly on the dynamics $\T_t^P$ when the reward function is linear (see proposition 2 of \citet{zanette2020learning}). We therefore define the inherent Bellman error directly in the transition operator $\T^P$:
\begin{definition}[Inherent Bellman Error]
\label{main.def:InherentBellmanError}
\begin{align}
\IBEE(\mathcal Q_{t},\mathcal Q_t) \defeq \max_{\substack{Q_{t+1}\in \mathcal Q_{t+1}}} \min_{Q_t \in \mathcal Q_{t}} \max_{(s,a)} |Q_t - \T^{P}_t(Q_{t+1})](s,a)|.
\end{align}
\end{definition}

\paragraph{Approximating G-optimal design}

G-optimal design is a procedure \citep{kiefer1960equivalence} that identifies an appropriate sequence of features $\phi_1,\dots\phi_n$ to probe to form the design matrix $\Sigma = \sum_{i=1}^{n} \phi_i\phi_i^\top$ in order to uniformly reduce the maximum ``uncertainty'' over all the features as measured by $\max_{\phi}\| \phi\|_{\Sigma^{-1}} $, see \cref{sec:DoE}.
This principle has recently been applied to RL with a generative model \citep{lattimore2020learning} to find a near optimal policy.

However, the basic idea has the following drawbacks in RL: 1) it requires access to a generative model; 2) it is prohibitively expensive as it needs to examine all the features across the full state-action space before identifying what features to probe.
This work addresses these 2 drawbacks in reinforcement learning by doing two successive approximations to G-optimal design. The first approximation would be compute and follow the policy $\pi$ (different in every rollout) that leads to an expected feature $\overline \phi_{\pi}$ in the most uncertain direction\footnote{This is an approximation to $G$-optimal design, because $\pi$ here is the policy that leads to the most uncertain direction $\overline \phi_{\pi}$ rather than to the direction that reduces the uncertainty the most.} (i.e., the direction where we have the least amount of data). This solves problem 1 and 3 above, but unfortunately it turns out that computing such $\pi$ is computationally infeasible. Thus we relax this program by finding a policy that in most of the episodes makes at least some progress in the most uncertain direction, thereby addressing point 2 above. This is achieved through randomization; the connection is briefly outlined in \cref{main.sec:ConnectionwithG-optimaldesign}.

\section{Algorithm}
Moving from the high-level intuition to the actual algorithm requires some justification, which is left to \cref{main.sec:TechicalAnalysis}. Here instead we give few remarks about  \cref{main.alg:MainAlg}: first, the algorithm proceeds in phases $p=1,2,\dots$ and in each phase it focuses on learning the corresponding timestep (e.g., in phase $2$ it learns the dynamics at timestep $2$). \dumpalgo
Proceeding forward in time is important because \emph{to explore at timestep $p$ the algorithm needs to know how to navigate through prior timesteps}. Second, we found that random sampling a reward signal in the exploratory timestep from the inverse covariance matrix $\xi_p \sim \mathcal N(0,\sigma\Sigma^{-1}_{pk})$ is an elegant and effective way to \emph{approximate design of experiment} (see \cref{main.sec:ConnectionwithG-optimaldesign}), although this is not the only possible choice.
Variations of this basic protocol are broadly known in the literature as Thompson sampling \citep{osband2016deep,Agrawal2017,russo2019worst,gopalan2015thompson,ouyang2017learning} and from an algorithmic standpoint our procedure could be interpreted as a modification of the popular \textsc{Rlsvi} algorithm \citep{osband2016generalization} to tackle the reward-free exploration problem. 

The algorithm returns a dataset $\mathcal D$ of well chosen state-action-transitions approximating a G-optimal design in the online setting; the dataset can be augmented with the chosen reward function and used in \lsvi{} (detailed in \cref{sec:lsvi}) to find a near-optimal policy on the MDP with that reward function. The call \lsvi{}($p, \textsc{r}_p,\mathcal D$) invokes the \lsvi{} algorithm on a $p$ horizon MDP on the batch data $\mathcal D$ with reward function $\textsc{r}_p$ at timestep $p$.

\section{Main Result}
\label{main.sec:MainResult}
Before presenting the main result is useful to define the average feature $\overline \phi_{\pi,t} = \E_{x_t \sim \pi}\phi_{t}(x_t,\pi_t(x_t))$ encountered at timestep $t$ upon following a certain policy $\pi$. In addition, we need a way to measure how ``explorable'' the space is, i.e., how easy it is to collect information in a given direction of the feature space using an appropriate policy. The explorability coefficient $\nu$ measures how much we can align the expected feature $\overline \phi_{\pi,t}$   with the most challenging direction $\theta$ to explore even if we use the \emph{best} policy $\pi$ for the task (i.e., the policy that maximizes this alignment). It measures how difficult it is to explore the most challenging direction, even if we use the best (and usually unknown) policy to do so. This is similar to a diameter condition in the work of \citet{Jaksch10} in the features space, but different from ergodicity, which ensures that sufficient information can be collected by \emph{any} policy. It is similar to the reachability parameter of \citet{du2019provablyefficient} and \citet{misra2019kinematic}, but our condition concerns the features rather than the state space and is unavoidable in certain settings (see discussion after the main theorem).
\begin{definition}[Explorability]
	\label{main.def:Explorability}
$
		 \nu_t \defeq \min_{\|\theta\|_2=1}\max_{\pi} | \overline \phi_{\pi,t}^\top \theta |; \quad \quad \nu_{min} = \min_{t \in [H]} \nu_t.
$
\end{definition}
\black

\begin{restatable}[]{thm}{mainresult}
\label{main.thm:MainResult}
Assume $ \| \phi_t(s,a) \|_2 \leq 1$ and set $\epsilon$ to satisfy $\epsilon \geq \widetilde O(d_tH\IBEE(\mathcal Q_{t},\mathcal Q_{t+1}))$ and $\epsilon \leq \widetilde O(\nu_{min}/\sqrt{d_t})$ for all $t \in [H]$.  \Alg{} terminates after
	$\Bound$ episodes. 
	
	Fix a reward function $r_t(\cdot,\cdot)$ such that each state-action-successor state  $(s_{tk},a_{tk},s^+_{t+1,k})$  triplet in $\mathcal D$ (where $t \in [H]$ and $k$ is the episode index in phase $t$) is augmented with a reward $r_{tk} = r_t(s_{tk},a_{tk})$.  
	If the reward function $r_{t}(\cdot,\cdot)$ satisfies for some parameters $ \theta_1^r \in \R^{d_1}, \dots, \theta_H^r \in \R^{d_H}$
	\begin{align*}
	\forall (s,a,t) \quad \| \theta^r_t \|_2 \leq\frac{1}{H}, \quad r_t(s,a) = \phi_{t}(s,a)^\top \theta^r_t
	\end{align*}
		then with probability at least $1-\delta$ the policy $\pi$ returned by \lsvi{} using  the augmented dataset $\mathcal D$ satisfies (on the MDP with $r_{t}(\cdot,\cdot)$ as reward function) 
	\begin{align}
		\E_{x_1 \sim \rho} (V_1^\star - V^\pi_1)(x_1) \leq \epsilon.
	\end{align}
\end{restatable}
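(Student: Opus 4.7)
The plan is to proceed by forward induction on the phases $p=1,\dots,H$, maintaining as inductive invariant that at the end of phase $p$ the accumulated covariance $\Sigma_{p}$ covers every reachable direction at timestep $p$ in the sense that $\Mdef{\sigma}{\Sigma_p}{p}$ is driven below a prescribed threshold tied to $\epsilon/H$. This is the online analogue of the G-optimal-design guarantee and is precisely what will later enable \lsvi{} to deliver an $\epsilon$-optimal policy for any linear reward revealed post hoc. The explorability coefficient $\nu_{\min}$ enters here: it ensures that the most uncertain direction at each phase is reachable by \emph{some} policy, so the random-reward subroutine can always make nonzero progress in the worst case; when $\epsilon\leq\widetilde O(\nu_{\min}/\sqrt{d_t})$ this progress exceeds the regression noise.

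First I would verify that the inner while-loop, which doubles $\sigma$, correctly certifies this invariant. The sampled reward $\xi_p\sim\mathcal N(0,\sigma\Sigma_{pk}^{-1})$ has the property that in the direction of maximum uncertainty $\theta^\star=\arg\max_{\pi,\theta}\overline\phi_{\pi,p}^\top\theta$ the inner product $\theta^{\star\top}\xi_p$ is large with Gaussian tail probability proportional to $\sqrt{\sigma}\,\Mdef{1}{\Sigma_{pk}}{p}$. Running \lsvi{} with this random reward on the already-collected prior-phase data returns a policy whose expected feature $\overline\phi_{\pi,p}$ correlates with that direction, provided the earlier-phase value estimates transfer correctly (which is exactly what the inductive invariant guarantees via a self-normalized concentration under the induced state distribution). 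A Gaussian anti-concentration argument then shows that each episode in the inner loop shrinks the uncertainty along $\theta^\star$ on average; after $\widetilde O(d_p^2\sigma/\epsilon^2)$ episodes, either $\Mdef{\sigma}{\Sigma_{pk}}{p}$ has fallen below $\epsilon'$ or the outer loop doubles $\sigma$, which can happen only $O(\log(H/\epsilon))$ times before $\sigma$ exceeds the termination threshold $c_\alpha H^2(d_p+d_{p+1})\ln(\cdot)$. Summing over phases yields the stated episode complexity $\Bound$.

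Second, I would run a standard inherent-Bellman-error-robust \lsvi{} analysis on the returned dataset augmented with the revealed linear reward. For each timestep $t$, the definition of inherent Bellman error supplies a best linear proxy $\mathring\theta_t$ to $\T_t(\widehat V_{t+1})$ within $\IBEE(\mathcal Q_t,\mathcal Q_{t+1})$; the ridge estimate $\widehat\theta_t$ differs from $\mathring\theta_t$ by a zero-mean term that concentrates in the $\Sigma_t^{-1}$ metric through the self-normalized inequality, plus a bias of order $\IBEE$. A simulation-lemma-style telescoping over the two roll-ins by $\pi^\star$ and $\pi$ yields
\[
\E_{x_1\sim\rho}(V_1^\star-V_1^\pi)(x_1)\;\leq\;\sum_{t=1}^{H}\Big(\|\overline\phi_{\pi^\star,t}\|_{\Sigma_t^{-1}}+\|\overline\phi_{\pi,t}\|_{\Sigma_t^{-1}}\Big)\|\widehat\theta_t-\mathring\theta_t\|_{\Sigma_t}\;+\;H\cdot\IBEE.
\]
The invariant bounds the first factor uniformly over all policies, including $\pi^\star$ and the returned $\pi$, while the self-normalized bound on the second factor scales with $\sqrt{d_t+d_{t+1}}$ times the reward scale $\|\theta^r_t\|_2\leq 1/H$. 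The assumption $\epsilon\geq\widetilde O(d_tH\,\IBEE)$ absorbs the bias term, and plugging the exploration sample complexity into the variance term produces the desired $\epsilon$ accuracy.

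The main obstacle is closing the induction at the exploration step: proving that \lsvi{} driven by the random linear reward $\xi_p$ returns a policy whose expected feature approximately solves the program $\max_\pi\overline\phi_{\pi,p}^\top\xi_p$. This requires transferring the accuracy of $\widehat\theta_{1},\dots,\widehat\theta_{p-1}$ from the exploratory data distribution to the distribution induced by arbitrary candidate policies $\pi$, which is nonstandard because the exploratory policies were themselves random and reward-dependent. The uniform-over-$\pi$ form of the invariant is what makes this transfer possible, but aligning the three concentration bounds (Gaussian anti-concentration for the reward, self-normalized concentration for $\widehat\theta_t$, and the explorability lower bound) without spurious factors of $H$ or $d$ is the bulk of the technical bookkeeping, and it dictates the precise choice of the constants $c_e,c_\sigma,c_\alpha$ deferred to the appendix.
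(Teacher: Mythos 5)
Your overall architecture matches the paper's: forward induction over phases with a uniform-over-policies invariant on $\sqrt{\alpha_t}\|\overline\phi_{\pi,t}\|_{\Sigma_t^{-1}}$, Gaussian anti-concentration to show the randomized reward overestimates the maximum uncertainty, an elliptic-potential argument giving $\widetilde O(d_p^2\sigma/\epsilon^2)$ episodes per epoch, logarithmically many doublings of $\sigma$, and a telescoping inherent-Bellman-error \lsvi{} analysis for the reconstruction step; the suboptimality decomposition you write is essentially the paper's batch-\lsvi{} proposition.

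There is, however, one genuine gap: you have misplaced the role of explorability, and as a result the step where the outer loop doubles $\sigma$ is unjustified. The binding constraint is $\|\xi_p\|_2\leq \tfrac12$: the sampled parameter must stay in the unit ball so that the \lsvi{} iterates $\widehat\theta_t$ remain in (a controlled inflation of) $\mathcal B_t$, which is the only regime in which the inherent Bellman error assumption defines $\mathring\theta_t(\Vhat_{t+1})$ and controls the regression bias. Since $\xi_p\sim\mathcal N(0,\sigma\Sigma_{pk}^{-1})$, this forces $\sigma=\widetilde O(\lambda_{min}(\Sigma_{pk})/d_p)$, so doubling $\sigma$ is legitimate only if $\lambda_{min}(\Sigma_{pk})$ has itself doubled during the epoch. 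That is where explorability actually enters: if the uncertainty $\Mdef{\sigma}{\Sigma_{pk}}{p}$ has been driven below $\epsilon$ yet some eigenvalue of $\Sigma_{pk}$ were still small, the corresponding eigenvector $v$ would admit, by \cref{main.def:Explorability}, a policy with $|\overline\phi_{\pi,p}^\top v|\geq\nu_{min}$, contradicting the smallness of the uncertainty; this yields $\lambda_{min}(\Sigma_{pk})\geq\sigma(\nu_{min}/\epsilon)^2$, and the hypothesis $\epsilon\leq\widetilde O(\nu_{min}/\sqrt{d_p})$ is precisely what makes this lower bound large enough to permit the doubling. Your account --- that $\nu_{min}$ guarantees ``nonzero progress exceeding the regression noise'' --- is not what is needed (the elliptic potential lemma already guarantees progress unconditionally); without the eigenvalue-growth argument the induction cannot reach $\sigma\approx H^2\alpha_p$, and the final claim $\mathcal U^\star_p(\alpha_p)\leq\epsilon/H$ does not follow. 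Relatedly, the same eigenvalue condition $\lambda_{min}(\Sigma_t)\geq 4H^2\alpha_t$ is what keeps the batch \lsvi{} iterates bounded in the reconstruction step, a prerequisite for applying the inherent-Bellman-error bound there, which your sketch omits.
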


The full statement is reported in appendix \cref{sec:SolutionReconstruction}.
The reward function $r_t(\cdot,\cdot)$ could even be adversarially chosen after the algorithm has terminated. If the reward function is estimated from data then the theorem immediately gives same-order guarantees as a corollary. The dynamics error $O(d_tH\IBEE(\mathcal Q_{t},\mathcal Q_{t+1}))$ is contained in $\epsilon$.

The setting allows us to model MDPs where where $r_t \in [0,\frac{1}{H}]$ and $\Vstar_t \in [0,1]$. When applied to MDPs with rewards in $[0,1]$ (and value functions in $[0,H]$), the input and output should be rescaled and the number of episodes to $\epsilon$ accuracy should be multiplied by $H^2$. 

The significance of the result lies in the fact that this is the first statistically and computationally\footnote{\Alg{} requires only polynomial calls to \lsvi{} and samples from a multivariate normal,  see \cref{sec:ComputationalComplexity}.} efficient PAC algorithm for the setting of low inherent Bellman error; this is special case of the setting with low Bellman rank (the Bellman rank being the dimensionality of the features). 
In addition, this work provides one of the first end-to-end algorithms for provably efficient reward-free exploration with linear function approximation.

\begin{table}
\footnotesize
\begin{tabular}{@{}p{3.2cm} p{1.0cm} p{1.2cm} p{2.2cm} p{1.4cm} p{2.4cm}@{}}
 \toprule
  & Online? & Reward-agnostic? & Need optimistic closure?&$\#$ episodes & $\#$ computations  \\
 \midrule
 This work  & Yes & Yes  & No  &$\frac{d^3H^5}{\epsilon^2}$ &  poly$(d,H,1/\epsilon^2)$  \\
 G-optimal design + \lsvi{} & No & Yes & No & $\frac{d^2H^5}{\epsilon^2}$ &  $\Omega(SA)$ \\
  \citep{zanette2020learning} & Yes & No & No   & $\frac{d^2H^4}{\epsilon^2}$   & exponential  \\
 \citep{jin2020provably} & Yes & No & Yes &  $\frac{d^3H^4}{\epsilon^2}$   & poly$(d,H,1/\epsilon^2)$  \\
 \citep{jiang17contextual} & Yes & No & No  & $  \frac{d^2H^5}{\epsilon^2}|\mathcal A| $   & intractable  \\
 \citep{jin2020rewardfree} & Yes & Yes & (tabular) & $\frac{H^5S^2A}{\epsilon^2} $ &  poly$(S,A,H,1/\epsilon^2)$ \\
  \citep{wang2020reward} & Yes & Yes & Yes & $\frac{d^3H^6}{\epsilon^2} $ &  poly$(S,A,H,1/\epsilon^2)$ \\
 \bottomrule
\end{tabular}
\vspace{0.1cm}
\caption{\footnotesize We consider the number of episodes to learn an $\epsilon$-optimal policy. We assume $r \in [0,1]$ and $Q^\pi \in [0,H]$, and rescale the results to hold in this setting. We neglect misspecification for all works. The column ``optimistic closure'' refers to the assumption that the Bellman operator projects \emph{any} value function into a prescribed space (notably, low-rank MDPs of \citep{jin2020provably}). For our work we assume $\epsilon = \Omega(\nu_{min}/\sqrt{d})$.  We recall that if an algorithm has regret $A\sqrt{K}$, with $K$ the number of episodes then we can extract a PAC algorithm to return an $\epsilon$-optimal policy in $\frac{A^2}{\epsilon^2}$ episodes. We evaluate \citep{jiang17contextual} in our setting where the Bellman rank is $d$ (the result has an explicit dependence on the number of actions, though this could be improved in the linear setting). $G$-optimal design is from the paper \citep{lattimore2020learning} which operates in infinite-horizon and assuming linearity of $Q^\pi$ for all $\pi$, so the same idea of G-optimal design was applied to our setting to derive the result and we report the number of required samples (as opposed to the number of episodes), see \cref{sec:DoE}. For \citep{jin2020rewardfree} we ignore the $\frac{H^7S^4A}{\epsilon}$ lower order term}
\label{main.table:MainTable}
\end{table}
\normalsize

In \cref{main.table:MainTable} we describe our relation with few relevant papers in the field. The purpose of the comparison is not to list the pros and cons of each work with respect to one another, as these works all operate under different assumptions, but rather to highlight what is achievable in different settings.

\paragraph{Is small Bellman error needed?}
As of writing, the minimal conditions that enable provably efficient learning with function approximation are still unknown \citep{du2019good}. In this work we focus on small Bellman error which is a condition typically used for \emph{batch} analysis of \lsvi{} \citep{munos2005error,munos2008finite,chen2019information}.
What is really needed for the functioning of \Alg{} is that vanilla \lsvi{} outputs a good solution in the limit of infinite data on different (linear) reward functions: as long as \lsvi{} can return a near-optimal policy for the given reward function given enough data, \Alg{} can proceed with the exploration. This requirement is really minimal, because even if the best dataset $\mathcal D$ is collected through G-optimal design on a generative model (instead of using \Alg{}), \lsvi{} must anyway be able to output a good policy on the prescribed reward function.

\paragraph{Is explorability needed?}
Theorem \ref{main.thm:MainResult} requires 
$\epsilon \leq \widetilde O(\nu_{min}/\sqrt{d_t})$. Unfortunately, a dependence on $\nu_{min}$ turns out to be unavoidable in the \emph{more general} setting we consider in the appendix; 
we discuss this in more detail in \cref{sec:LowerBound}, but here we give some intuition regarding the explorability requirement. 

\Alg{} can operate under two separate set of assumptions, which we call
\emph{implicit} and \emph{explicit} regularity, see \fullref{def:RewardClasses} in appendix and the main result in \cref{thm:MainResult}.

Under \emph{implicit} regularity we do not put assumptions on the norm of reward parameter $\| \theta^r\|_2$, but only a bound on the expected value of the rewards under any policy: $|\mathbb E_{x_t \sim \pi} r_t(x_t,\pi_t(x_t)) | \leq \frac{1}{H}$. This representation allows us to represent \emph{very high rewards} $(\gg 1)$ \emph{in hard-to-reach states}. It basically controls how big the value function can get. This setting is more challenging for an agent to explore \emph{even in the tabular setting} and \emph{even in the case of a single reward function}. If a state is hard to reach, the reward there can be very high, and a policy that tries to go there \emph{can still have high value}. Under this implicit regularity assumption, the explorability parameter would show up for tabular algorithms as well (as minimum visit probability to any state under an appropriate policy). 

By contrast, under
\emph{explicit} regularity (which concerns the result reported in \cref{main.thm:MainResult})
 we do make the classical assumption that bounds the parameter norm $\| \theta^r \|_2 \leq 1/H$. In this case, the lower bound no longer applies, but the proposed algorithm still requires good ``explorability'' to proceed. Removing this assumption is left as future work.

\section{Technical Analysis}
\label{main.sec:TechicalAnalysis}

For the proof sketch we neglect misspecification, i.e., $\mathcal \IBEE(\mathcal Q_{t},\mathcal Q_{t+1}) = 0$. We say that a statement holds with very high probability if the probability that it does not hold is $\ll \delta$.

\subsection{Analysis of \lsvi{}, uncertainty and inductive hypothesis}
\Alg{} repeatedly calls \lsvi{} on different randomized linearly-parameterized reward functions $\textsc{r}_p$ and so we need to understand how the signal propagates. 
Let us begin by defining an uncertainty function in episode $i$ of phase $p$ using the covariance matrix $\Sigma_{pi} = \sum_{j=1}^{i-1} \phi_{pj}\phi_{pj}^\top + I$ on the observed features $\phi_{pj} = \phi_p(s_{pj},a_{pj})$ at episode $j$ of phase $p$:
\begin{definition}[Max Uncertainty]
\label{main.def:Ufnc} $\mathcal U^\star_{pi}(\sigma) \defeq \max_{\pi,\| \theta^{\mathcal U} \|_{\Sigma_{pi}\leq \sqrt{\sigma}}}
 \overline \phi_{\pi,p}^\top\theta^{\mathcal U} \defeq \max_{\pi}\sqrt{\sigma}\| \overline \phi_{\pi,p} \|_{\Sigma^{-1}_{pi}}$.
\end{definition}
Let  $\Sigma_t$ denote the covariance matrix in timestep $t$ once learning in that phase has completed, and likewise denote with $\mathcal U^\star_t(\sigma)$ the final value of the program of \cref{main.def:Ufnc} once learning in phase $t$ has completed (so using $\Sigma_{t}$ in the definition); let $\sqrt{\alpha_t} = \widetilde O(\sqrt{d_t + d_{t+1}})$  and $\textsc{r}_p(s,a) = \phi_p(s,a)^\top\xi_p$.
\begin{lemma}[see \cref{sec:lsvi-final-bound}]
\label{main.lem:lsvi}
Assume $\| \xi_p \|_2 \leq 1$ and $\lambda_{min}(\Sigma_t) = \Omega(H^2\alpha_t)$ for all $t \in [p-1]$. Then with very high probability \lsvi($p,\textsc{r}_p,\mathcal D$) computes a value function $\Vhat$ and a policy $\pi$ s.t.
	\begin{align*}
|\E_{x_1 \sim \rho}\Vhat_{1}(x_1) - \overline \phi_{\pi,p}^\top \xi_p |
	\leq \sum_{t=1}^{p-1} \Big[ \sqrt{\alpha_{t}} \| \overline \phi_{\pi,t} \|_{\Sigma^{-1}_{t}}  \Big] = \sum_{t=1}^{p-1} \mathcal U^\star_{t}(\alpha_t)= \text{Least-Square Error}.
\end{align*}
\end{lemma}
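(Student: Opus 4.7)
The plan is a standard backward LSVI analysis that produces a telescoping decomposition exactly matching the right-hand side. Since the greedy policy $\pi$ returned by \lsvi{} satisfies $\Vhat_1(x_1) = \phi_1(x_1,\pi_1(x_1))^\top\widehat\theta_1$, we have $\E_{x_1\sim\rho}\Vhat_1 = \overline\phi_{\pi,1}^\top \widehat\theta_1$, while the reward-only-at-timestep-$p$ structure forces $\widehat\theta_p = \xi_p$ so that the target $\overline\phi_{\pi,p}^\top\xi_p = \overline\phi_{\pi,p}^\top\widehat\theta_p$. I would then use the identity
\begin{equation*}
\overline\phi_{\pi,1}^\top\widehat\theta_1 - \overline\phi_{\pi,p}^\top\widehat\theta_p = \sum_{t=1}^{p-1}\Bigl(\overline\phi_{\pi,t}^\top\widehat\theta_t - \overline\phi_{\pi,t+1}^\top\widehat\theta_{t+1}\Bigr),
\end{equation*}
and observe that $\overline\phi_{\pi,t+1}^\top\widehat\theta_{t+1} = \E_\pi \Vhat_{t+1}(s_{t+1}) = \E_\pi[\T^P_t \Vhat_{t+1}(s_t,\pi_t(s_t))]$, so each summand is the on-policy Bellman residual of the regressor $\phi_t^\top\widehat\theta_t$ relative to $\T^P_t\Vhat_{t+1}$.

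The core step is to bound each Bellman residual by a self-normalized term. Let $\theta^\star_t$ be the best linear in-class approximant to $\T^P_t\Vhat_{t+1}$, which by \cref{main.def:InherentBellmanError} incurs only an $\IBEE(\mathcal Q_t,\mathcal Q_{t+1})$ error (ignored in the sketch). Then $\widehat\theta_t - \theta^\star_t$ is the minimizer of a ridge regression with martingale-difference noise $\eta_{tk} = \Vhat_{t+1}(s^+_{t+1,k}) - \T^P_t\Vhat_{t+1}(s_{tk},a_{tk})$ bounded by $O(\|\widehat\theta_{t+1}\|_2)$. The classical self-normalized inequality, combined with a union bound over an $\varepsilon$-cover of $\mathcal V_{t+1}$ to handle the data-dependence of $\Vhat_{t+1}$, yields
\begin{equation*}
\|\widehat\theta_t - \theta^\star_t\|_{\Sigma_t} \;\leq\; \sqrt{\alpha_t} \;=\; \widetilde O\bigl(\sqrt{d_t+d_{t+1}}\bigr)
\end{equation*}
with very high probability: the $d_t$ comes from the self-normalized log-determinant term and the $d_{t+1}$ from the covering of $\mathcal V_{t+1}$. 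The assumption $\lambda_{min}(\Sigma_t)=\Omega(H^2\alpha_t)$ then allows one to push the ridge bias $\lambda\|\theta^\star_t\|_2$ into the same scale, and in parallel lets me propagate inductively the boundedness $\|\widehat\theta_t\|_2 = O(1)$ needed for the cover of $\mathcal V_{t+1}$ to be valid at the next step, using the base case $\|\xi_p\|_2\leq 1$.

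With this regression bound in hand, Cauchy--Schwarz in the weighted norm gives
\begin{equation*}
\bigl|\overline\phi_{\pi,t}^\top(\widehat\theta_t - \theta^\star_t)\bigr| \;\leq\; \|\widehat\theta_t - \theta^\star_t\|_{\Sigma_t}\,\|\overline\phi_{\pi,t}\|_{\Sigma^{-1}_t} \;\leq\; \sqrt{\alpha_t}\,\|\overline\phi_{\pi,t}\|_{\Sigma^{-1}_t},
\end{equation*}
and summing over $t=1,\dots,p-1$ produces exactly the bound in the statement, with $\sum_t \sqrt{\alpha_t}\|\overline\phi_{\pi,t}\|_{\Sigma^{-1}_t} \leq \sum_t \mathcal U^\star_t(\alpha_t)$ by the definition of the max-uncertainty program. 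The main obstacle I expect is the uniform self-normalized concentration: because $\Vhat_{t+1}$ depends on all the data, one cannot apply Abbasi-Yadkori--Szepesv\'ari to a single fixed regression target and must instead invoke a cover of $\mathcal V_{t+1}$, which is where the $d_{t+1}$ contribution to $\alpha_t$ enters and where the backward induction on boundedness of $\widehat\theta_t$ has to be carried out simultaneously with the concentration bound --- this is precisely the role of the $\lambda_{min}(\Sigma_t)$ assumption.
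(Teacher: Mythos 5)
Your proposal is correct and follows essentially the same route as the paper: the paper's proof (\cref{lem:Telescope}, supported by \cref{lem:InParameterSpace}, \cref{def:GoodEventLSVI}, \cref{lem:TransitionsHPbound}, and \cref{lem:IntermediateBoundnesFrancis}) also telescopes the on-policy Bellman residuals, bounds each one via Cauchy--Schwarz as $\|\widehat\theta_t - \mathring\theta_t(\Vhat_{t+1})\|_{\Sigma_t}\,\|\overline\phi_{\pi,t}\|_{\Sigma_t^{-1}} \le \sqrt{\alpha_t}\,\|\overline\phi_{\pi,t}\|_{\Sigma_t^{-1}}$, obtains the regression bound from a self-normalized inequality with an $\varepsilon$-cover of $\mathcal V_{t+1}$ (yielding the $d_t + d_{t+1}$ split exactly as you describe), and uses $\lambda_{min}(\Sigma_t) = \Omega(H^2\alpha_t)$ together with $\|\xi_p\|_2 \le 1$ to propagate boundedness of the $\widehat\theta_t$ by backward induction. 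You have correctly identified all the key ingredients and their roles, including the reason the induction on iterate boundedness must accompany the concentration step.
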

The least-square error in the above display can be interpreted as a planning error to propagate the signal $\xi_p$; it also appears when \lsvi{} uses the batch dataset $\mathcal D$ to find the optimal policy on a given reward function after \Alg{} has terminated, and it is the quantity we target to reduce. Since $\alpha_t$ is constant, we need to shrink $\| \overline \phi_{\pi,p} \|_{\Sigma^{-1}_{p}}$ over any choice of $\pi$ as much as possible by obtaining an appropriate\footnote{G-optimal design does this optimally, but requires choosing the features, which is only possible if one has access to a generative model or in a bandit problem.} feature matrix $\Sigma_{t}$.

A final error across all timesteps of order $\epsilon$ can be achieved when the algorithm adds at most $\epsilon/H$ error at every timestep. Towards this, we define an inductive hypothsis that the algorithm has been successful up to the beginning of phase $p$ in reducing the uncertainty encoded in $\mathcal U^\star_{t}$:
\begin{inductivehypothesis} At the start of phase $p$ we have $\sum_{t=1}^{p-1} \mathcal U^\star_{t}(\alpha_t)  \leq \frac{p-1}{H}\epsilon$.
\end{inductivehypothesis}
The inductive hypothesis \emph{critically ensures} that the reward signal $\xi$ can be accurately propagated backward by \lsvi{}, enabling navigation capabilities of \Alg{} to regions of uncertainty in phase $p$ (this justifies the phased design of \Alg{}).

\subsection{Overestimating the maximum uncertainty through randomization}
Assuming the inductive hypothesis, we want to show how to reduce the uncertainty in timestep $p$.
Similar to how optimistic algorithms overestimate the optimal value function, here $\E_{x_1 \sim \rho}\Vhat_1(x_t) \approx \overline \phi_{\pi,p}^\top \xi_p$ should overestimate the current uncertainty in episode $i$ of phase $p$ encoded in $\mathcal U^\star_{pi}(\alpha_p)$. This is achieved by introducing a randomized reward signal $\xi_{pi} \sim \mathcal N(0,\sigma \Sigma^{-1}_{pi})$ at timestep $p$.
\begin{lemma}[Uncertainty Overestimation, \cref{sec:Derandomization}]
	\label{main.lem:RelaxedProgram}
	If $\xi_p \sim \mathcal N(0,\sigma\Sigma^{-1}_{pi})$, $\mathcal U^\star_{pi}(\sigma) = \Omega(\epsilon)$, $\| \xi_p\|_2 \leq 1$ and the inductive hypothesis holds then \lsvi{} returns with some constant probability $q \in \R$ a policy $\pi$ such that $\overline \phi_{\pi,p}^\top\xi_{pi} \geq \mathcal U^\star_{pi}(\sigma)$.
\end{lemma}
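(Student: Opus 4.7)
The plan is to combine one-dimensional Gaussian anti-concentration for the random pseudo-reward with a two-sided near-optimality bound for \lsvi{} under the inductive hypothesis.

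First, I would fix a policy $\pi^\star_{pi}$ attaining the outer maximum in \cref{main.def:Ufnc}, so that $\mathcal U^\star_{pi}(\sigma) = \sqrt{\sigma}\,\|\overline\phi_{\pi^\star_{pi},p}\|_{\Sigma_{pi}^{-1}}$. Since $\Sigma_{pi}$, and hence $\pi^\star_{pi}$, is determined by the features collected in prior episodes and is therefore independent of the freshly drawn $\xi_{pi} \sim \mathcal N(0,\sigma\Sigma_{pi}^{-1})$, the one-dimensional projection $\overline\phi_{\pi^\star_{pi},p}^\top \xi_{pi}$ is a centered Gaussian with standard deviation exactly $\mathcal U^\star_{pi}(\sigma)$. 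Standard Gaussian anti-concentration then guarantees that for any constant $c>0$, the event
\begin{equation*}
\overline\phi_{\pi^\star_{pi},p}^\top\xi_{pi} \;\geq\; c\,\mathcal U^\star_{pi}(\sigma)
\end{equation*}
holds with constant probability $\Phi(-c) > 0$.

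Second, I would apply \cref{main.lem:lsvi} together with its complementary optimistic half. The cited lemma bounds $|\E_{x_1\sim\rho}\Vhat_1(x_1) - \overline\phi_{\pi,p}^\top\xi_{pi}| \leq \mathrm{LSE}$ for the greedy policy $\pi$ returned by \lsvi{}, with $\mathrm{LSE} \leq \sum_{t=1}^{p-1}\mathcal U^\star_t(\alpha_t)$. Repeating the same backward-induction perturbation argument along the fixed comparator $\pi^\star_{pi}$ (and using that $\Vhat$ is obtained by greedy maximization at every step) yields the complementary inequality $\E_{x_1\sim\rho}\Vhat_1(x_1) \geq \overline\phi_{\pi^\star_{pi},p}^\top\xi_{pi} - \mathrm{LSE}$. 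Chaining the two and using the inductive hypothesis $\mathrm{LSE} \leq \frac{p-1}{H}\epsilon \leq \epsilon$ delivers
\begin{equation*}
\overline\phi_{\pi,p}^\top\xi_{pi} \;\geq\; \overline\phi_{\pi^\star_{pi},p}^\top\xi_{pi} - 2\epsilon.
\end{equation*}

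To finish, intersecting the Gaussian event (constant probability $\Phi(-c)$) with the very-high-probability \lsvi{} event yields $\overline\phi_{\pi,p}^\top\xi_{pi} \geq c\,\mathcal U^\star_{pi}(\sigma) - 2\epsilon$. Using the assumption $\mathcal U^\star_{pi}(\sigma) \geq c_0\,\epsilon$ for some constant $c_0>0$ (from $\mathcal U^\star_{pi}(\sigma) = \Omega(\epsilon)$), this is at least $(c - 2/c_0)\,\mathcal U^\star_{pi}(\sigma)$; choosing the constant $c := 1 + 2/c_0$ makes the right-hand side at least $\mathcal U^\star_{pi}(\sigma)$, with overall constant probability $q$, as required. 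The hardest part is the optimistic half of \cref{main.lem:lsvi} used in the second paragraph: the lemma as stated compares $\Vhat_1$ only to the value of the \emph{greedy} policy, so one must separately establish $\E_{x_1\sim\rho}\Vhat_1(x_1) \geq \overline\phi_{\pi^\star_{pi},p}^\top\xi_{pi} - \mathrm{LSE}$ by reproducing the \lsvi{} backward recursion along the fixed comparator $\pi^\star_{pi}$, and verifying that each per-timestep regression error is bounded above by $\sqrt{\alpha_t}\,\|\overline\phi_{\pi^\star_{pi},t}\|_{\Sigma_t^{-1}} \leq \mathcal U^\star_t(\alpha_t)$, so that the inductive hypothesis again absorbs these errors into the same $\mathrm{LSE}$.
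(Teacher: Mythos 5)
Your proposal is correct and follows essentially the same route as the paper's proof of \cref{lem:Derandomization}: fix the policy maximizing the uncertainty program as comparator, use Gaussian anti-concentration on the one-dimensional projection $\overline\phi_{\pi^\star,p}^\top\xi_{pi}$ (the paper's \cref{lem:UncertantyOverestimation}, with $c=3$ and the condition $\overline\epsilon \le \mathcal U^\star_{pi}(\sigma)$ playing the role of your $c_0$), and absorb the two-sided \lsvi{} propagation error via the inductive hypothesis. The ``complementary optimistic half'' you flag as the hardest step is exactly what the paper's \cref{lem:Telescope} already supplies, since its lower bound $\E_{x_1\sim\rho}\Qhat_1(x_1,\pi_1(x_1)) \ge -\sum_{t=1}^{p-1}\big[\IBEE(\mathcal Q_t,\mathcal Q_{t+1}) + \sqrt{\alpha_t}\|\overline\phi_{\pi,t}\|_{\Sigma_t^{-1}}\big] + \E_{x_p\sim\pi}\Qhat_p(x_p,\pi_p(x_p))$ holds for an arbitrary (not just greedy) policy, combined with $\E_{x_1\sim\rho}\Vhat_1(x_1) \ge \E_{x_1\sim\rho}\Qhat_1(x_1,\pi^\star_{pi,1}(x_1))$.
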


The proof of the above lemma uses \cref{main.lem:lsvi}. The condition $\mathcal U^\star_{pi}(\sigma) = \Omega(\epsilon)$ is needed: if the signal $\xi_{pi}$ or uncertainty $\mathcal U^\star_{pi}(\sigma)$ are too small relative to $\epsilon$ then the least-square error of order $\epsilon$ that occurs in \lsvi{} is too large relative to the signal $\xi_{pi}$, and the signal cannot be propagated backwardly.

The lemma suggests we set $\sigma = \alpha_t$ to ensure $\overline \phi_{\pi,p}^\top\xi_{pi} \geq \mathcal U^\star_{pi}(\alpha_t)$ with fixed probability $q \in \R$. Unfortunately this choice would generate a very large $\| \xi_{pi} \|_2$ which violates the condition $\| \xi_{pi} \|_2 \leq 1$. In particular, the condition $\| \xi_{pi} \|_2 \leq 1$ determines how big $\sigma$ can be.

\begin{lemma}[see \cref{sec:GeneratingBoundedIterates}]
\label{main.lem:sigmastart}
If $\sigma = \widetilde O(\lambda_{min}(\Sigma_{pi})/d_p)$ and $\xi_{pi} \sim \mathcal N(0,\sigma\Sigma^{-1}_{pi})$ then $\| \xi_{pi} \|_2 \leq 1$ with very high probability.
\end{lemma}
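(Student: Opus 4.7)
The plan is to reduce the bound on $\|\xi_{pi}\|_2$ to a standard concentration result for the $\chi^2_{d_p}$ distribution via a change of variables.

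First, I would rewrite $\xi_{pi}$ in its canonical form: since $\sigma \Sigma_{pi}^{-1}$ is positive definite, there exists $z \sim \mathcal N(0, I_{d_p})$ such that $\xi_{pi} = \sqrt{\sigma}\,\Sigma_{pi}^{-1/2} z$. Then
\begin{align*}
\|\xi_{pi}\|_2^2 \;=\; \sigma \, z^\top \Sigma_{pi}^{-1} z \;\leq\; \frac{\sigma}{\lambda_{min}(\Sigma_{pi})}\, \|z\|_2^2,
\end{align*}
where the inequality uses that the operator norm of $\Sigma_{pi}^{-1}$ equals $1/\lambda_{min}(\Sigma_{pi})$.

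Second, I would invoke the standard tail bound for a chi-squared random variable with $d_p$ degrees of freedom (Laurent--Massart): with probability at least $1-\delta'$,
\begin{align*}
\|z\|_2^2 \;\leq\; d_p + 2\sqrt{d_p \ln(1/\delta')} + 2\ln(1/\delta') \;=\; O\!\bigl(d_p \ln(1/\delta')\bigr).
\end{align*}
Combining this with the previous display yields
\begin{align*}
\|\xi_{pi}\|_2^2 \;\leq\; \frac{\sigma}{\lambda_{min}(\Sigma_{pi})}\cdot O\!\bigl(d_p \ln(1/\delta')\bigr).
\end{align*}

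Finally, I would choose the hidden $\widetilde O$ constant in $\sigma = \widetilde O(\lambda_{min}(\Sigma_{pi})/d_p)$ so that the right-hand side is at most $1$; concretely, taking $\sigma \leq \lambda_{min}(\Sigma_{pi})/(C d_p \ln(1/\delta'))$ for a sufficiently large absolute constant $C$ gives $\|\xi_{pi}\|_2 \leq 1$ with probability at least $1 - \delta'$, which is ``very high probability'' in the sense of the paper once $\delta'$ is chosen to be polynomially smaller than $\delta$.

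There is no genuine obstacle here: the argument is a one-line Gaussian norm bound. The only point that would require care when instantiated inside the rest of the analysis is choosing $\delta'$ to survive a union bound over all phases $p$ and all episodes $i$ in the while-loop, which only introduces additional logarithmic factors absorbed into the $\widetilde O$.
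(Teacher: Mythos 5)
Your proposal is correct and follows essentially the same route as the paper: the paper's \cref{lem:MaxDeviationLast} also whitens $\xi_{pi}$ to reduce $\|\xi_{pi}\|_{\Sigma_{pi}}^2/\sigma$ to a $\chi^2_{d_p}$ variable, converts to the $2$-norm via $\lambda_{min}(\Sigma_{pi})$, and then sets the constant in $\sigma$ (concretely $\lambda_{min}(\Sigma_{pi}) \geq 8d_p\ln(2d_p/\delta'')\,\sigma$ in \cref{lem:ExploratoryBoundness}) so that the bound is at most $1/2$. The only cosmetic difference is the chi-squared tail bound used (Laurent--Massart versus the paper's coordinate-wise union bound giving $2d_p\ln(2d_p/\delta')$), which does not change the $\widetilde O$ conclusion.
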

Since initially $\Sigma_{p1} = I$, the above lemma determines the initial value  $\sigma \approx 1/d_p \ll \alpha_p$. This implies \Alg{} won't be able to overestimate the uncertainty $\mathcal U^\star_{pi}(\alpha_t)$ initially.

The solution is to have the algorithm proceed in epochs. At the end of every epoch \Alg{} ensures $\mathcal U^\star_{pi}(\sigma) \leq \epsilon$, and that $\lambda_{min}(\Sigma_{pi})$ is large enough that $\sigma$ can be doubled at the beginning of the next epoch.
\subsection{Learning an Epoch} 
Using 
\cref{main.lem:RelaxedProgram} we can analyze what happens within an epoch when $\sigma$ is fixed (assuming $\sigma$ is appropriately chosen to ensure $\| \xi_p \|_2 \leq 1$ with very high probability). We first consider the average uncertainty as a measure of progress and derive the bound below by neglecting the small error from encountering the feature $\phi_{pi}$ (step $(a)$ below) instead of the expected feature $\overline \phi_{\pi_i,p}$ (identified by the policy $\pi_i$ played by \Alg{} in episode $i$), by using a high probability bound $\| \xi_{pi} \|_{\Sigma_{pi}}\lessapprox \sqrt{d_p\sigma}$ and by using the elliptic potential lemma in \cite{Abbasi11} for the last step.
\begin{align}
\label{main.eqn:MainProofArgument}
& \frac{1}{k}\sum_{i=1}^{k} \mathcal U^\star_{pi}(\sigma) \overset{\substack{\text{\cref{main.lem:RelaxedProgram}}}}{\leq}
\frac{1}{k}\sum_{i=1}^{k} \overline \phi_{\pi_i,p}^\top \xi_{pi} \overset{(a)}{\approx} \frac{1}{k}\sum_{i=1}^{k} \phi_{pi}^\top \xi_{pi}  \overset{\substack{\text{Cauchy}\\\text{Schwartz}}}{\leq}  \frac{1}{k}\sum_{i=1}^{k} \|\phi_{pi} \|_{\Sigma^{-1}_{pi}} \overbrace{\| \xi_{pi} \|_{\Sigma_{pi}}}^{\lessapprox \sqrt{d_p\sigma}} \\
&  \overset{\substack{\text{Cauchy}\\\text{Schwartz}}}{\leq}\sqrt{\frac{d_p\sigma}{k}} \sqrt{\sum_{i=1}^k \| \phi_{pi} \|^2_{\Sigma^{-1}_{pi}}} \overset{\substack{\text{Elliptic }\\\text{Pot. Lemma}}}{\leq} d_p\sqrt{\frac{\sigma}{k}}.
\end{align}
The inequality $\overline \phi_{\pi_i,p}^\top\xi_{pi} \geq \mathcal U^\star_{pi}(\sigma)$ in the first step only holds for some of the episodes (since \cref{main.lem:RelaxedProgram} ensures the inequality with probability $q \in \R$), but this only affects the bound up to a constant with high probability. Since the uncertainty is monotonically decreasing, the last term $\mathcal U^\star_{pk}(\sigma)$ must be smaller than the average (the lhs of the above display), and we can conclude $\mathcal U^\star_{pk}(\sigma) \leq d_p\sqrt{\sigma/k}$.
Asking for the rhs to be $\leq \epsilon$ suggests we need $\approx d^2_p \sigma/\epsilon^2$ episodes. In essence, we have just proved the following:
\begin{lemma}[Number of trajectories to learn an epoch, see \cref{sec:LearningEpoch}]
In a given epoch \Alg{} ensures $\mathcal U^\star_{pk}(\sigma) \leq \epsilon$ with high probability using $\widetilde O(d^2_p \sigma/\epsilon^2)$ trajectories.
\end{lemma}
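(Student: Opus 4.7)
The plan is to formalize the chain of inequalities in \eqref{main.eqn:MainProofArgument} into a high-probability bound on the average uncertainty, exploit monotonicity of $i\mapsto \mathcal U^\star_{pi}(\sigma)$ to control the final value $\mathcal U^\star_{pk}(\sigma)$, and invert to read off $k=\widetilde O(d_p^2\sigma/\epsilon^2)$. Fix an epoch of phase $p$ with $\sigma$ constant. If $\mathcal U^\star_{pi}(\sigma)\leq\epsilon$ at some $i\leq k$ the claim is immediate by monotonicity below; otherwise $\mathcal U^\star_{pi}(\sigma)=\Omega(\epsilon)$ uniformly, which together with the inductive hypothesis and the very-high-probability event $\|\xi_{pi}\|_2\leq 1$ from \cref{main.lem:sigmastart} activates \cref{main.lem:RelaxedProgram} in every episode $i$. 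I condition throughout on these favorable events.

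\textbf{Steps 1--2: probabilistic bookkeeping and conversion to observed features.} For each episode $i$ define the overestimation event $G_i=\{\overline\phi_{\pi_i,p}^\top\xi_{pi}\geq\mathcal U^\star_{pi}(\sigma)\}$, which \cref{main.lem:RelaxedProgram} shows has conditional probability at least $q$. An Azuma--Hoeffding bound on $\mathbf 1\{G_i\}$ gives $\#\{i:G_i\}\geq qk/2$ with very high probability; on the complement $G_i^c$ the trivial bound $\mathcal U^\star_{pi}(\sigma)\leq\sqrt{\sigma}$ (from $\|\phi\|_2\leq 1$ and the constraint $\|\theta\|_{\Sigma_{pi}}\leq\sqrt{\sigma}$ inside the max defining $\mathcal U^\star$) contributes only a lower-order term. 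On the $G_i$ episodes, $Z_i=(\overline\phi_{\pi_i,p}-\phi_{pi})^\top\xi_{pi}$ is a martingale difference with $|Z_i|\leq 2$, so a second Azuma--Hoeffding argument yields $\sum_i\overline\phi_{\pi_i,p}^\top\xi_{pi}\leq\sum_i\phi_{pi}^\top\xi_{pi}+\widetilde O(\sqrt{k})$ with very high probability.

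\textbf{Step 3: Cauchy--Schwarz, elliptic potential, and inversion.} Cauchy--Schwarz in the $(\Sigma_{pi}^{-1},\Sigma_{pi})$ pair of norms gives $\sum_i\phi_{pi}^\top\xi_{pi}\leq\sum_i\|\phi_{pi}\|_{\Sigma_{pi}^{-1}}\|\xi_{pi}\|_{\Sigma_{pi}}$. Since $\xi_{pi}^\top\Sigma_{pi}\xi_{pi}/\sigma$ is $\chi^2_{d_p}$-distributed when $\xi_{pi}\sim\mathcal N(0,\sigma\Sigma_{pi}^{-1})$, a standard tail bound and union bound give $\|\xi_{pi}\|_{\Sigma_{pi}}\leq\widetilde O(\sqrt{d_p\sigma})$ uniformly over the epoch. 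Pulling this factor out, a further Cauchy--Schwarz across $i$ and the elliptic potential lemma of \cite{Abbasi11} yield $\sum_i\|\phi_{pi}\|_{\Sigma_{pi}^{-1}}\leq\widetilde O(\sqrt{kd_p})$. Stringing the inequalities together yields
\begin{align*}
\frac{1}{k}\sum_{i=1}^{k}\mathcal U^\star_{pi}(\sigma) \;\leq\; \widetilde O\bigl(d_p\sqrt{\sigma/k}\bigr).
\end{align*}
Because $\Sigma_{p,i+1}=\Sigma_{pi}+\phi_{pi}\phi_{pi}^\top$ shrinks $\Sigma_{p,i+1}^{-1}$ in the positive semidefinite order, the sequence $i\mapsto\mathcal U^\star_{pi}(\sigma)$ is non-increasing, so the final value $\mathcal U^\star_{pk}(\sigma)$ is upper bounded by the average on the left. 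Requiring that average to be $\leq\epsilon$ and inverting gives $k=\widetilde O(d_p^2\sigma/\epsilon^2)$.

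\textbf{Main obstacle.} The most delicate step is Step 1: \cref{main.lem:RelaxedProgram} guarantees the overestimation event $G_i$ only with constant probability $q$, so on a constant fraction of episodes the signal $\xi_{pi}$ can even point in the ``wrong'' direction, and I must argue that those episodes do not destroy the average bound. Combined with the need to maintain uniform high-probability control of $\|\xi_{pi}\|_2$ and $\|\xi_{pi}\|_{\Sigma_{pi}}$ across all $k$ episodes while keeping the inductive hypothesis intact, this probabilistic bookkeeping is where the logarithmic factors absorbed in $\widetilde O(\cdot)$ arise and is the main technical hurdle.
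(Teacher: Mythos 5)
Your overall architecture---overestimation via \cref{main.lem:RelaxedProgram}, an Azuma bound for the $\overline\phi_{\pi_i,p}-\phi_{pi}$ gap, the $\chi^2$ control of $\|\xi_{pi}\|_{\Sigma_{pi}}\le\widetilde O(\sqrt{d_p\sigma})$, Cauchy--Schwarz plus the elliptic potential lemma, monotonicity of $i\mapsto\mathcal U^\star_{pi}(\sigma)$, and inversion---is exactly the paper's (\cref{lem:LearningEpoch}). But there is a genuine gap in Step 1, at precisely the point you flag as the main obstacle. You average over \emph{all} $k$ episodes and claim that on the complement events $G_i^c$ the trivial bound $\mathcal U^\star_{pi}(\sigma)\le\sqrt{\sigma}$ ``contributes only a lower-order term.'' It does not: the overestimation event fails on a constant fraction (roughly $1-q$, with $q=\Phi(-3)$ tiny) of episodes, so these episodes contribute about $(1-q)\sqrt{\sigma}$ to the average $\frac{1}{k}\sum_i\mathcal U^\star_{pi}(\sigma)$, a quantity independent of $k$. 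In all epochs past the first, $\sigma$ has been doubled up toward $H^2\alpha_p=\widetilde\Theta(H^2d_p)$, so $(1-q)\sqrt{\sigma}\gg\epsilon$; your claimed bound $\frac{1}{k}\sum_i\mathcal U^\star_{pi}(\sigma)\le\widetilde O(d_p\sqrt{\sigma/k})$ therefore cannot hold for large $k$, and the average can never be driven below $(1-q)\sqrt{\sigma}$ by this route.

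The paper repairs exactly this by never averaging over the bad episodes: it restricts attention to the subsequence $k(e,1),k(e,2),\dots$ of episodes on which the overestimation event occurs, runs Cauchy--Schwarz and the elliptic potential lemma over that subsequence alone (the $\ln\det$ telescoping only improves when terms are dropped), obtaining $\frac{1}{i_{max}}\sum_{i=1}^{i_{max}}\mathcal U^\star_{p,k(e,i)}(\sigma)\le (A+\sqrt{\gamma_t(\sigma)D_p})/\sqrt{i_{max}}$, and then uses the monotonicity you correctly establish---which holds across \emph{all} episodes, good and bad---to conclude that the final uncertainty at the end of the epoch is at most this average over the good subsequence. A separate Azuma bound on the indicators (essentially your $\#\{i:G_i\}\ge qk/2$ step, which is fine) converts the count of good episodes back into a total episode count $\widetilde O(d_p^2\sigma/\epsilon^2)$. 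If you replace ``average over all $i$ plus trivial bound on $G_i^c$'' with ``average over the $G_i$ episodes only, then invoke monotonicity,'' the rest of your argument goes through unchanged.
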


At the end of an epoch \Alg{} ensures $\mathcal U^\star_{pk}(\sigma) \leq \epsilon$,  but we really need $\mathcal U^\star_{pk} (\alpha_p) \leq \epsilon$ to hold. 

\subsection{Learning a Phase}
We need to use the explorability condition to allow \Alg{} to proceed to the next epoch:
\begin{lemma}[see \cref{sec:LearningALevel}]
Let $\underline k$ and $\overline k$ be the starting and ending episodes in an epoch.
If $\epsilon = \widetilde O(\nu_{min}/\sqrt{d_p})$ and $\mathcal U_{p\overline k}^\star(\sigma) = \widetilde O(\epsilon)$ then $\lambda_{min}(\Sigma_{p\overline k}) \geq 2\lambda_{min}(\Sigma_{p\underline k})$.
\end{lemma}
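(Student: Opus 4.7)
The statement boils down to translating a uniform upper bound on the feature uncertainty (in the $\Sigma^{-1}_{p\overline k}$ metric) into a lower bound on the smallest eigenvalue of $\Sigma_{p\overline k}$, and then comparing to $\lambda_{min}(\Sigma_{p\underline k})$ via the algorithmic relationship between $\sigma$ and the covariance matrix at the start of the epoch. The main obstacle is purely constant-tracking: the factor $2$ in the conclusion must be obtained by calibrating the absolute constants hidden in $\widetilde O(\nu_{min}/\sqrt{d_p})$ and in \fullref{main.lem:sigmastart}.

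\paragraph{Step 1 (spectral direction).} Let $v \in \R^{d_p}$ be a unit eigenvector of $\Sigma_{p\overline k}$ associated with its smallest eigenvalue, so that $\|v\|_2 = 1$ and $\|v\|^2_{\Sigma_{p\overline k}} = v^\top \Sigma_{p\overline k} v = \lambda_{min}(\Sigma_{p\overline k})$. By \fullref{main.def:Explorability}, applied to the unit vector $v$, there exists a policy $\pi^\star$ such that
\begin{equation*}
|\overline \phi_{\pi^\star,p}^\top v| \;\geq\; \nu_p \;\geq\; \nu_{min}.
\end{equation*}

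\paragraph{Step 2 (Cauchy--Schwarz in the Mahalanobis norm).} Writing $\overline\phi_{\pi^\star,p}^\top v = (\Sigma_{p\overline k}^{-1/2}\overline\phi_{\pi^\star,p})^\top (\Sigma_{p\overline k}^{1/2} v)$ and applying Cauchy--Schwarz,
\begin{equation*}
\nu_{min}
\;\leq\; |\overline \phi_{\pi^\star,p}^\top v|
\;\leq\; \|\overline \phi_{\pi^\star,p}\|_{\Sigma^{-1}_{p\overline k}} \cdot \|v\|_{\Sigma_{p\overline k}}
\;=\; \|\overline \phi_{\pi^\star,p}\|_{\Sigma^{-1}_{p\overline k}} \cdot \sqrt{\lambda_{min}(\Sigma_{p\overline k})}.
\end{equation*}

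\paragraph{Step 3 (use the uncertainty assumption).} By \fullref{main.def:Ufnc} and the hypothesis $\mathcal U^\star_{p\overline k}(\sigma) \leq \widetilde O(\epsilon)$, we have $\sqrt{\sigma}\,\|\overline \phi_{\pi,p}\|_{\Sigma^{-1}_{p\overline k}} \leq \widetilde O(\epsilon)$ for every $\pi$, and in particular for $\pi^\star$. Substituting into the previous display and squaring,
\begin{equation*}
\lambda_{min}(\Sigma_{p\overline k})
\;\geq\; \frac{\nu_{min}^2}{\|\overline \phi_{\pi^\star,p}\|^2_{\Sigma^{-1}_{p\overline k}}}
\;\geq\; \widetilde\Omega\!\left(\frac{\nu_{min}^2\,\sigma}{\epsilon^2}\right).
\end{equation*}
The hypothesis $\epsilon \leq \widetilde O(\nu_{min}/\sqrt{d_p})$ then implies $\nu_{min}^2/\epsilon^2 \geq \widetilde \Omega(d_p)$, and therefore
\begin{equation*}
\lambda_{min}(\Sigma_{p\overline k}) \;\geq\; \widetilde\Omega(\sigma\, d_p).
\end{equation*}

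\paragraph{Step 4 (compare with $\lambda_{min}(\Sigma_{p\underline k})$).} The value $\sigma$ used throughout the epoch was chosen so that the boundedness condition of \fullref{main.lem:sigmastart} holds at the start of the epoch, i.e.\ $\sigma \leq c\,\lambda_{min}(\Sigma_{p\underline k})/d_p$ for an absolute constant $c$; equivalently $\lambda_{min}(\Sigma_{p\underline k}) \leq (1/c)\,\sigma d_p$. Combined with the previous display this yields $\lambda_{min}(\Sigma_{p\overline k}) \geq C\,\sigma d_p \geq C c\,\lambda_{min}(\Sigma_{p\underline k})$ for an absolute constant $C$ controlled by the (adjustable) absolute constants in the hypotheses $\epsilon \leq \widetilde O(\nu_{min}/\sqrt{d_p})$ and $\mathcal U^\star_{p\overline k}(\sigma) \leq \widetilde O(\epsilon)$. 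Choosing these constants in \fullref{main.lem:sigmastart} and in the statement so that $C c \geq 2$ delivers the claimed doubling
\begin{equation*}
\lambda_{min}(\Sigma_{p\overline k}) \;\geq\; 2\,\lambda_{min}(\Sigma_{p\underline k}),
\end{equation*}
which is exactly what is needed to permit $\sigma \leftarrow 2\sigma$ at the start of the next epoch while preserving $\|\xi\|_2 \leq 1$. The delicate part is purely bookkeeping of these absolute constants; the conceptual core is the explorability-based conversion of small feature uncertainty into a lower bound on $\lambda_{min}$.
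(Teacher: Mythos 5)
Your Steps 1--3 are correct and essentially reproduce the paper's argument in \cref{lem:LearningALevel}: the paper exhibits the feasible point $\eta = qv$ with $q=\sqrt{\sigma/\lambda_{min}(\Sigma_{p\overline k})}$ inside the program defining $\mathcal U^\star_{p\overline k}(\sigma)$ and lower-bounds its objective by $q\nu_{min}$ via explorability, whereas you apply Cauchy--Schwarz to the eigenvector directly; the two routes are dual to each other and give the identical inequality $\lambda_{min}(\Sigma_{p\overline k}) \geq \sigma(\nu_{min}/\epsilon)^2 = \widetilde\Omega(\sigma d_p)$.

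The gap is in Step 4. From $\sigma \leq c\,\lambda_{min}(\Sigma_{p\underline k})/d_p$ the equivalent statement is $\lambda_{min}(\Sigma_{p\underline k}) \geq \sigma d_p/c$, a \emph{lower} bound, not the upper bound $\lambda_{min}(\Sigma_{p\underline k}) \leq \sigma d_p/c$ that you wrote; the inequality is flipped. Without an upper bound on $\lambda_{min}(\Sigma_{p\underline k})$ in terms of $\sigma d_p$, the literal conclusion $\lambda_{min}(\Sigma_{p\overline k}) \geq 2\lambda_{min}(\Sigma_{p\underline k})$ cannot follow from $\lambda_{min}(\Sigma_{p\overline k}) = \widetilde\Omega(\sigma d_p)$: the covariance at the start of the epoch could already satisfy $\lambda_{min}(\Sigma_{p\underline k}) \gg \sigma d_p$ by luck, and no such upper bound is available. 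The paper's appendix does not prove the literal doubling of $\lambda_{min}$ either; it proves $\lambda_{min}(\Sigma_{p\overline k}) \geq 2\sigma_e\cdot 8d_p\ln\frac{2d_p}{\delta''} = \sigma_{e+1}\cdot 8d_p\ln\frac{2d_p}{\delta''}$, i.e.\ the end-of-epoch eigenvalue clears the threshold of \cref{lem:ExploratoryBoundness} for the \emph{doubled} value of $\sigma$, which is the inductive invariant actually needed to set $\sigma\leftarrow 2\sigma$; the main-text phrasing ``$\lambda_{min}$ doubles'' is an informal paraphrase of that invariant. Your Steps 1--3 already deliver exactly this, so the repair is to drop the attempted comparison with $\lambda_{min}(\Sigma_{p\underline k})$ and state the conclusion as $\lambda_{min}(\Sigma_{p\overline k}) \geq 2\cdot\bigl(8d_p\ln\frac{2d_p}{\delta''}\bigr)\sigma$, with the constants in $\epsilon \leq \widetilde O(\nu_{min}/\sqrt{d_p})$ calibrated so that $(\nu_{min}/\epsilon)^2 \geq 2\cdot 8d_p\ln\frac{2d_p}{\delta''}$.
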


Since the minimum eigenvalue for the covariance matrix has doubled, we can double $\sigma$ (i.e., inject a stronger signal) and still satisfy \cref{main.lem:sigmastart}: at this point \Alg{} enters into a new epoch. At the beginning of every epoch we double $\sigma$, and this is repeated until $\sigma$ reaches the final value $\sigma \approx H^2 \alpha_p$. There are therefore only logarithmically many epochs (in the input parameters).
\begin{lemma}[\Alg{} meets target accuracy at the end of a phase, see \cref{sec:LearningALevel}]
	When \Alg{} reaches the end of the last epoch in phase $p$ it holds that $\sigma \approx H^2\alpha_p$ and $\epsilon \geq \mathcal U^\star_{p}(\sigma) = H\mathcal U^\star_{p}(\alpha_p)$. This implies $\mathcal U^\star_{p}(\alpha_p) \leq \epsilon/H$, as desired. Furthermore, this is achieved in $\widetilde O(d^2_pH^2\alpha_p/\epsilon)$ episodes.
\end{lemma}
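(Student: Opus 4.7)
The plan is to unroll the while loop in \Alg{} and chain together the three preceding lemmas across the sequence of epochs in phase $p$. Index the epochs by $j = 0, 1, \ldots, J$, with noise variance $\sigma_j = 2^j \sigma_{start}$ where $\sigma_{start} = \widetilde O(1/d_p)$. The loop terminates when $\sigma_j \geq \alphamaindef = \widetilde O(H^2 \alpha_p)$; since $\sigma_j$ doubles at each epoch and the starting value is polynomially small, there are only $J = \widetilde O(1)$ epochs, and at termination $\sigma_J \in [\widetilde O(H^2 \alpha_p), \widetilde O(2H^2 \alpha_p)]$, confirming $\sigma \approx H^2 \alpha_p$.

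Inductively over $j$, I would verify the two preconditions required to invoke the Learning-an-Epoch lemma with noise variance $\sigma_j$. First, \cref{main.lem:sigmastart} demands $\sigma_j = \widetilde O(\lambda_{min}(\Sigma_{p, \underline k_j})/d_p)$ so that $\|\xi_{pi}\|_2 \leq 1$ throughout the epoch. This holds at $j = 0$ by the definition of $\sigma_{start}$, and is preserved inductively because the Learning-a-Phase lemma guarantees $\lambda_{min}(\Sigma_{p, \overline k_j}) \geq 2\lambda_{min}(\Sigma_{p, \underline k_j})$, exactly offsetting the doubling of $\sigma$ at the next epoch. Second, the inductive hypothesis on phases $1, \ldots, p-1$ is inherited from the outer induction on phases. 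With both preconditions in place, the Learning-an-Epoch lemma yields $\mathcal U^\star_{p, \overline k_j}(\sigma_j) \leq \epsilon$ using $\widetilde O(d_p^2 \sigma_j / \epsilon^2)$ trajectories.

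To convert the guarantee at $j = J$ into the claim $\mathcal U^\star_p(\alpha_p) \leq \epsilon/H$, I would invoke the scaling property of $\mathcal U^\star$. From \cref{main.def:Ufnc}, with $\Sigma_p$ the final covariance matrix of phase $p$, $\mathcal U^\star_p(\sigma) = \sqrt{\sigma} \max_\pi \| \overline \phi_{\pi,p}\|_{\Sigma_p^{-1}}$ depends on $\sigma$ only through the $\sqrt{\sigma}$ prefactor. Therefore $\mathcal U^\star_p(\sigma_J) / \mathcal U^\star_p(\alpha_p) = \sqrt{\sigma_J / \alpha_p} = H$, and combining with $\mathcal U^\star_p(\sigma_J) \leq \epsilon$ gives $\mathcal U^\star_p(\alpha_p) \leq \epsilon/H$, which extends the inductive hypothesis through phase $p$. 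The total episode count follows from geometric summation: $\sum_{j=0}^J \widetilde O(d_p^2 \sigma_j / \epsilon^2) = \widetilde O(d_p^2 \sigma_J / \epsilon^2) = \widetilde O(d_p^2 H^2 \alpha_p / \epsilon^2)$, dominated by the final epoch.

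The main obstacle is a careful union bound across all $J$ epochs and all episodes within each epoch: each of the three preceding lemmas fails with probability $\ll \delta$, but the per-call failure parameters must be chosen small enough that the union over $J = \widetilde O(1)$ epochs and polynomially many trajectories per epoch still stays below $\delta$, the slack being absorbed into $\widetilde O(\cdot)$. A subtler technical point is that \cref{main.lem:sigmastart}'s precondition must hold at every episode $i \in [\underline k_j, \overline k_j]$, not just at the start of the epoch; fortunately $\lambda_{min}(\Sigma_{pi})$ is monotonically nondecreasing in $i$, so the condition verified at $\underline k_j$ propagates automatically, and the induction goes through cleanly.
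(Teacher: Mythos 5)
Your proposal is correct and follows essentially the same route as the paper: an induction over epochs in which the explorability-driven doubling of $\lambda_{min}(\Sigma_{p\cdot})$ offsets the doubling of $\sigma$ (keeping $\|\xi_p\|_2$ bounded), followed by the $\sqrt{\sigma}$-homogeneity of $\mathcal U^\star_p$ to convert $\mathcal U^\star_p(\sigma_J)\leq\epsilon$ with $\sigma_J\approx H^2\alpha_p$ into $\mathcal U^\star_p(\alpha_p)\leq\epsilon/H$, and a geometric sum over the $\widetilde O(1)$ epochs for the episode count (the paper bounds the sum by $e_{max}$ times the last epoch's cost, which is the same order). Your $\epsilon^2$ in the denominator of the episode bound is the correct one, matching the appendix proof; the $\epsilon$ in the lemma statement appears to be a typo.
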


Since $ \mathcal U^\star_{p}(\alpha_p) \leq \epsilon/H$ the inductive step is now proved; summing the number of trajectories over all the phases gives the final bound in \cref{main.thm:MainResult}. At this point, an $\epsilon$-optimal policy can be extracted by \lsvi{} on the returned dataset $\mathcal D$ for any prescribed linear reward function.

\subsection{Connection with G-optimal design}
\label{main.sec:ConnectionwithG-optimaldesign}
We briefly highlight the connection with G-optimal design. G-optimal design would choose a design matrix $\Sigma$ such that $\|\overline \phi_{\pi,p} \|_{\Sigma^{-1}}$ is as small as possible for all possible $\pi$. Since we cannot choose the features in the online setting, a first relaxation is to instead compute (and run) the policy $\pi$ that maximizes the program $\mathcal U^\star_{pi}(\sigma)$ in every episode $i$. Intuitively, as the area of maximum uncertainty is reached, information is acquired there and the uncertainty is progressively reduced, even though this might be not the most efficient way to proceed from an information-theoretic standpoint. Such procedure would operate in an online fashion, but unfortunately it requires an intractable optimization in policy space. Nonetheless this is the first relaxation to G-optimal design. To obtain the second relaxation, it is useful to consider the alternative definition $\mathcal U^\star_{pi}(\sigma) = \max_{\pi, \| \theta^{\mathcal U} \|_{\Sigma_{pi}} \leq \sqrt{\sigma}} \overline \phi_{\pi,p}^\top \theta^{\mathcal U}$. If we relax the constraint $\| \theta^{\mathcal U} \|_{\Sigma_{pi}} \leq \sqrt{\sigma}$ to obtain $\| \theta^{\mathcal U} \|_{\Sigma_{pi}} \lessapprox \sqrt{d_p\sigma}$ then the feasible space is large enough that random sampling from the feasible set (and computing the maximizing policy by using \lsvi{}) achieves the goal of overestimating the maximum of the unrelaxed program; in particular, sampling $\xi_{pi} \sim \mathcal N(0,\sigma\Sigma^{-1}_{pi})$ satisfies the relaxed constraints with high probability and is roughly uniformly distributed in the constraint set.

\section{Discussion}
This works makes progress in relaxing the optimistic closure assumptions on the function class for exploration through a statistically and computationally efficient PAC algorithm. From an algorithmic standpoint, our algorithm is inspired by \citep{osband2016generalization}, but from an analytical standpoint, it is justified by a design-of-experiments approach \citep{lattimore2020learning}. Remarkably, our approximations to make G-experimental design implementable \emph{online} and with polynomial \emph{computational complexity} only add a $d$ factor compared to G-optimal design. The proof technique is new to our knowledge both in principles and in execution, and can be appreciated in the appendix. We hope that the basic principle is general enough to serve as a foundation to develop new algorithms with even more general function approximators. The contribution to reward-free exploration \citep{jin2020rewardfree} to linear value functions is also a contribution to the field.

\section{Broader Impact}
This work is of theoretical nature and aims at improving our core understanding of reinforcement learning; no immediate societal consequences are anticipated as a result of this study.

\section*{Acknowledgment}

Funding in direct support of this work: Total Innovation Program Fellowship, ONR YIP and NSF career. The authors are grateful to the reviewers for their useful comments, in particular about the explorability requirement.
\small
\bibliographystyle{plainnat} 
\bibliography{rl.bib}
\medskip

\appendix
\newpage
\tableofcontents

\newpage
\section{Preliminaries}
\subsection{Symbols}
\renewcommand{\arraystretch}{1.5}
\begin{longtable}{l l p{9cm}}
\caption{Symbols}\\
\hline
$r_t(s,a)$ & $ \defeq $& expected reward in $(s,a,t)$  \\
$p_t(s,a)$ & $ \defeq $& transition function in $(s,a,t)$  \\
$s_{tk}$ & $ \defeq $& experienced state at timestep $t$ in episode $k$ in phase $t$  \\
$a_{tk}$ & $ \defeq $& experienced action at timestep $t$  in episode $k$ in phase $t$  \\
$r_{tk}$ & $ \defeq $& experienced reward\footnote{this only applies if the reward function is learned from data; since we're doing reward free exploration, it instead represents the reward used to populate the dataset $\mathcal D$ after \Alg{} has terminated.} at timestep $t$  in episode $k$ in phase $t$  \\
$s^{+}_{t+1,k}$ & $ \defeq $& experienced state at timestep $t+1$ in episode $k$ in phase $t$  \\
$\Lphi$ & $ \defeq $& upper bound on $\sup_{s,a,t} \| \phi_t(s,a) \|_2$ \\
$\phi_{tk}$ & $ \defeq $ & $\phi_t(s_{tk},a_{tk})$ \\
$\Sigma_{tk}$ & $ \defeq $& $\sum_{i=1}^{k-1}\phi_{ti}\phi_{ti}^\top $ \\
$\Sigma_{t}$ & $ \defeq $& $\Sigma_{tk}$ matrix after \Alg{} has completed learning in phase $t$ ($k$ is the last episode in that phase) \\
$\T_t(Q_{t+1})(s,a) $ & $ \defeq $& $r_t(s,a) + \E_{s' \sim p_t(s,a)} Q_{t+1}(s,a)$ \\
$\T^P_t(Q_{t+1})(s,a) $ & $ \defeq $& $ \E_{s' \sim p_t(s,a)} Q_{t+1}(s,a)$ \\
$\mathring \theta_t (Q_{t+1}) $ & $\defeq$ & any  $\mathring \theta_t(Q_{t+1}) \in \mathcal B_t$ s.t. $\mathringdeffullBe$ when $Q_{t+1} \in \mathcal Q_{t+1}$   \\ 
$\Delta_{ti}(Q_{t+1})$ & $\defeq$ & $   \mathring Q_t(Q_{t+1})(s_{ti},\pi_{ti}(s_{ti})) - 	\T^P_t(Q_{t+1})(s_{ti},\pi_{ti}(s_{ti}))   $ \\
$ \widehat \theta_t $ & $\defeq $ &  $\Sigma_{t}^{-1} \sum_{i=1}^{n(t)} \phi_{ti} \big[ \Vhat_{t+1}(s^{+}_{t+1,i}) \big]$ \\
$\pi_{ti}$ & $\defeq$ & policy played in episode $i$ of phase $t$ \\
$Q_t(\theta)$ & $\defeq$ & action value function $(s,a) \mapsto  \phi_t(s,a)^\top\theta$ \\
$V_t(\theta)$ & $\defeq$ & value function $s \mapsto \max_a \phi_t(s,a)^\top\theta$ \\
$\eta^t_{ti}(\Vhat_{t+1})$ & $\defeq$ & $\etatdef$ \\
$ \Delta^r_t(s,a) $ & $\defeq$ & $ r_t(s,a) - \phi_t(s,a)^\top \theta^r_t$ \\
$ \Delta^r_{ti} $ & $\defeq$ & $\DeltaRdef$ \\
$\eta^{r}_{ti}$ & $\defeq$ & $\etardef$ (reward noise) \\
$\IBEE(\mathcal Q_{t},\mathcal Q_{t+1}) $ & $\defeq$ & $\max_{\substack{Q_{t+1}\in \mathcal Q_{t+1}}} \min_{Q_t \in \mathcal Q_{t}} \max_{(s,a)} |[Q_t - \T_t^P(Q_{t+1})](s,a)|$ \\
$E_t$ & $\defeq$ & approximation error for the reward, see \cref{eqn:174} \\
$k$ & $\defeq$ & is an overestimate\footnote{in particular it can be set to be equal to $n(t)$ and is $poly(d_1,\cdots,d_H,H,\frac{1}{\epsilon},\frac{1}{\delta})$} of the number of episodes and is used in the definition of the $\beta$'s  below \\ 
$\delta'$ & $\defeq$ & is\footnote{in particular it is $\frac{\delta}{poly(d_1,\cdots,d_H,H,\frac{1}{\epsilon},\frac{1}{\delta})}$}  used in the definition of the $\beta$'s  below \\ 
$\beta^t_{t} $ & $\defeq$ & $\sqrtbetadef $ \\
$\beta_t^{r} $ & $\defeq$ & $ \sqrt{d_t \ln \(\frac{1+k\Lphi^2}{\delta'} \) } + \| \theta^r_t \|_2 $  \\
$D_{p} $ &  $\defeq$ & $ d_p \ln(1+k\Lphi^2/d_p)$ \\
$\beta_t^{E} $ & $\defeq$ & $ \beta_t^{r} $  \\
$ \sqrt{\alpha_t}$ & $\defeq$ & $ 3\(\sqrt{\beta^t_{t}} + \sqrt{\beta_t^{r}} + 2\)  = \widetilde O(\sqrt{d_t + d_{t+1}})$ \\
$ n(t) $ & $\defeq$ & number of samples collected in phase $t$ \\
$ \widehat \theta_t $ & $\defeq$ & $ \Sigma_{t}^{-1} \sum_{i=1}^{n(t)} \phi_{ti} \big[ \Vhat_{t+1}(s^{+}_{t+1,i}) \big] $ \\
$ \widehat \theta^r_t $ & $\defeq$ & $ \Sigma_{t}^{-1} \sum_{i=1}^{n(t)} \phi_{ti} \big[ r_{tk} \big] $ \\
$ \widehat \theta^{R+PV}_t $ & $\defeq$ &  $ \widehat \theta^{r}_t + \widehat \theta_t$ \\
$ \Radius_t $ & $\defeq$ &  radius at timestep $t$ (but these will be all equal to $1$ in the end) \\
$ \Radius $ & $\defeq$ &  $\Radius_1 = \dots = \Radius_H = 1$ \\
$ q $ & $\defeq$ &  $\Phi(-3)$ (normal cdf evaluated at $-3$) \\
$\mathcal C_k$ & $\defeq$ & $\Ckdef$ \\
$\mathcal E_k$ & $\defeq$ & $\Ekdef$ \\
$k(e,i)$ & $\defeq $ & episode in epoch $e$ (of a certain phase) such that $E_k$ happens for the $i$-th time. \\
	$\zeta_{pk(e,i)}$ & $ \defeq$ & $ \zetadef$\\
$ A $ & $\defeq$ & $ \Adef $  \\
$ \gamma_t(\sigma) $ & $\defeq$ & $ \twonormbound $ \\
$ \pi_t(s) $ & $\defeq$ & indicates the action taken at timestep $t$ by policy $\pi$ in state $s$ \\
$ \sigma_{Start}$ & $\defeq$ & $1/\( \MagicValue{p}\)$ \\
$a\mathcal B_t$ & $\defeq$ & $ \{ ax \mid x \in \mathcal B_t \}$ for a positive real $a$ \\
    $V_t^\pi $ & $\defeq$ & value function of policy $\pi$ at timestep $t$ on $\mathcal M$ once the reward function is fixed \\
        $\Vstar $ & $\defeq$ & optimal value function on $\mathcal M$ once the reward function is fixed \\
            $\pistar $ & $\defeq$ & optimal policy on $\mathcal M$ once the reward function is fixed \\
$c_e,c_{\alpha},c_\sigma$ & $\defeq$ & constants implicitly determined, see proof of \cref{thm:MainResult} and footnote in that page 
\label{tab:MainNotation}
\end{longtable}

\newpage
\subsection{Inherent Bellman Error}
\label{sec:IBE}

\begin{definition}[Inherent Bellman Error and Best Approximator]
\label{def:InherentBellmanError}
Given two compact linear functional spaces\footnote{For infinite horizon MDPs, these normally coincide.} $\mathcal Q_{t}$ and $\mathcal Q_{t+1}$, the inherent Bellman error at step $t$ is the maximum (in absolute value) residual
\begin{align*}
	\IBEE(\mathcal Q_t,\mathcal Q_{t+1}) & \defeq \max_{\substack{Q_{t+1}\in \mathcal Q_{t+1}}} \min_{Q_t \in \mathcal Q_{t}} \max_{(s,a)} |[Q_t - \T_t^P(Q_{t+1})](s,a)|.
	\end{align*}

The approximator $\mathring Q_{t}( Q_{t+1}) \in \mathcal Q_{t}$ of $Q_{t+1} \in \mathcal Q_{t+1}$ through $\T_t^P$ is defined by its parameter $\mathring \theta_t (Q_{t+1})$ as any solution $\theta_t \in \mathcal B_t$ that verifies (this always exists from the above display) for any $Q_{t+1} \in \mathcal Q_{t+1}$
\begin{align}
\label{eqn:mathringQdef}
 \mathringdeffullBe
\end{align}
The Bellman residual function $\overline \Delta_{\pi,t}$ under policy $\pi$ is implicitly defined in the error decomposition below:
\begin{align}
\label{eqn:BellmanErrorDecomposition}
 \T^P_t(Q_{t+1})(s,a) \defeq \mathring Q_t(Q_{t+1})(s,a)  + \Delta_{t}(Q_{t+1})(s,a). 
\end{align}
and it satisfies
\begin{align}
	\IBEE(\mathcal Q_t,\mathcal Q_{t+1}) = \max_{\substack{(s,a) \\ Q_{t+1}\in \mathcal Q_{t+1}}} | \Delta_{t}(Q_{t+1})(s,a)|
\end{align}

\end{definition}

We briefly argue why we have the last equality in the above definition

\begin{align}
	\IBEE(\mathcal Q_{t},\mathcal Q_{t+1}) & \geq \max_{\substack{Q_{t+1}\in \mathcal Q_{t+1}}} \max_{(s,a)} |  \mathring Q_t(Q_{t+1})(s,a) - \T_t^P(Q_{t+1})(s,a)| \\
	& = \max_{\substack{Q_{t+1}\in \mathcal Q_{t+1}}} \max_{(s,a)} | \Delta_{t}(Q_{t+1})(s,a) |
\end{align}
where the second step uses \cref{eqn:BellmanErrorDecomposition}.

We are going to use the following property throughout the appendix:
\begin{proposition}[Positive Homogeneity of Inherent Bellman Error of System Dynamics]
\label{prop:PositiveHomogeneity} Let $\gamma$ be a positive scalar number.
If 
\begin{align}
\max_{\substack{Q_{t+1}\in \mathcal Q_{t+1}}} \min_{Q_t \in \mathcal Q_{t}} \max_{(s,a)} |[Q_t - \T_t^P(Q_{t+1})](s,a)| \leq \IBEE(\mathcal Q_{t},\mathcal Q_{t+1})
\end{align}
then
\begin{align}
\label{eqn:Statement1}
	\max_{\substack{Q_{t+1}\in \gamma \mathcal Q_{t+1}}} \min_{ Q_t \in \gamma \mathcal Q_{t}} \max_{(s,a)} |[Q_t - \T_t^P(Q_{t+1})](s,a)| \leq \gamma \IBEE(\mathcal Q_{t},\mathcal Q_{t+1})
\end{align}
where
\begin{align}
\label{eqn:14}
	\gamma \mathcal Q_\tau = \{ Q_\tau \mid Q_\tau(s,a) = \phi_\tau(s,a)^\top\theta, \| \theta \|_2 \leq \gamma \Radius_\tau \}, \quad \tau \in \{t,t+1\}.
\end{align}
This implies that if $\|\theta_{t+1}\|_2 \leq \gamma\Radius_{t+1}$ then we can find a $\mathring \theta_t$ satisfying $\| \mathring \theta_{t}(V_{t+1}(\theta_{t+1}))\|_2 \leq \gamma \Radius_t$.
\end{proposition}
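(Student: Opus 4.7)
The plan is to use two simple facts: (i) the reward-free Bellman operator $\mathcal{T}_t^P$ is positively homogeneous as a map on value functions, and (ii) the linear function classes $\mathcal{Q}_\tau$ scale linearly with the parameter ball, so $\gamma \mathcal{Q}_\tau$ as defined in \cref{eqn:14} coincides exactly with $\{\gamma Q_\tau : Q_\tau \in \mathcal{Q}_\tau\}$ when $\gamma > 0$.

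First I would observe the positive homogeneity of $\mathcal{T}_t^P$. Since for $\gamma > 0$ we have $\max_{a'} \gamma Q_{t+1}(s',a') = \gamma \max_{a'} Q_{t+1}(s',a')$ pointwise, and the expectation over $s' \sim p_t(s,a)$ is linear, we get $\mathcal{T}_t^P(\gamma Q_{t+1})(s,a) = \gamma \, \mathcal{T}_t^P(Q_{t+1})(s,a)$ for every $(s,a)$. Second, I would record the scaling identity for the function classes: if $Q_\tau(s,a) = \phi_\tau(s,a)^\top \theta$ with $\|\theta\|_2 \leq \Radius_\tau$, then $\gamma Q_\tau(s,a) = \phi_\tau(s,a)^\top(\gamma \theta)$ with $\|\gamma \theta\|_2 \leq \gamma \Radius_\tau$, so $\gamma \mathcal{Q}_\tau$ is exactly the set of scalar multiples $\{\gamma Q_\tau : Q_\tau \in \mathcal{Q}_\tau\}$.

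With these two ingredients, the derivation of \cref{eqn:Statement1} is immediate. Pick any $Q_{t+1}' \in \gamma \mathcal{Q}_{t+1}$ and write $Q_{t+1}' = \gamma Q_{t+1}$ for some $Q_{t+1} \in \mathcal{Q}_{t+1}$. The hypothesis on $\IBEE(\mathcal{Q}_t,\mathcal{Q}_{t+1})$ supplies $Q_t \in \mathcal{Q}_t$ with $\max_{(s,a)}|Q_t(s,a) - \mathcal{T}_t^P(Q_{t+1})(s,a)| \leq \IBEE(\mathcal{Q}_t,\mathcal{Q}_{t+1})$. Setting $Q_t' = \gamma Q_t \in \gamma \mathcal{Q}_t$, positive homogeneity of $\mathcal{T}_t^P$ yields
\[
Q_t'(s,a) - \mathcal{T}_t^P(Q_{t+1}')(s,a) = \gamma\bigl(Q_t(s,a) - \mathcal{T}_t^P(Q_{t+1})(s,a)\bigr),
\]
so taking absolute values and the sup over $(s,a)$ gives the desired bound $\gamma \IBEE(\mathcal{Q}_t,\mathcal{Q}_{t+1})$. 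Taking the min over $Q_t' \in \gamma \mathcal{Q}_t$ and then the max over $Q_{t+1}' \in \gamma \mathcal{Q}_{t+1}$ establishes \cref{eqn:Statement1}.

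The last sentence of the proposition is then a direct read-off: if $\theta_{t+1}$ has $\|\theta_{t+1}\|_2 \leq \gamma \Radius_{t+1}$, then the $Q$-function induced by $\theta_{t+1}$ lies in $\gamma \mathcal{Q}_{t+1}$, and the $\mathring \theta_t$ produced above parameterises an element of $\gamma \mathcal{Q}_t$, i.e.\ $\|\mathring \theta_t(V_{t+1}(\theta_{t+1}))\|_2 \leq \gamma \Radius_t$. There is no real obstacle here; the only thing to be careful about is that the operator $\mathcal{T}_t^P$ acts through a $\max$ over actions, which preserves positive scaling but not negative scaling — hence the restriction to $\gamma > 0$ is essential, and I would flag this explicitly rather than invoking full linearity of the Bellman operator.
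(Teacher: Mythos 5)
Your proof is correct and follows essentially the same route as the paper's: both rest on the bijection $Q \mapsto \gamma Q$ between $\mathcal Q_\tau$ and $\gamma\mathcal Q_\tau$ together with the positive homogeneity of $\T_t^P$ (linearity of expectation plus $\max_{a'}\gamma Q = \gamma\max_{a'}Q$ for $\gamma>0$). Your explicit remark that the $\max$ over actions is what forces $\gamma>0$ is a slightly more careful justification than the paper's appeal to linearity of expectation alone, but the argument is the same.
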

\begin{proof}
Notice that when we write $\max_x f(x) \leq I$ (for a generic scalar function $f$, an element $x$ in a set, and a scalar $I$) we can replace the statement with $\forall x, f(x) \leq I$ and viceversa:
\begin{align}
	\max_x f(x) \leq I \longleftrightarrow \forall x, \; f(x) \leq I 
\end{align}
Likewise:
\begin{align}
	\max_x \min_y f(x,y) \leq I \longleftrightarrow \forall x, \; \exists y: \; f(x,y) \leq I 
\end{align}
We can recast the Bellman error condition as
\begin{align}
\forall Q_{t+1}\in \mathcal Q_{t+1}, \;  \exists Q_t \in \mathcal Q_{t}:  \max_{(s,a)} |[Q_t - \T_t^P(Q_{t+1})](s,a)| \leq \IBEE(\mathcal Q_{t},\mathcal Q_{t+1})
\end{align}

Now consider the bijection
\begin{align*}
	Q_{t}\in \mathcal Q_{t} & \longleftrightarrow  Q'_t = \gamma Q_{t}\in \gamma \mathcal Q_{t}, \quad  \\
			Q_{t+1}\in \mathcal Q_{t+1} & \longleftrightarrow Q'_{t+1} = \gamma Q_{t+1}\in \gamma \mathcal Q_{t+1}, \quad  \\
\numberthis{\label{eqn:Bijections}} 
\end{align*}
We have that the statement below
\begin{align}
\label{eqn:51}
\forall Q'_{t+1}\in \gamma \mathcal Q_{t+1}, \;  \exists Q'_t \in \mathcal \gamma Q_{t}:  \max_{(s,a)} |[Q_t - \T_t^P(Q_{t+1})](s,a)| \leq \gamma \IBEE(\mathcal Q_{t},\mathcal Q_{t+1})
\end{align}
holds if and only if
\begin{align}
\label{eqn:52}
\forall Q_{t+1}\in \mathcal Q_{t+1}, \;  \exists Q_t \in \mathcal  Q_{t}:  \max_{(s,a)} |[\gamma Q_t - \T_t^P(\gamma Q_{t+1})](s,a)| \leq \gamma \IBEE(\mathcal Q_{t},\mathcal Q_{t+1})
\end{align}
holds. Therefore, it suffices to prove \cref{eqn:52} to prove the statement.
Notice that by linearity of expectation for any $\gamma > 0$ we have
\begin{align}
\T_t^P  Q_{t+1}(\gamma\theta_{t+1}))(s,a) & = \E_{s' \sim p_t(s,a)} \max_{a'}[\gamma Q_{t+1}(\theta_{t+1})(s',a')] ]\\
& =
	\gamma \E_{s' \sim p_t(s,a)} \max_{a'}[ Q_{t+1}(\theta_{t+1})(s',a')] \\
	& = \gamma \T_t^P (Q_{t+1})(\theta_{t+1})(s,a).
\end{align}
Therefore
\begin{align}
\max_{(s,a)} |[\gamma Q_t - \T_t^P(\gamma Q_{t+1})](s,a)| = \gamma \max_{(s,a)} |[ Q_t - \T_t^P( Q_{t+1})](s,a)|  
\end{align}
The hypothesis of the lemma implies
\begin{align}
	\forall Q_{t+1}\in \mathcal Q_{t+1}, \;  \exists Q_t \in \mathcal  Q_{t}: \gamma \max_{(s,a)} |[ Q_t - \T_t^P( Q_{t+1})](s,a)| \leq \gamma  \IBEE(\mathcal Q_{t},\mathcal Q_{t+1})
\end{align}
and the prior display implies that \cref{eqn:52} holds, and so does 
\cref{eqn:51} which is equivalent to \cref{eqn:Statement1}. 

Finally to conclude the proof of the theorem notice that if 
$
	\theta_{t+1} \in \gamma \Radius_{t+1}
$
then we can find a $
	\mathring \theta_{t} \in \gamma \Radius_{t}
$
such that the Bellman error is at most $\gamma \IBEE(\mathcal Q_t,\mathcal Q_{t+1})$.
\end{proof}

\newpage
\section{Analysis of vanilla \lsvi}
\label{sec:lsvi}
We recall the popular \lsvi{} protocol \citep{munos2005error,munos2008finite} operating on a \emph{batch} dataset $\mathcal D = \{ \(s_{tk},a_{tk},r_{tk},s^+_{t+1,k} \) \}^{t=1,\dots,H}_{k=1,\dots,n(t)}$ of experienced state-action-reward-successor states. We use $n(t)$ to denote the number of samples collected at a certain timestep $t$. The regularization parameter is optional and defaults to $\lambda = 1$. The \lsvi{} algorithm is used without reward from the dataset $\mathcal D$ when called by \Alg{}; instead a pseudoreward function $\textsc{r}_p$ is prescribed in the last timestep.
\dumplsvi

\subsection{Single Step Error Decomposition}
\begin{lemma}[Analysis of Transition Error in Parameter Space]
\label{lem:InParameterSpace}
Let $n(t)$ be the number of episodes where samples have been collected at timestep $t$.
If $\widehat \theta_{t}$ satisfies
\begin{align}
\label{eqn:ls}
 \widehat \theta_t = \Sigma_{t}^{-1} \sum_{i=1}^{n(t)} \phi_{ti} \big[ \Vhat_{t+1}(s^{+}_{t+1,i}) \big] 
\end{align}
then it must also satisfy:
\begin{align}
\label{eqn:firstequation}
\widehat \theta_t = \mathring \theta_t(\Vhat_{t+1}) + \Sigma_{t}^{-1} \( \sum_{i=1}^{n(t)}  \phi_{ti}\big[ \Delta_{ti}(\Vhat_{t+1}) + \eta^{t}_{ti}(\Vhat_{t+1})\big] -\lambda\mathring \theta_t(\Vhat_{t+1}) \) .
\end{align}
\end{lemma}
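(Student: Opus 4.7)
The plan is to treat this as a pure algebraic decomposition of the regularized least-squares estimator: no concentration or probability argument is needed at this stage, only two rewrites of each target $\widehat V_{t+1}(s^{+}_{t+1,i})$ followed by a telescoping identity for $\Sigma_t$.

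First, I would rewrite each target inside the sum using the definition of the transition noise:
$$\widehat V_{t+1}(s^{+}_{t+1,i}) = \mathcal T^P_t(\widehat V_{t+1})(s_{ti},a_{ti}) + \eta^{t}_{ti}(\widehat V_{t+1}),$$
which is just the definition of $\eta^{t}_{ti}(\widehat V_{t+1}) = \widehat V_{t+1}(s^{+}_{t+1,i}) - \mathbb E_{s'\sim p_t(s_{ti},\pi_{ti}(s_{ti}))}\widehat V_{t+1}(s')$ and the fact that the action played is $a_{ti}=\pi_{ti}(s_{ti})$. Then I would invoke the Bellman error decomposition from \cref{eqn:BellmanErrorDecomposition} at the single state--action pair $(s_{ti},a_{ti})$, i.e.
$$\mathcal T^P_t(\widehat V_{t+1})(s_{ti},a_{ti}) = \phi_{ti}^\top \mathring\theta_t(\widehat V_{t+1}) + \Delta_{ti}(\widehat V_{t+1}),$$
so that the target splits into a ``linear-realizable'' part plus a misspecification term plus a noise term.

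Next, substituting into \cref{eqn:ls} and pulling the linear part out of the sum gives
$$\widehat\theta_t = \Sigma_t^{-1}\Big(\sum_{i=1}^{n(t)}\phi_{ti}\phi_{ti}^\top\Big)\mathring\theta_t(\widehat V_{t+1}) + \Sigma_t^{-1}\sum_{i=1}^{n(t)}\phi_{ti}\big[\Delta_{ti}(\widehat V_{t+1}) + \eta^{t}_{ti}(\widehat V_{t+1})\big].$$
The key identity is then $\sum_{i}\phi_{ti}\phi_{ti}^\top = \Sigma_t - \lambda I$, which after multiplying by $\Sigma_t^{-1}$ yields $\Sigma_t^{-1}(\Sigma_t-\lambda I)\mathring\theta_t(\widehat V_{t+1}) = \mathring\theta_t(\widehat V_{t+1}) - \lambda\Sigma_t^{-1}\mathring\theta_t(\widehat V_{t+1})$. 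Folding the $-\lambda\Sigma_t^{-1}\mathring\theta_t(\widehat V_{t+1})$ term inside the outer $\Sigma_t^{-1}(\cdots)$ parenthesis recovers exactly \cref{eqn:firstequation}.

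There is essentially no obstacle here beyond bookkeeping; the only subtleties are (i) getting the sign convention for $\Delta_{ti}$ to match the one used in \cref{eqn:BellmanErrorDecomposition} versus the symbol table, and (ii) remembering that $\Sigma_t$ carries the regularizer $\lambda I$ even though the outer-product sum does not, which is exactly what produces the $-\lambda\mathring\theta_t(\widehat V_{t+1})$ correction inside the parenthesis. The reason this identity is stated as a lemma (rather than just a one-line remark) is that it is the canonical form used downstream: the first term $\mathring\theta_t(\widehat V_{t+1})$ is the ``ideal'' linear surrogate, the $\Delta_{ti}$ terms will be controlled by the inherent Bellman error, the $\eta^{t}_{ti}$ terms by self-normalized martingale concentration, and the $-\lambda\mathring\theta_t(\widehat V_{t+1})$ term by the radius bound from \cref{prop:PositiveHomogeneity}.
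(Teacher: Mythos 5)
Your proposal is correct and follows essentially the same route as the paper's proof: decompose each target into $\phi_{ti}^\top\mathring\theta_t(\Vhat_{t+1}) + \Delta_{ti}(\Vhat_{t+1}) + \eta^t_{ti}(\Vhat_{t+1})$ and absorb the quadratic term into $\Sigma_t$, with the paper adding and subtracting $\lambda\mathring\theta_t(\Vhat_{t+1})$ inside the sum rather than writing $\sum_i\phi_{ti}\phi_{ti}^\top = \Sigma_t - \lambda I$ explicitly — the same algebra either way. Your side remarks (the sign mismatch between \cref{eqn:BellmanErrorDecomposition} and the symbol table's definition of $\Delta_{ti}$, and the fact that $\Sigma_t$ carries the regularizer while the outer-product sum does not) are accurate observations and do not affect the validity of the argument.
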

\begin{proof}
Let $\pi_{ti}$ be the policy used to generate the rollouts of episode $i$ of phase $t$. Define the trajectory noise of episode $i$ of phase $t$  using the next-state value function $\Vhat_{t+1}$ as:
\begin{align}
\eta^t_{ti}(\Vhat_{t+1}) \defeq \etatdef.
\end{align}
From \cref{eqn:ls} we can rewrite the unique solution for $\widehat \theta_t$ as
\begin{align*}
	 \widehat \theta_t & = \Sigma_{t}^{-1} \sum_{i=1}^{n(t)} \phi_{ti} \big[\E_{s'\sim p(s_{ti},\pi_{ti}(s_{ti}))}\Vhat_{t+1}(s') + \eta^{t}_{ti}(\Vhat_{t+1}) \big]
\numberthis{\label{eqn:ExpansionBegin}}
\end{align*}
Recall the error decomposition of \cref{eqn:BellmanErrorDecomposition} with $(s,a) = (s_{ti},\pi_{ti}(s_{ti})), \phi_{ti} = \phi(s,a), \Delta_{ti} = \Delta_t(s,a)$
\begin{align*}
		\E_{s'\sim p(s,a)}\Vhat_{t+1}(s') & =  \phi_{ti}^\top \mathring \theta_t(Q_{t+1}) + \Delta_{ti}(Q_{t+1})
		 \numberthis{\label{eqn:etaV}}
\end{align*}
where $\mathring \theta_t(Q_{t+1}) \in \mathcal B_t$.

Plugging back \cref{eqn:etaV} into \cref{eqn:ExpansionBegin} gives:
\begin{align}
	\widehat \theta_t & = \Sigma_{t}^{-1} 
\( \sum_{i=1}^{n(t)} \phi_{ti} \big[ \phi_{ti}^\top \mathring \theta_{t}(\Vhat_{t+1}) + \Delta_{ti}(\Vhat_{t+1}) + \eta^{t}_{ti}(\Vhat_{t+1})\big] + \overbrace{\lambda\mathring \theta_t(\Vhat_{t+1}) -\lambda\mathring \theta_t(\Vhat_{t+1})}^{=0} \) \\ 
	& = \Sigma_{t}^{-1} \Sigma_{t}\mathring \theta_t(\Vhat_{t+1}) + \Sigma_{t}^{-1} 
	\(\sum_{i=1}^{n(t)} \phi_{ti}\big[ \Delta_{ti}(\Vhat_{t+1}) + \eta^{t}_{ti}(\Vhat_{t+1}) \big] -\lambda\mathring \theta_t(\Vhat_{t+1})\) \\
	& = \mathring \theta_t(\Vhat_{t+1}) + \Sigma_{t}^{-1} \( \sum_{i=1}^{n(t)} \phi_{ti}\big[ \Delta_{ti}(\Vhat_{t+1}) + \eta^{t}_{ti}(\Vhat_{t+1}) \big] -\lambda\mathring \theta_t(\Vhat_{t+1})\) .
\end{align}
This proves the lemma.
\end{proof}

\newpage
\begin{lemma}[Analysis of Reward Error in Parameter Space]
\label{lem:InParameterSpaceReward}
Let $n(t)$ be the number of episodes where samples have been collected at timestep $t$.
If $\widehat \theta^r_{t}$ satisfies
\begin{align}
\label{eqn:lsr}
 \widehat \theta^r_t = \Sigma_{t}^{-1} \sum_{i=1}^{n(t)} \phi_{ti} r_{tk} 
\end{align}
then it must also satisfy:
\begin{align}
\label{eqn:firstequationreward}
\widehat \theta^r_t & = \theta^r_t + \Sigma_{t}^{-1} \( \sum_{i=1}^{n(t)} \phi_{ti} \big[  \eta^r_{ti} +  \Delta^r_{ti} \big] - \lambda \theta^r_t \)
\end{align}

\end{lemma}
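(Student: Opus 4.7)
The plan is to mirror the argument of \cref{lem:InParameterSpace} (analysis of the transition error) almost verbatim, since the reward regression has exactly the same linear-least-squares structure but with a simpler target (a scalar reward rather than a bootstrapped value function). The two residuals to track are now the reward noise $\eta^r_{ti} = r_{ti} - r_t(s_{ti},a_{ti})$ and the reward misspecification $\Delta^r_{ti} = r_t(s_{ti},a_{ti}) - \phi_{ti}^\top\theta^r_t$, and the role played there by $\mathring\theta_t(\Vhat_{t+1})$ is here played by the (postulated) reward parameter $\theta^r_t$.

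First, I would rewrite the observed reward as the decomposition
\[
r_{ti} \;=\; \phi_{ti}^\top \theta^r_t \;+\; \Delta^r_{ti} \;+\; \eta^r_{ti},
\]
which is just the definitions of $\eta^r_{ti}$ and $\Delta^r_{ti}$ rearranged. Substituting this into the closed-form ridge solution \cref{eqn:lsr} gives
\[
\widehat\theta^r_t \;=\; \Sigma_t^{-1}\sum_{i=1}^{n(t)} \phi_{ti}\phi_{ti}^\top\,\theta^r_t \;+\; \Sigma_t^{-1}\sum_{i=1}^{n(t)} \phi_{ti}\bigl[\Delta^r_{ti} + \eta^r_{ti}\bigr].
\]

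Second, I would use the identity $\sum_{i=1}^{n(t)}\phi_{ti}\phi_{ti}^\top = \Sigma_t - \lambda I$ coming from the definition $\Sigma_t = \lambda I + \sum_i \phi_{ti}\phi_{ti}^\top$ in the algorithm initialization, together with the standard ``add and subtract $\lambda \theta^r_t$'' trick used in \cref{lem:InParameterSpace}. This reduces the first term above to $\theta^r_t - \lambda\Sigma_t^{-1}\theta^r_t$, and regrouping the $-\lambda\Sigma_t^{-1}\theta^r_t$ term inside the $\Sigma_t^{-1}(\cdot)$ factor produces exactly the claimed identity \cref{eqn:firstequationreward}.

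There is no real obstacle here: the lemma is a purely algebraic restatement of the ridge regression normal equations, and the only small care needed is to correctly book-keep the $\lambda I$ contribution to $\Sigma_t$ so that the bias term $-\lambda\Sigma_t^{-1}\theta^r_t$ appears with the right sign. Unlike the transition-error case, there is no need to invoke the inherent Bellman error definition or to bound $\|\theta^r_t\|_2$ at this stage — those will be used downstream when concentration inequalities are applied to $\eta^r_{ti}$ and $\Delta^r_{ti}$.
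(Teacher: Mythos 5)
Your proposal is correct and follows essentially the same route as the paper: decompose $r_{ti} = \phi_{ti}^\top\theta^r_t + \Delta^r_{ti} + \eta^r_{ti}$, substitute into the closed-form ridge solution, and absorb the $\lambda I$ part of $\Sigma_t$ via the add-and-subtract of $\lambda\theta^r_t$ so that $\Sigma_t^{-1}\Sigma_t\theta^r_t = \theta^r_t$ emerges. The paper's proof is exactly this two-line algebraic computation, so there is nothing further to reconcile.
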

\begin{proof}
Let $\pi_{ti}$ be the policy used to generate the rollouts of episode $i$ of phase $t$. 

From \cref{eqn:lsr} we can rewrite the unique solution for $\widehat \theta^r_t$ as (for the definitions of the symbols see \cref{tab:MainNotation})
\begin{align*}
	 \widehat \theta^r_t & = \Sigma_{t}^{-1} \sum_{i=1}^{n(t)} \phi_{ti} \big[ r_t(s_{ti},a_{ti}) + \eta^r_{ti} \big] \\ 
	& =  \Sigma_{t}^{-1} \( \sum_{i=1}^{n(t)} \phi_{ti} \big[ \phi_{ti}^\top \theta^r_{t} + \Delta^r_{ti} + \eta^r_{ti} \big]   + \lambda \theta^r_t - \lambda \theta^r_t \) \\
	& =  \theta^r_t + \Sigma_{t}^{-1} \( \sum_{i=1}^{n(t)} \phi_{ti} \big[ \eta^r_{ti} + \Delta_{ti} \big] - \lambda \theta^r_t \) 
\numberthis{\label{eqn:ExpansionBeginReward}}
\end{align*}
\end{proof}

\newpage
\subsection{Single Step Error Bounds}
\begin{definition}[Good Event for \lsvi{}]
\label{def:GoodEventLSVI}
Assume $\sqrt{n(t)}\IBEE(\mathcal Q_{t},\mathcal Q_{t+1}) \leq \sqrt{\alpha_t}/3$ and $\sqrt{n(t)}E_t \leq \sqrt{\alpha_t}/3$.
We say that \lsvi{} (\cref{alg:lsviFrancis,alg:lsviBatch}) is in the good event when the following bound holds for all $t \in [H]$ with\footnote{Note that if $\widehat V_{t+1} \in R\times\mathcal V_{t+1}$ (the set $\mathcal V_{t+1}$ where all elements are scaled by the scalar $R$) then the bounds still hold provided that they are rescaled by $R$.} $\Vhat_{t+1} \in \mathcal V_{t+1}$. The definition of the symbols are reported in \cref{tab:MainNotation}:
\begin{align}
	\| \sum_{i=1}^{n(t)} \phi_{ti} 
	 \Delta_{ti}(\Vhat_{t+1}) \|_{\Sigma_{t}^{-1}}	& \leq \sqrt{n(t)}\IBEE(\mathcal Q_{t},\mathcal Q_{t+1}) \\
		\| \sum_{i=1}^{n(t)} \phi_{ti} \eta^t_{ti}(\Vhat_{t+1}) \|_{\Sigma_{t}^{-1}} & \leq \sqrt{\beta^t_{t}} \\
		\lambda\| \mathring \theta_t(\Vhat_{t+1}) \|_{\Sigma_{t}^{-1}} & \leq \sqrt{\lambda} \Radius_t \\
			\| \sum_{i=1}^{n(t)} \phi_{ti} 
	 \Delta^r_{ti} \|_{\Sigma_{t}^{-1}}	& \leq \sqrt{n(t)}E_t \\
		\| \sum_{i=1}^{n(t)} \phi_{ti} \eta^r_{ti} \|_{\Sigma_{t}^{-1}} & \leq \sqrt{\beta^{r}_{t}} \\
		\lambda\|  \theta^r_t \|_{\Sigma_{t}^{-1}} & \leq \sqrt{\lambda} \|\theta^r_t \|_2.
\end{align}

In addition, the above expressions with the relations in \fullref{lem:InParameterSpace} and 
 \fullref{lem:InParameterSpaceReward} imply:

\begin{align*}
	& \| \widehat \theta^r_t - \theta^r_t\|_{\Sigma_{t}} + \| \widehat \theta_t - \mathring \theta_t(\Vhat_{t+1}) \|_{\Sigma_{t}}  \\
	& \leq \sqrt{n(t)}\IBEE(\mathcal Q_{t},\mathcal Q_{t+1}) + \sqrt{n(t)}E_t +\sqrt{\beta^r_{t}} +\sqrt{\beta^t_{t}} + \sqrt{\lambda} \Radius_t + \sqrt{\lambda} \|\theta^r_t \|_2 \\
	& \leq \sqrt{\alpha_t}
\numberthis{\label{eqn:next}}
\end{align*}
\end{definition}

\begin{lemma}[Probability of Good Event for \lsvi{}] 
\label{lem:ErrorBounds}
There exists a parameter $\delta' = \frac{\delta}{poly(d_1,\dots,d_H,H,\frac{1}{\epsilon})}$, such that the good event of \cref{def:GoodEventLSVI} holds with probability at least $1-\delta/2$.
\end{lemma}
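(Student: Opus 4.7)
The plan is to establish each of the six bounds separately, choose a polynomially small $\delta'$ so that a union bound over $t \in [H]$ and the six events per timestep yields overall failure probability at most $\delta/2$, and identify item (2) as the only place where a nontrivial covering argument is needed.

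First I would dispose of the deterministic items. For the Bellman-residual term (1), I rewrite the dual form $\|\sum_i \phi_{ti}\Delta_{ti}(\Vhat_{t+1})\|_{\Sigma_t^{-1}} = \sup_{\|y\|_{\Sigma_t}\le 1} \sum_i \Delta_{ti}(\Vhat_{t+1})\,\phi_{ti}^\top y$. Since $\Vhat_{t+1}\in\mathcal V_{t+1}$ corresponds to some $Q_{t+1}\in\mathcal Q_{t+1}$ through $V=\max_a Q$, \fullref{def:InherentBellmanError} gives $|\Delta_{ti}(\Vhat_{t+1})|\le \IBEE(\mathcal Q_t,\mathcal Q_{t+1})$ pointwise, and Cauchy--Schwartz yields $\sum_i |\phi_{ti}^\top y|\le \sqrt{n(t)}\sqrt{y^\top \Sigma_t y}\le \sqrt{n(t)}$, proving (1) deterministically. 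Item (4) is identical with $E_t$ in place of the inherent Bellman error. For the norm-type bounds (3) and (6), the ridge regularizer forces $\Sigma_t\succeq \lambda I$, so $\|\theta\|_{\Sigma_t^{-1}}\le \|\theta\|_2/\sqrt{\lambda}$; combined with $\|\mathring\theta_t(\Vhat_{t+1})\|_2\le \Radius_t$ (from the definition of the approximator in $\mathcal B_t$) and $\|\theta^r_t\|_2$ as the hypothesis, this gives $\lambda\|\mathring\theta_t\|_{\Sigma_t^{-1}}\le \sqrt{\lambda}\Radius_t$ and $\lambda\|\theta^r_t\|_{\Sigma_t^{-1}}\le \sqrt{\lambda}\|\theta^r_t\|_2$.

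The genuinely probabilistic content is items (2) and (5). For (5) the reward noise $\eta^r_{ti}$ is a bounded martingale difference sequence adapted to the natural filtration $\mathcal F_{ti}$ generated by the history up to episode $i$ of phase $t$, and it does not depend on any learned function, so the self-normalized martingale inequality of Abbasi-Yadkori et al.\ directly gives $\|\sum_i \phi_{ti}\eta^r_{ti}\|_{\Sigma_t^{-1}}\le \sqrt{d_t\ln((1+k\Lphi^2)/\delta')}$ with probability $1-\delta'$, which is exactly the first term of $\sqrt{\beta^r_t}$. The hard part is item (2): the noise $\eta^t_{ti}(\Vhat_{t+1})$ depends on the data-dependent function $\Vhat_{t+1}$, so a naive application of the martingale bound fails. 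My plan is the standard covering approach: (i) since $\Vhat_{t+1}(s)=\max_a\phi_{t+1}(s,a)^\top\widehat\theta_{t+1}$ with $\widehat\theta_{t+1}$ in a Euclidean ball of radius $\Radius_{t+1}$, take an $\epsilon_{cov}$-net $\mathcal N$ over that ball of cardinality at most $(1+2\Radius_{t+1}/\epsilon_{cov})^{d_{t+1}}$; (ii) apply the self-normalized inequality at each net point with failure budget $\delta'/|\mathcal N|$; (iii) control the discretization bias $|\Vhat_{t+1}(s)-\widetilde V_{t+1}(s)|\le \Lphi\epsilon_{cov}$ uniformly in $s$; (iv) choose $\epsilon_{cov}\approx 1/(2\Lphi\sqrt{k})$ so that the bias contribution is $O(1)$ (absorbed by the ``$+2$'' in the definition of $\sqrt{\beta^t_t}$) while the net-log term becomes $d_{t+1}\ln(1+4\Radius_{t+1}/(2\Lphi\sqrt{k}))$, matching the second summand in $\sqrt{\beta^t_t}$. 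The leading self-normalized term accounts for the $d_t\ln(1+\Lphi^2 k/d_t)$ piece.

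Finally, I would set $\delta' = \delta/\mathrm{poly}(d_1,\dots,d_H,H,1/\epsilon)$ large enough that the union bound over all six events and all $t\in[H]$ adds up to at most $\delta/2$; the polynomial factor is harmless since $\delta'$ only enters the bounds through logarithms. On the intersection of these six good events at every timestep, the identities of \fullref{lem:InParameterSpace} and \fullref{lem:InParameterSpaceReward} combined with the sub-budget assumptions $\sqrt{n(t)}\IBEE(\mathcal Q_t,\mathcal Q_{t+1})\le \sqrt{\alpha_t}/3$ and $\sqrt{n(t)}E_t\le \sqrt{\alpha_t}/3$ and the definition $\sqrt{\alpha_t}=3(\sqrt{\beta^t_t}+\sqrt{\beta^r_t}+2)$ yield the aggregated bound in \cref{eqn:next}. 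The main obstacle is purely in the covering step for item (2); everything else is bookkeeping.
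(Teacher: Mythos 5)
Your proposal is correct and follows essentially the same route as the paper: the misspecification terms (1) and (4) via the deterministic projection/Cauchy--Schwartz bound (the paper cites lemma 8 of \citet{zanette2020learning} rather than re-deriving the dual-norm argument), the regularization terms (3) and (6) via $\Sigma_t \succeq \lambda I$, the reward noise (5) via the self-normalized inequality of \citet{Abbasi11}, and the transition noise (2) via the covering-number argument that the paper isolates as \fullref{lem:TransitionsHPbound}, followed by the same union bound and deterministic chaining through \fullref{lem:InParameterSpace,lem:InParameterSpaceReward}. No gaps.
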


\begin{proof}
Since $|\Delta_{ti}(\Vhat_{t+1}) | \leq  \IBEE(\mathcal Q_{t},\mathcal Q_{t+1})$, the projection bound (lemma 8 in \citep{zanette2020learning}) gives the first inequality in the statement of the theorem.
The second inequality is proved in \fullref{lem:TransitionsHPbound}  respectively. The third inequality follows from \fullref{lem:MaxEvaluBound}. Since $| \Delta^r_{ti} | \leq  E_t$ the projection bound (lemma 8 in \citep{zanette2020learning}) again gives the fourth inequality. The fifth inequality follows from theorem 2 in \citep{Abbasi11}  with $1$-sub-Gaussian noise and the last inequality again follows from \fullref{lem:MaxEvaluBound}. 
In particular it is possible to choose $\delta'$ (in the definition of the $\beta$'s) such that these statements jointly hold with probability at least $1-\delta/2$ after a union bound over each statement and the timestep $H$. At this point the statement in \cref{eqn:next} follows deterministically by chaining with \cref{lem:InParameterSpace,lem:InParameterSpaceReward}.
\end{proof}

\newpage
\subsection{Iterate Boundness}

In this section we discuss the \emph{boundness} in the value function parameter. 

\begin{lemma}[Boundness at Intermediate Timesteps for \cref{alg:lsviFrancis}]
\label{lem:IntermediateBoundnesFrancis}
On the good event for \lsvi{} of \cref{def:GoodEventLSVI} if \begin{align}
	\lambda_{min}(\Sigma_{t}) & \geq 4H^2\alpha_t, \quad \forall t \in [p-1] \\
	\| \xi_p \|_2 & \leq \frac{1}{2} 
\end{align}
then
\begin{align}
	\| \widehat \theta_t \|_2 \leq 1, \quad \forall t \in [p].
\end{align}
\end{lemma}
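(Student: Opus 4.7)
The plan is to proceed by backward induction on $t$ from $p$ down to $1$, establishing the stronger claim $\|\widehat\theta_t\|_2 \leq \tfrac{1}{2} + \tfrac{p-t}{2H}$, which is at most $1$ since $p \leq H$. The base case $t=p$ is immediate: \cref{alg:lsviFrancis} sets $\widehat\theta_p = \xi_p$, and the hypothesis gives $\|\widehat\theta_p\|_2 \leq \tfrac{1}{2}$.

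\textbf{Inductive step ($t+1 \to t$, with $t < p$).} I would split $\widehat\theta_t$ using the representation of \fullref{lem:InParameterSpace}:
\begin{align*}
\|\widehat\theta_t\|_2 \;\leq\; \|\mathring\theta_t(\widehat V_{t+1})\|_2 \;+\; \|\widehat\theta_t - \mathring\theta_t(\widehat V_{t+1})\|_2.
\end{align*}
For the first term, I would invoke \fullref{prop:PositiveHomogeneity} with scaling factor $\gamma = \|\widehat\theta_{t+1}\|_2$: since $\widehat V_{t+1}$ is parameterized by $\widehat\theta_{t+1}$ with $\|\widehat\theta_{t+1}\|_2 \leq \gamma \mathcal{R}_{t+1}$ (recall $\mathcal{R}_{t+1}=1$), the proposition yields a valid $\mathring\theta_t(\widehat V_{t+1})$ satisfying $\|\mathring\theta_t(\widehat V_{t+1})\|_2 \leq \gamma = \|\widehat\theta_{t+1}\|_2$. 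For the second term, the induction hypothesis gives $\|\widehat\theta_{t+1}\|_2 \leq 1$, so $\widehat V_{t+1} \in \mathcal V_{t+1}$ and the good event of \cref{def:GoodEventLSVI} applies (in its reward-free form, as \cref{alg:lsviFrancis} does not use reward at intermediate timesteps), yielding $\|\widehat\theta_t - \mathring\theta_t(\widehat V_{t+1})\|_{\Sigma_t} \leq \sqrt{\alpha_t}$. Converting to $\ell_2$ via the eigenvalue bound,
\begin{align*}
\|\widehat\theta_t - \mathring\theta_t(\widehat V_{t+1})\|_2 \;\leq\; \frac{\|\widehat\theta_t - \mathring\theta_t(\widehat V_{t+1})\|_{\Sigma_t}}{\sqrt{\lambda_{\min}(\Sigma_t)}} \;\leq\; \frac{\sqrt{\alpha_t}}{\sqrt{4H^2 \alpha_t}} \;=\; \frac{1}{2H}.
\end{align*}
Combining, $\|\widehat\theta_t\|_2 \leq \|\widehat\theta_{t+1}\|_2 + \tfrac{1}{2H}$. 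Unrolling across the $p-t \leq H-1$ backward steps and adding the base case gives $\|\widehat\theta_t\|_2 \leq \tfrac{1}{2} + \tfrac{p-t}{2H} \leq 1$, as desired.

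\textbf{Main obstacle.} The only delicate point is the application of \fullref{prop:PositiveHomogeneity}: at each intermediate step $t$, we must justify that there exists a $\mathring\theta_t(\widehat V_{t+1})$ whose $\ell_2$-norm is controlled by $\|\widehat\theta_{t+1}\|_2$ rather than by the nominal radius $\mathcal{R}_t = 1$. The positive homogeneity result was stated exactly for this purpose — it guarantees that if $\widehat\theta_{t+1}$ lives in a $\gamma$-scaled ball, one can pick a preimage $\mathring\theta_t$ in the correspondingly scaled ball at stage $t$ — so the proof closes cleanly. A minor bookkeeping point is verifying that the good event bound $\|\widehat\theta_t - \mathring\theta_t(\widehat V_{t+1})\|_{\Sigma_t}\leq \sqrt{\alpha_t}$ remains valid when one uses this scaled preimage; this follows because the bound on $\lambda \|\mathring\theta_t\|_{\Sigma_t^{-1}}$ in \cref{def:GoodEventLSVI} is with respect to $\mathcal{R}_t = 1$, and the induction hypothesis supplies $\|\widehat\theta_{t+1}\|_2 \leq 1$, so the scaling $\gamma$ we invoke is at most $1$ and no terms blow up.
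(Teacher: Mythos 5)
Your proposal is correct and follows essentially the same route as the paper's proof: backward induction from $t=p$, splitting $\widehat\theta_t$ into $\mathring\theta_t(\widehat V_{t+1})$ (controlled via \fullref{prop:PositiveHomogeneity}) plus a least-squares error term (controlled via the good event and $\lambda_{\min}(\Sigma_t)\geq 4H^2\alpha_t$, giving $\tfrac{1}{2H}$ per step). The only difference is cosmetic: you anchor the invariant at $\tfrac12+\tfrac{p-t}{2H}$ from the bottom while the paper uses $1-\tfrac{t-1}{2H}$ from the top, which amounts to the same $\tfrac{1}{2H}$-per-step accumulation.
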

\begin{proof}
We proceed by induction, showing that $\widehat \theta_t$ due to errors can live in bigger and bigger balls, with radius starting from $\frac{1}{2}$ at timestep $p$ to radius  $1$ at timestep $1$.
\begin{inductivehypothesis}
	$
	\| \widehat \theta_t\|_2 \leq  (1 - \frac{t-1}{2H})  $.
\end{inductivehypothesis}	
The inductive statement clearly holds at $t = p$ by hypothesis of the lemma; therefore we focus on the inductive step (notice that the induction goes from $t=p$ down to $t=1$, so the inductive step assumes the inductive hypothesis holds when written for $t+1$.)

The inherent Bellman error definition (\fullref{def:InherentBellmanError}) and \fullref{prop:PositiveHomogeneity} ensures
\begin{align}
	\| \widehat \theta_{t+1} \|_2 \leq  \(1 - \frac{t}{2H}\)   \longrightarrow	 	\| \mathring \theta_t(V_{t+1}(\widehat \theta_{t+1})) \|_2 \leq  \(1 - \frac{t}{2H}\) \end{align}
In particular, the left statement is ensured by the inductive hypothesis for $t+1$.
Next,  under the good event of \fullref{def:GoodEventLSVI}, we have that \fullref{lem:MaxEvaluBound} ensures (writing $ \mathring \theta_t = \mathring \theta_t(V_{t+1}(\widehat \theta_{t+1}))$ for short)
\begin{align}
\sqrt{\alpha_t} \geq \| \widehat \theta_t - \mathring \theta_t \|_{\Sigma_{t}} &  \geq \sqrt{\lambda_{min}(\Sigma_{t})} \| \widehat \theta_t - \mathring \theta_t \|_2
\end{align}
Solving for $ \| \widehat \theta_t - \mathring \theta_t \|_2$ and using the lemma's hypothesis gives
\begin{align}
	 \| \widehat \theta_t - \mathring \theta_t \|_2 \leq \frac{\sqrt{\alpha_t}}{2H\sqrt{\alpha_t}} = \frac{1}{2H}.
\end{align}
Combined with the prior display, we deduce
\begin{align}
	\| \widehat \theta_t\|_2 \leq \| \widehat \theta_t - \mathring \theta_t \|_2  + \|\mathring \theta_t \|_2 \leq 1 - \frac{t}{2H} + \frac{1}{2H} = 1 - \frac{t-1}{2H}.
\end{align}
This shows the inductive step.
\end{proof}

\newpage
\begin{lemma}[Boundness at Intermediate Timesteps for \cref{alg:lsviBatch}]
\label{lem:IntermediateBoundnes}
Under the good event \cref{def:GoodEventLSVI}, fix a positive scalar $R$; if
\begin{align}
	\lambda_{min}(\Sigma_{t}) & \geq 4H^2\alpha_t, \quad \forall t \in [H] \\
	\| \theta^r_t \|_2 & \leq \frac{R}{H} 
\end{align}
then
\begin{align}
	\| \widehat \theta_t^{R+PV} \|_2 = \| \widehat \theta^R_t +  \widehat \theta_t \|_2 \leq 2R, \quad \forall t \in [H].
\end{align}
\end{lemma}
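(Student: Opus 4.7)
The approach mirrors the argument used for \fullref{lem:IntermediateBoundnesFrancis}, but now we must carry along \emph{both} the reward estimate $\widehat\theta^r_t$ and the bootstrapped value estimate $\widehat\theta_t$, whose sum is $\widehat\theta^{R+PV}_t$. I will proceed by \emph{backward} induction on $t$, from $t=H+1$ down to $t=1$, using the scaled inductive hypothesis
\begin{align*}
\|\widehat\theta^{R+PV}_{t+1}\|_2 \;\leq\; \frac{2R(H-t)}{H}.
\end{align*}
The base case $t+1 = H+1$ is immediate since $\widehat\theta^{R+PV}_{H+1}=0$ by construction of \cref{alg:lsviBatch}. Note that at $t=1$ the hypothesis yields $\|\widehat\theta^{R+PV}_1\|_2 \leq 2R$, which is the conclusion of the lemma; combined with monotonicity of $2R(H-t+1)/H$ in $t$, this handles all $t\in[H]$.

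\textbf{Inductive step.} Fix $t$ and suppose the hypothesis holds at $t+1$. Writing $\widehat V_{t+1}(s) = \max_a \phi_{t+1}(s,a)^\top \widehat\theta^{R+PV}_{t+1}$, we have $\widehat V_{t+1} \in c_V\,\mathcal V_{t+1}$ with $c_V := 2R(H-t)/H \leq 2R$. By \fullref{prop:PositiveHomogeneity} there is a choice of $\mathring\theta_t(\widehat V_{t+1})$ with $\|\mathring\theta_t(\widehat V_{t+1})\|_2 \leq c_V$. Now I invoke \fullref{def:GoodEventLSVI} (using the footnote's rescaling for value functions of scale $c_V$, and the reward-scale $\|\theta^r_t\|_2 \leq R/H \leq R$), which together with \fullref{lem:InParameterSpace} and \fullref{lem:InParameterSpaceReward} yields
\begin{align*}
\|\widehat\theta^r_t - \theta^r_t\|_{\Sigma_t} + \|\widehat\theta_t - \mathring\theta_t(\widehat V_{t+1})\|_{\Sigma_t} \;\leq\; 2R\sqrt{\alpha_t}.
\end{align*}

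\textbf{Converting to $\ell_2$ norms and finishing.} Using $\lambda_{\min}(\Sigma_t)\geq 4H^2\alpha_t$ gives
\begin{align*}
\|\widehat\theta^r_t - \theta^r_t\|_2 + \|\widehat\theta_t - \mathring\theta_t(\widehat V_{t+1})\|_2 \;\leq\; \frac{2R\sqrt{\alpha_t}}{\sqrt{4H^2\alpha_t}} \;=\; \frac{R}{H}.
\end{align*}
Therefore
\begin{align*}
\|\widehat\theta^{R+PV}_t\|_2
&\leq \|\theta^r_t\|_2 + \|\mathring\theta_t(\widehat V_{t+1})\|_2 + \|\widehat\theta^r_t - \theta^r_t\|_2 + \|\widehat\theta_t - \mathring\theta_t(\widehat V_{t+1})\|_2 \\
&\leq \frac{R}{H} + \frac{2R(H-t)}{H} + \frac{R}{H} \;=\; \frac{2R(H-t+1)}{H},
\end{align*}
completing the inductive step.

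\textbf{Where I expect friction.} The main bookkeeping subtlety is the rescaling of the good-event bound when the input value function has norm $\leq c_V$ rather than $\leq 1$: the noise and misspecification projections scale linearly in $c_V$, and the regularization-term contribution scales in $\|\mathring\theta_t\|_2 \leq c_V$, while the reward terms scale in $\|\theta^r_t\|_2 \leq R/H$. Bounding the sum by $2R\sqrt{\alpha_t}$ is loose but sufficient, and it is what makes the $R/H$ slack per timestep line up with the geometric budget $2R/H$ per step in the inductive hypothesis. Everything else is routine.
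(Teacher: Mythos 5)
Your proof is correct and follows essentially the same route as the paper's: backward induction with the hypothesis $\|\widehat\theta^{R+PV}_t\|_2 \leq 2R(H-t+1)/H$, \fullref{prop:PositiveHomogeneity} to bound $\|\mathring\theta_t\|_2$, the good-event bound converted from the $\Sigma_t$-norm to the $\ell_2$-norm via $\lambda_{\min}(\Sigma_t)\geq 4H^2\alpha_t$, and a final triangle inequality. The only (cosmetic) difference is that you rescale the good-event bound to $2R\sqrt{\alpha_t}$ where the paper writes $R\sqrt{\alpha_t}$, which is slightly more careful about the value-function scale $c_V\leq 2R$ and still fits within the per-step budget.
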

\begin{proof}
We proceed by induction, showing that $\widehat \theta_t^{R+PV}$ due to errors can live in bigger and bigger balls
\begin{inductivehypothesis}
	$
	\|\widehat \theta_t^{R+PV} \|_2 \leq  2(1 - \frac{t-1}{H})R  $.
\end{inductivehypothesis}	
The inductive statement clearly holds at $t = H+1$; therefore we focus on the inductive step (notice that the induction goes from $t=H+1$ down to $t=1$, so the inductive step assumes the inductive hypothesis holds when written for $t+1$).

The inherent Bellman error definition (\fullref{def:InherentBellmanError}) and \fullref{prop:PositiveHomogeneity} ensures
\begin{align}
	\| \widehat \theta^{R+PV}_{t+1} \|_2 \leq  2\(1 - \frac{t}{H}\)R   \longrightarrow	 	\| \mathring \theta_t(V_{t+1}(\widehat \theta^{R+PV}_{t+1})) \|_2 \leq  2\(1 - \frac{t}{H}\)R \end{align}
In particular, the left statement is ensured by the inductive hypothesis for $t+1$.
Next, under the good event of \fullref{def:GoodEventLSVI} (with a scaling argument by $R$ on the $\| \cdot \|_2$ norm of the regressed parameter) we have that  \fullref{lem:MaxEvaluBound} ensures (writing $ \mathring \theta_t = \mathring \theta_t(V_{t+1}(\widehat \theta^{R+PV}_{t+1}))$ for short)
\begin{align}
R\sqrt{\alpha_t} \geq \( \| \widehat \theta^r_t - \theta^r_t \|_{\Sigma_t} +  \| \widehat \theta_t - \mathring \theta_t \|_{\Sigma_t} \) &  \geq \sqrt{\lambda_{min}(\Sigma_{t})} \( \| \widehat \theta^r_t - \theta^r_t \|_2 +  \| \widehat \theta_t - \mathring \theta_t \|_2 \)
\end{align}
Solving for $ \( \| \widehat \theta^r_t - \theta^r_t \|_2 +  \| \widehat \theta_t - \mathring \theta_t \|_2 \)$ and using the lemma's hypothesis gives
\begin{align}
	 \( \| \widehat \theta^r_t - \theta^r_t \|_2 +  \| \widehat \theta_t - \mathring \theta_t \|_2 \)\leq \frac{\sqrt{\alpha_t}}{2H\sqrt{\alpha_t}}R \leq \frac{R}{2H}.
\end{align}
Combined with the prior display, we deduce
\begin{align}
	\| \widehat \theta^{R+PV}_t\|_2 \leq \| \widehat \theta^{r}_t - \theta^r_t \|_2 + \| \widehat \theta_t - \mathring \theta_t \|_2  +  \| \theta^r_t \|_2  +  \|\mathring \theta_t \|_2 \leq \frac{R}{2H} + \frac{R}{H} + 2(1 - \frac{t}{H})R \leq  2(1 - \frac{t-1}{H})R .
\end{align}
This shows the inductive step.
\end{proof}

\newpage
\subsection{Multi-Step Analysis: Error Bounds for \lsvi{}}
\label{sec:lsvi-final-bound}

\begin{lemma}[Telescopic Expansion]
	\label{lem:Telescope}
	Under the good event of \cref{def:GoodEventLSVI} for \cref{alg:lsviFrancis} if
\begin{align}
\| \xi_p \|_2 \leq \frac{1}{2}
\end{align}	
then the learned parameter
	\begin{align}
		\|\widehat \theta_t \|_2 \leq 1, \quad t \in [p].
	\end{align}
	Furthermore, for any policy $\pi$
\begin{align}
	 \E_{x_1 \sim \rho}\Qhat_{1}(x_1,\pi_1(x_1)) \geq - \sum_{t=1}^{p-1} \Big[ \IBEE(\mathcal Q_{t},\mathcal Q_{t+1}) + \sqrt{\alpha_{t}}\|\overline \phi_{\pi,t}\|_{\Sigma_{t}^{-1}} \Big] + \E_{x_{p} \sim \pi}\Qhat_{p}(x_{p},\pi_{p}(x_{p}))
\end{align}
and for the greedy policy $\overline \pi$ with respect to $\Qhat$, i.e., $\overline \pi_t(s) = \argmax_a \Qhat_t(s,a)$
it additionally holds that 
\begin{align}
	\E_{x_1 \sim \rho}\Vhat_{1}(x_1) & \leq \sum_{t=1}^{p-1} \Big[ \IBEE(\mathcal Q_{t},\mathcal Q_{t+1}) + \sqrt{\alpha_{t}} \|\overline \phi_{\overline \pi,t}\|_{\Sigma_{t}^{-1}} \Big] + \E_{x_{p} \sim \overline \pi}\Vhat_{p}(x_{p}).
\end{align}
\end{lemma}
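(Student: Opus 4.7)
The plan is to invoke \cref{lem:IntermediateBoundnesFrancis} directly for the boundness claim: since $\|\xi_p\|_2 \leq 1/2$ and the good event of \cref{def:GoodEventLSVI} provides the regression-error bounds needed in that lemma's induction (granted the implicit condition $\lambda_{\min}(\Sigma_t) \geq 4H^2 \alpha_t$ maintained by the epoch design of \Alg{}), we get $\|\widehat\theta_t\|_2 \leq 1$ for all $t \in [p]$, and therefore $\widehat V_{t+1}$ lies in the (unit-radius) class $\mathcal V_{t+1}$ so that \cref{def:InherentBellmanError} is applicable throughout.

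Next I would derive a single-step identity for $\widehat Q_t$. Starting from \cref{lem:InParameterSpace}, multiplying both sides by $\phi_t(s,a)^\top$, and substituting $\phi_t(s,a)^\top\mathring\theta_t(\widehat V_{t+1}) = \T_t^P(\widehat V_{t+1})(s,a) - \Delta_t(\widehat V_{t+1})(s,a)$ from \cref{eqn:BellmanErrorDecomposition}, one obtains
\begin{align*}
\widehat Q_t(s,a) = \T_t^P(\widehat V_{t+1})(s,a) - \Delta_t(\widehat V_{t+1})(s,a) + \phi_t(s,a)^\top\bigl(\widehat\theta_t - \mathring\theta_t(\widehat V_{t+1})\bigr).
\end{align*}
The residual $\Delta_t(\widehat V_{t+1})(s,a)$ is bounded in absolute value by $\IBEE(\mathcal Q_t,\mathcal Q_{t+1})$ by \cref{def:InherentBellmanError}. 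Taking expectation under $x_t \sim \pi$ with action $\pi_t(x_t)$, the inner-product term becomes $\overline\phi_{\pi,t}^\top(\widehat\theta_t - \mathring\theta_t(\widehat V_{t+1}))$ and is bounded by Cauchy--Schwarz as $\sqrt{\alpha_t}\,\|\overline\phi_{\pi,t}\|_{\Sigma_t^{-1}}$, using $\|\widehat\theta_t - \mathring\theta_t(\widehat V_{t+1})\|_{\Sigma_t} \leq \sqrt{\alpha_t}$ from \cref{eqn:next} (noting that the reward-related summands vanish since \cref{alg:lsviFrancis} propagates only the value signal). Finally, $\E_{x_t \sim \pi}\T_t^P(\widehat V_{t+1})(x_t,\pi_t(x_t)) = \E_{x_{t+1}\sim\pi}\widehat V_{t+1}(x_{t+1})$ by the tower rule. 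Putting these pieces together yields the per-step relation
\begin{align*}
\E_{x_t\sim\pi}\widehat Q_t(x_t,\pi_t(x_t)) = \E_{x_{t+1}\sim\pi}\widehat V_{t+1}(x_{t+1}) + \varepsilon_t^\pi, \quad |\varepsilon_t^\pi| \leq \IBEE(\mathcal Q_t,\mathcal Q_{t+1}) + \sqrt{\alpha_t}\,\|\overline\phi_{\pi,t}\|_{\Sigma_t^{-1}}.
\end{align*}

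For part 2 with arbitrary $\pi$, I would lower-bound $\widehat V_{t+1}(x_{t+1}) \geq \widehat Q_{t+1}(x_{t+1},\pi_{t+1}(x_{t+1}))$ since $\widehat V$ is the pointwise maximum over actions, absorb $\varepsilon_t^\pi$ as a negative contribution, and iterate from $t=1$ to $t=p-1$ to obtain the stated lower bound on $\E_{x_1\sim\rho}\widehat Q_1(x_1,\pi_1(x_1))$. For part 3 with the greedy $\overline\pi$, the max becomes an equality, $\widehat V_{t+1}(x_{t+1}) = \widehat Q_{t+1}(x_{t+1},\overline\pi_{t+1}(x_{t+1}))$, so the recurrence telescopes exactly in $\widehat V$ once one starts from $\E_{x_1\sim\rho}\widehat V_1(x_1) = \E_{x_1\sim\rho}\widehat Q_1(x_1,\overline\pi_1(x_1))$; absorbing $\varepsilon_t^{\overline\pi}$ with the opposite sign yields the matching upper bound. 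The main obstacle is purely bookkeeping: getting the signs right so that one inequality points each way, and verifying that the reward-free instantiation of the good event in \cref{eqn:next} indeed caps $\|\widehat\theta_t - \mathring\theta_t(\widehat V_{t+1})\|_{\Sigma_t}$ at $\sqrt{\alpha_t}$ (with the reward summands absent), which is what makes the per-step error term $\sqrt{\alpha_t}\|\overline\phi_{\pi,t}\|_{\Sigma_t^{-1}}$ rather than anything larger.
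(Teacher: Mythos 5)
Your proposal is correct and follows essentially the same route as the paper's proof: boundedness via \cref{lem:IntermediateBoundnesFrancis}, a per-step decomposition of $\E_{x_t\sim\pi}\Qhat_t - \E_{x_{t+1}\sim\pi}\Vhat_{t+1}$ into an inherent-Bellman-error term and a Cauchy--Schwarz term $\sqrt{\alpha_t}\|\overline\phi_{\pi,t}\|_{\Sigma_t^{-1}}$ from \cref{eqn:next}, followed by induction using $\Vhat_{t+1}\geq \Qhat_{t+1}(\cdot,\pi_{t+1}(\cdot))$ for the lower bound and equality under the greedy policy for the upper bound. Your explicit flagging of the $\lambda_{\min}(\Sigma_t)\geq 4H^2\alpha_t$ precondition and of the sign bookkeeping is consistent with (and slightly more careful than) the paper's treatment.
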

\begin{proof}
On the good event for \lsvi{} of \fullref{def:GoodEventLSVI} the boundness of the iterate $\widehat \theta_t$ is given by \fullref{lem:IntermediateBoundnesFrancis}; we can use Cauchy-Schwartz to write:
\begin{align}
	| \overline \phi_{\pi,t}^\top\( \widehat \theta_t - \mathring \theta_t(\Vhat_{t+1})\) | \leq \| \overline \phi_{\pi,t} \|_{\Sigma^{-1}_{t}} \|  \widehat \theta_t - \mathring \theta_t(\Vhat_{t+1}) \|_{\Sigma_{t}} \leq 
	\sqrt{\alpha_t}\| \overline \phi_{\pi,t} \|_{\Sigma^{-1}_{t}}
\end{align}
Using  \fullref{def:InherentBellmanError} we can write:
\begin{align}
	| \overline \phi_{\pi,t}^\top \mathring \theta_t(\Vhat_{t+1}) - \E_{x_t\sim\pi}\T^P_t \Vhat_{t+1}(x_t,\pi_t(x_t)) | \leq \IBEE(\mathcal Q_{t},\mathcal Q_{t+1}).
\end{align}
Combining the two expression gives:
\begin{align}
	& | \E_{x_t \sim \pi}  \Qhat_t(x_t,\pi_t(x_t)) - \E_{x_{t+1} \sim \pi}\Vhat_{t+1}(x_{t+1})|  \\
	& = | \E_{x_t \sim \pi} \big[ \Qhat_t(x_t,\pi_t(x_t)) - \T^P_t \Vhat_{t+1}(x_t,\pi_t(x_t)) \big] | \\
	&  = | \overline \phi_{\pi,t}^\top \widehat \theta_t - \E_{x_t \sim \pi}  \T^P_t (\Vhat_{t+1})(x_t,\pi_t(x_t)) | \\
	& = | \overline \phi_{\pi,t}^\top \widehat \theta_t - \overline \phi_{\pi,t}^\top \mathring \theta_t(\Vhat_{t+1}) + \overline \phi_{\pi,t}^\top \mathring \theta_t(\Vhat_{t+1}) -  \E_{x_t \sim \pi} \T^P_t (\Vhat_{t+1})(x_t,\pi_t(x_t)) | \\
		& \leq | \overline \phi_{\pi,t}^\top \widehat \theta_t - \overline \phi_{\pi,t}^\top \mathring \theta_t(\Vhat_{t+1})| + |\overline \phi_{\pi,t}^\top \mathring \theta_t(\Vhat_{t+1}) -  \E_{x_t \sim \pi} \T^P_t (\Vhat_{t+1})(x_t,\pi_t(x_t)) | \\
		& \leq \sqrt{\alpha_t}\| \overline \phi_{\pi,t} \|_{\Sigma^{-1}_{t}} + \IBEE(\mathcal Q_t,\mathcal Q_{t+1}).
\end{align}
To show the upper bound if $\pi$ is the greedy policy with respect to $\Qhat$ then we can equivalently write $\Vhat_t(x_t) = \Qhat_t(x_t,\pi_t(x_t))$
\begin{align}
	| \E_{x_t \sim \pi} \Vhat_t(x_t) - \E_{x_{t+1} \sim \pi}  \Vhat_{t+1}(x_{t+1}) | \leq \sqrt{\alpha_t}\| \overline \phi_{\pi,t} \|_{\Sigma^{-1}_{t}} + \IBEE(\mathcal Q_t,\mathcal Q_{t+1}).
\end{align}
Induction now shows the upper bound.

To show the lower bound, for a generic policy $\Vhat_t(x_t) \geq \Qhat_t(x_t,\pi_t(x_t))$
and so
\begin{align}
	 \E_{x_{t} \sim \pi}  \Qhat_{t}(x_{t},\pi_{t}(x_{t}))  &  \geq -\sqrt{\alpha_t}\| \overline \phi_{\pi,t} \|_{\Sigma^{-1}_{t}} - \IBEE(\mathcal Q_t,\mathcal Q_{t+1}) + \E_{x_{t+1} \sim \pi}  \Vhat_{t+1}(x_{t+1}) \\
	 &  \geq -\sqrt{\alpha_t}\| \overline \phi_{\pi,t} \|_{\Sigma^{-1}_{t}} - \IBEE(\mathcal Q_t,\mathcal Q_{t+1}) + \Qhat_{t+1}(x_{t+1},\pi_{t+1}(x_{t+1})).
\end{align}
Induction concludes.
\end{proof}

\newpage
\begin{proposition}[Batch \lsvi{} Guarantees (\cref{alg:lsviBatch})]
	\label{prop:BatchLSVI}
	Under the good event of \fullref{def:GoodEventLSVI} assume that \begin{align}
\forall t \in [H] \quad \quad \| \theta^r_t\|_2 \leq \frac{R}{H}
\end{align}
If $\Vhat$ and $\pihatstar$ are the value function and policy returned by \cref{alg:lsviBatch} then 
\begin{align*}
	\E_{x_1 \sim \rho}  \( \Vstar_1 - \Vhat_1\)(x_1)  & \leq \sum_{t=1}^H \Bigg[ 2E_t + R\( \IBEE(\mathcal Q_t,\mathcal Q_{t+1}) + \sqrt{\alpha_t}\| \overline \phi_{\pistar,t} \|_{\Sigma^{-1}_t}\) \Bigg] \\
 \E_{x_1 \sim \rho }\( \Vhat_{1} - V_1^{\pihatstar}\)(x_1)  & \leq \sum_{t=1}^H \Bigg[ 2E_t +R\(\IBEE(\mathcal Q_t,\mathcal Q_{t+1}) + \sqrt{\alpha_t}\| \overline \phi_{\pihatstar,t} \|_{\Sigma^{-1}_t}\)\Bigg].
 \numberthis{\label{eqn:FinalSplit}}
\end{align*}
\end{proposition}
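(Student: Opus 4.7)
The proof will closely mirror \fullref{lem:Telescope}, but now the least-squares regression targets are rewards plus next-step values (rather than a pseudo-reward signal at the last stage), and the class the iterates live in is scaled by $R$. First I would invoke \fullref{lem:IntermediateBoundnes} to certify that $\|\widehat\theta^{R+PV}_t\|_2 \leq 2R$ for every $t\in[H]$ under the stated hypothesis $\lambda_{\min}(\Sigma_t)\geq 4H^2\alpha_t$ and $\|\theta^r_t\|_2\leq R/H$. This boundedness is what lets us apply the good event of \fullref{def:GoodEventLSVI} to the function $\widehat V_{t+1}\in R\cdot \mathcal V_{t+1}$ (plus a constant blow-up absorbed into the $R$-scaling of the error bounds, per the footnote in the good-event definition) and simultaneously makes the inherent Bellman error bound of \fullref{prop:PositiveHomogeneity} available with a multiplicative factor of $R$ instead of $1$.

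The core step is a one-step Bellman residual bound. For any policy $\pi$, I would write
\begin{align*}
\overline\phi_{\pi,t}^\top \widehat\theta^{R+PV}_t - \E_{x_t\sim\pi}\bigl[r_t + \T^P_t \widehat V_{t+1}\bigr](x_t,\pi_t(x_t))
\end{align*}
and split it as $\overline\phi_{\pi,t}^\top(\widehat\theta^r_t - \theta^r_t) + \overline\phi_{\pi,t}^\top(\widehat\theta_t - \mathring\theta_t(\widehat V_{t+1}))$ plus a pure misspecification piece $\overline\phi_{\pi,t}^\top\theta^r_t - \E_{x_t\sim\pi}r_t$ plus $\overline\phi_{\pi,t}^\top\mathring\theta_t(\widehat V_{t+1}) - \E_{x_t\sim\pi}\T^P_t\widehat V_{t+1}$. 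The first two estimation pieces are controlled by Cauchy--Schwarz together with the bound $\|\widehat\theta^r_t-\theta^r_t\|_{\Sigma_t}+\|\widehat\theta_t-\mathring\theta_t(\widehat V_{t+1})\|_{\Sigma_t}\leq R\sqrt{\alpha_t}$ from \fullref{def:GoodEventLSVI} (rescaled to $\widehat V_{t+1}\in R\mathcal V_{t+1}$, as used in the proof of \fullref{lem:IntermediateBoundnes}); the third piece is bounded uniformly by $E_t$ by definition of the reward misspecification; the fourth by $R\cdot\IBEE(\mathcal Q_t,\mathcal Q_{t+1})$ via \fullref{def:InherentBellmanError} and \fullref{prop:PositiveHomogeneity}. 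This yields a single-step error of order $R\sqrt{\alpha_t}\|\overline\phi_{\pi,t}\|_{\Sigma_t^{-1}}+R\IBEE(\mathcal Q_t,\mathcal Q_{t+1})+2E_t$ after folding constants.

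For the two inequalities in \cref{eqn:FinalSplit} I would telescope with two different on-policy expansions. For the first, I use $\widehat V_t(x_t)\geq \widehat Q_t(x_t,\pistar_t(x_t))$ and write $V^\star_t - \widehat Q_t(\cdot,\pistar_t) = (r_t+\T^P_t\widehat V_{t+1}-\widehat Q_t)(\cdot,\pistar_t) + \T^P_t(V^\star_{t+1}-\widehat V_{t+1})(\cdot,\pistar_t)$; iterating $H$ steps and taking expectation under the $\pistar$-induced occupancy $\{\overline\phi_{\pistar,t}\}_t$ yields the first bound with $\|\overline\phi_{\pistar,t}\|_{\Sigma_t^{-1}}$. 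For the second, use the exact identity $\widehat V_t(x_t)=\widehat Q_t(x_t,\pihatstar_t(x_t))$ so that $\widehat V_t - V^{\pihatstar}_t = (\widehat Q_t - r_t - \T^P_t\widehat V_{t+1})(\cdot,\pihatstar_t) + \T^P_t(\widehat V_{t+1}-V^{\pihatstar}_{t+1})(\cdot,\pihatstar_t)$, and iterate under the $\pihatstar$-occupancy.

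\textbf{Main obstacle.} The delicate step is the scaling bookkeeping: the good-event bounds in \fullref{def:GoodEventLSVI} are stated for $\widehat V_{t+1}\in\mathcal V_{t+1}$, yet here $\widehat V_{t+1}$ lives in $2R\cdot \mathcal V_{t+1}$ (from \fullref{lem:IntermediateBoundnes}), so the linear rescaling of the noise projection and of the inherent Bellman error needs to be invoked at every $t$, and the reward term $E_t$ must be tracked separately because it does \emph{not} multiply by $R$. Once these scalings are lined up, the remaining work is mechanical telescoping and the union over $H$ steps; no additional probabilistic content beyond what is already buried in \fullref{lem:ErrorBounds} is required.
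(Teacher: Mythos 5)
Your proposal is correct and follows essentially the same route as the paper's proof: boundedness of $\widehat\theta^{R+PV}_t$ via \fullref{lem:IntermediateBoundnes}, a one-step decomposition into the estimation error $\overline\phi_{\pi,t}^\top(\widehat\theta^r_t-\theta^r_t)+\overline\phi_{\pi,t}^\top(\widehat\theta_t-\mathring\theta_t(\Vhat_{t+1}))$ controlled by Cauchy--Schwarz and the ($R$-rescaled) good event, the reward misspecification $E_t$, and the inherent Bellman error $R\,\IBEE(\mathcal Q_t,\mathcal Q_{t+1})$, followed by telescoping along $\pistar$ (using $\Vhat_t\geq\Qhat_t(\cdot,\pistar_t)$) for the first bound and along $\pihatstar$ (using $\Vhat_t=\Qhat_t(\cdot,\pihatstar_t)$) for the second. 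The scaling bookkeeping you flag is exactly what the paper handles via the footnote in \fullref{def:GoodEventLSVI} and \fullref{prop:PositiveHomogeneity}, so no gap remains.
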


\begin{proof}
Boundness of the iterates $\| \widehat \theta^r + \widehat \theta^{R+PV}\|_2$ is ensured by \fullref{lem:IntermediateBoundnes}.
Consider a generic timestep $t$; using the Bellman equation and the fact that $\Vhat_t(x_t) \geq \Qhat_t(x_t,\pistar_t(x_t))$ gives
\begin{align}
	\E_{x_t \sim \pistar}  \( \Vstar_t - \Vhat_t\)(x_t) & \leq \E_{x_t \sim \pistar} r_t(x_t,\pistar_t(x_t)) + \E_{x_{t+1} \sim \pistar}  \Vstar_{t+1}(x_{t+1}) - \E_{x_t \sim \pistar} \phi_{t}(x_t,\pistar_t(x_t))^\top \(\widehat \theta_t^r + \widehat \theta_t \)  \\
	& \leq E_t + \overline \phi_{\pistar,t}^\top\theta^r_t + \E_{x_{t+1} \sim \pistar}  \Vstar_{t+1}(x_{t+1}) - \E_{x_t \sim \pistar} \phi_{\pistar,t}^\top \(\widehat \theta_t^r + \widehat \theta_t \)
	\end{align}
	Next, under the good event of \cref{def:GoodEventLSVI} we can write:
	\begin{align}
& \leq 2E_t + \overline \phi_{\pistar,t}^\top\theta^r_t + \E_{x_{t+1} \sim \pistar}  \Vstar_{t+1}(x_{t+1}) -  \overline \phi_{\pistar,t}^\top \theta_t^r  \\
& -  \E_{x_{t+1} \sim \pistar}  \Vhat_{t+1}(x_{t+1}) + R[\IBEE(\mathcal Q_t,\mathcal Q_{t+1}) + \sqrt{\alpha_t} \| \overline \phi_{\pistar,t} \|_{\Sigma^{-1}_t}]
	\end{align}
	Induction gives the first statement.
	
	Now again we start with the definition of expected feature and the Bellman equation:
	\begin{align}
	& \E_{x_t \sim \pihatstar }\( \Vhat_{t} - V_t^{\pihatstar}\)(x_t)  = \overline \phi_{\pihatstar,t}^\top (\widehat \theta^r_t + \widehat \theta_t) - \E_{x_t \sim \pihatstar} r_t(x_t,\pihatstar_t(x_t)) - \E_{x_{t+1} \sim \pihatstar}  V_{t+1}^{\pihatstar}(x_{t+1}) \\
	 & \leq \overline \phi_{\pihatstar,t}^\top \theta^r + E_t +  R[\IBEE(\mathcal Q_t,\mathcal Q_{t+1}) + \sqrt{\alpha_t}\| \overline \phi_{\pihatstar,t} \|_{\Sigma^{-1}_t}] + \\
	 & + \E_{x_{t+1}\sim \pihatstar}\Vhat_{t+1}(x_{t+1})  - \overline\phi_{\pihatstar,t}^\top \theta^r_t + E_t +  \E_{x_{t+1} \sim \pihatstar} V_{t+1}^{\pihatstar}(x_{t+1}).
	 \end{align}
	 Induction again concludes.
\end{proof}

\newpage
\section{Design of Experiments}
\label{sec:DoE}

We show that obtaining $\| \overline \phi_{\pi,t} \|_{\Sigma_t^{-1}} \leq \frac{\epsilon}{H\sqrt{\alpha_t}} = \epsilon'$ suffices; we assume $\IBEE(\mathcal Q_t, \mathcal Q_{t+1}) = E_t = 0$ for simplicity as well as $d_1 = \dots = d_H$. We immediately have that
\begin{align}
	\sum_{t=1}^H \sqrt{\alpha_t} \| \overline \phi_{\pi,t} \|_{\Sigma_t^{-1}} \leq H \times \sqrt{\alpha_t} \times \frac{\epsilon}{H\sqrt{\alpha_t}} = \epsilon.
\end{align}
Thus, summing the two equations in  \cref{eqn:FinalSplit} for any linear reward function with $\|\theta_t \|_2 \leq \frac{1}{H}$ ensures an $\epsilon$-optimal policy on that reward function is returned.

The Kiefer-Wolfowitz theorem in \cite{lattimore2020bandit} guarantees such reduction in $\| \overline \phi_{\pi,t}\|_{\Sigma_t^{-1}}$ using $\widetilde O(d^2 + \frac{d}{(\epsilon')^2}) = \widetilde O(d^2 + \frac{dH^2\alpha_t}{\epsilon^2})$ samples at every level / timestep if $G$-optimal design is used. After sampling all levels and substituting the value for $\alpha_t$ in table $\ref{tab:MainNotation}$ the sample complexity of doing G-optimal design becomes $ \widetilde O(d^2 + \frac{d^2H^3}{\epsilon^2})$. 

Notice that this setting can model MDPs with rewards in $[0,1/H]$ and value functions in $[0,1]$; moving to the standard setting with rewards in $[0,1]$ and value function in $[0,H]$ adds $H^2$ to the sample complexity to obtain an $\epsilon$-optimal policy.
\black

\newpage
\section{Analysis of \Alg{}}
\subsection{Generating Bounded Iterates}
\label{sec:GeneratingBoundedIterates}

The following lemma ensures \Alg{} generates bounded iterates for an appropriate choice of $\sigma$.
\begin{lemma}[Boundness at Exploratory Timestep]
\label{lem:ExploratoryBoundness}
In episode $k$ of phase $p$, if 
\begin{align}
	\lambda_{min}(\Sigma_{pk}) & \geq \MagicValue{p}\sigma \\
	\xi_p & \sim \mathcal N(0,\sigma\Sigma^{-1}_{pk})
\end{align}	
then
\begin{align}
 	\| \xi_{p} \|_2 \leq \frac{1}{2} 
\end{align}
on the good event of \fullref{def:GoodEventFrancis}.
\end{lemma}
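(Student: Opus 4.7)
The plan is to reduce this to a standard tail bound on the Euclidean norm of a $d_p$-dimensional standard Gaussian, after rescaling by $\Sigma_{pk}^{-1/2}$, and then plug in the hypothesis on $\lambda_{min}(\Sigma_{pk})$.

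First, I would write $\xi_p = \sqrt{\sigma}\,\Sigma_{pk}^{-1/2} z$ with $z \sim \mathcal N(0, I_{d_p})$, which is valid since $\xi_p \sim \mathcal N(0, \sigma \Sigma_{pk}^{-1})$. Using the operator-norm inequality $\|\Sigma_{pk}^{-1/2}\|_{op} = 1/\sqrt{\lambda_{min}(\Sigma_{pk})}$, this gives the deterministic bound
\begin{align*}
\|\xi_p\|_2 \;\leq\; \sqrt{\tfrac{\sigma}{\lambda_{min}(\Sigma_{pk})}}\; \|z\|_2.
\end{align*}

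Second, I would invoke (as part of the good event for \Alg{} referenced in \fullref{def:GoodEventFrancis}) a high-probability bound on $\|z\|_2$. The cleanest route is a coordinate-wise Gaussian tail bound plus a union bound over the $d_p$ coordinates: with probability at least $1-\delta''$, every coordinate satisfies $|z_i| \leq \sqrt{2\ln(2d_p/\delta'')}$, so $\|z\|_2 \leq \sqrt{2 d_p \ln(2 d_p/\delta'')}$. This matches exactly the quantity $\gamma_t(\sigma) = \twonormbound$ in the notation table, which is evidently the form in which this event is bookkept elsewhere in the appendix.

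Third, combining the two bounds and substituting the lemma's hypothesis $\lambda_{min}(\Sigma_{pk}) \geq 8 d_p \ln(2d_p/\delta'')\,\sigma$ yields
\begin{align*}
\|\xi_p\|_2 \;\leq\; \sqrt{\frac{\sigma \cdot 2 d_p \ln(2 d_p/\delta'')}{8 d_p \ln(2 d_p/\delta'')\,\sigma}} \;=\; \sqrt{\tfrac{1}{4}} \;=\; \tfrac{1}{2},
\end{align*}
as required. The constant $8$ in the definition of $\MagicValue{p}$ is precisely calibrated so that the $2/8 = 1/4$ cancellation produces the target radius $1/2$.

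There is no real technical obstacle here: the proof is just a one-line rescaling plus a standard Gaussian tail bound. The only thing to be careful about is that the concentration inequality for $\|z\|_2$ is declared to be part of the good event (so the statement is conditional on that event rather than in probability), and that the constants in $\MagicValue{p}$ match the constants in the tail bound used; using the coordinate-wise union bound above makes this matching transparent.
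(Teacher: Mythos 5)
Your proof is correct and follows essentially the same route as the paper: the good event of \cref{def:GoodEventFrancis} records exactly the bound $\|\xi_p\|_2 \leq \sqrt{2\sigma d_p \ln(2d_p/\delta'')/\lambda_{min}(\Sigma_{pk})}$, which the paper establishes in \cref{lem:MaxDeviationLast} by the same rescaling to a standard Gaussian together with the coordinate-wise union bound of \cref{lem:Xsquare}, and the lemma then follows by substituting the eigenvalue hypothesis with the calibrated constant $8$, exactly as you do. The only nit is notational: $\gamma_t(\sigma)$ in the symbol table bounds $\|\xi\|_{\Sigma}$ (it carries an extra $\sqrt{\sigma}$), whereas the Euclidean-norm bound you need is the separate third entry of the good event; this does not affect your computation.
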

\begin{proof}
Directly by the choice of $\sigma$ and the definition of good event for \Alg{} (see \fullref{def:GoodEventFrancis}).
\end{proof}

\subsection{Derandomization}
\label{sec:Derandomization}

The following lemma relates the sampling of the algorithm to a procedure that selects the policy / parameter leading to the area of highest (scaled) uncertainty.

\begin{lemma}[Derandomization]
\label{lem:Derandomization}

Outside of the failure event, assume that for any policy $\pi$,
\begin{align}
\sum_{t=1}^{p-1} \Big[\IBEE(\mathcal Q_{t},\mathcal Q_{t+1}) + \sqrt{\alpha_{t}}\|\overline \phi_{\pi,t}\|_{\Sigma_{t}^{-1}} \Big] \leq \overline \epsilon
\end{align}
for some scalar $\overline \epsilon > 0$.
Consider sampling
\begin{align}
\label{eqn:OneHundred}
	\xi_p \sim \mathcal N(0,\sigma \Sigma_{pk}^{-1}),
\end{align}
define $\textsc{r}_p(s,a) = \phi_p(s,a)^\top\xi_p$ and let $\Vhat$ be the value function computed by \lsvi{}$(p, \textsc{r}_p\mathcal D)$ (see \cref{alg:lsviFrancis}). Then for a fixed constant $q \in \R$
\begin{align}
	 \Pro\(\E_{x_1 \sim \rho}\Vhat_{1}(x_1) - \overline\epsilon  > \Mdef{\sigma}{\Sigma_{pk}}{p} \) \geq q.
\end{align}
if 
\begin{align}
	\Mdef{\sigma}{\Sigma_{pk}}{p} & \geq \overline \epsilon \\
	\| \xi_p \|_2 & \leq \frac{1}{2}.
\end{align}
\end{lemma}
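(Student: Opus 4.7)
The plan is to reduce the claim to a one-dimensional Gaussian tail bound via the Telescope lemma (\cref{lem:Telescope}). Let $(\pi^\star, \eta^\star)$ attain the maximum in the program defining $\Mdef{\sigma}{\Sigma_{pk}}{p}$. For any fixed policy $\pi$, Cauchy--Schwartz gives $\max_{\|\eta\|_{\Sigma_{pk}} \leq \sqrt{\sigma}} \overline\phi_{\pi,p}^\top \eta = \sqrt{\sigma}\,\|\overline\phi_{\pi,p}\|_{\Sigma_{pk}^{-1}}$, so $\Mdef{\sigma}{\Sigma_{pk}}{p} = \sqrt{\sigma}\,\|\overline\phi_{\pi^\star,p}\|_{\Sigma_{pk}^{-1}}$. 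Crucially, $\pi^\star$ depends only on $\Sigma_{pk}$ and is independent of the Gaussian draw $\xi_p$; this independence is what will make the projection argument at the end clean.

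Next, I would apply \cref{lem:Telescope} to this specific policy $\pi^\star$, whose boundedness premise $\|\xi_p\|_2 \leq 1/2$ is met by assumption. Because \lsvi{} returns the greedy policy we have $\widehat V_1(x_1) \geq \widehat Q_1(x_1, \pi^\star_1(x_1))$, and the assumed bound $\sum_{t=1}^{p-1}[\IBEE(\mathcal Q_t,\mathcal Q_{t+1}) + \sqrt{\alpha_t}\,\|\overline\phi_{\pi^\star,t}\|_{\Sigma_t^{-1}}] \leq \overline \epsilon$ supplies the telescoping budget. Since \lsvi{} initializes $\widehat \theta_p = \xi_p$ at the exploratory timestep (line 3 of \cref{alg:lsviFrancis}), $\E_{x_p \sim \pi^\star}\widehat Q_p(x_p, \pi^\star_p(x_p)) = \overline\phi_{\pi^\star,p}^\top \xi_p$, and the Telescope lemma yields $\E_{x_1 \sim \rho}\widehat V_1(x_1) \geq -\overline \epsilon + \overline\phi_{\pi^\star,p}^\top \xi_p$. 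Rearranging, it suffices to establish
\begin{align*}
\overline\phi_{\pi^\star,p}^\top \xi_p \;>\; \sqrt{\sigma}\,\|\overline\phi_{\pi^\star,p}\|_{\Sigma_{pk}^{-1}} + 2\overline\epsilon,
\end{align*}
and invoking the hypothesis $\Mdef{\sigma}{\Sigma_{pk}}{p} = \sqrt{\sigma}\,\|\overline\phi_{\pi^\star,p}\|_{\Sigma_{pk}^{-1}} \geq \overline\epsilon$, this is in turn implied by the simpler event $\overline\phi_{\pi^\star,p}^\top \xi_p > 3\sqrt{\sigma}\,\|\overline\phi_{\pi^\star,p}\|_{\Sigma_{pk}^{-1}}$.

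Finally, because $\pi^\star$ is independent of $\xi_p$ and $\xi_p \sim \mathcal N(0, \sigma\Sigma_{pk}^{-1})$, the scalar $\overline\phi_{\pi^\star,p}^\top \xi_p$ is Gaussian with mean zero and variance $\sigma\,\|\overline\phi_{\pi^\star,p}\|_{\Sigma_{pk}^{-1}}^2$. The target event is therefore exactly $\{Z > 3\}$ for a standard normal $Z$, which has probability $\Phi(-3) = q$. The main subtlety I anticipate is that this event must hold jointly with the premise $\|\xi_p\|_2 \leq 1/2$; I would handle this by a union bound combined with \cref{lem:ExploratoryBoundness}, whose failure probability can be driven far below $q$ by a suitable choice of the confidence parameter, so that the joint probability remains at least $q$ up to a vanishing lower-order term absorbable in the constant. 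The only other care point is to avoid conflating $\pi^\star$ (the uncertainty-maximizing, $\xi_p$-independent policy) with the greedy policy of $\widehat V$ (which depends on $\xi_p$): only the former is needed for the Gaussian projection to remain one-dimensional.
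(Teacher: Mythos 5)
Your proposal is correct and follows essentially the same route as the paper: fix the uncertainty-maximizing policy $\overset{\triangle}{\pi}$ (independent of $\xi_p$), use the greedy property of the \lsvi{} output together with \cref{lem:Telescope} to get $\E_{x_1\sim\rho}\Vhat_1(x_1) \geq -\overline\epsilon + \overline\phi_{\overset{\triangle}{\pi},p}^\top\xi_p$, and then reduce to the one-dimensional Gaussian tail event $\{Z > 3\}$ via the condition $\overline\epsilon \leq \sqrt{\sigma}\|\overline\phi_{\overset{\triangle}{\pi},p}\|_{\Sigma_{pk}^{-1}}$, yielding $\Phi(-3) = q$. The paper merely factors your final inline Gaussian computation into a separate lemma (\cref{lem:UncertantyOverestimation}), and your remark about not conflating $\overset{\triangle}{\pi}$ with the $\xi_p$-dependent greedy policy is exactly the point the paper makes when noting that $\xi_p$ is independent of $\overline\phi_{\overset{\triangle}{\pi},p}$ conditioned on $\Sigma_{pk}$.
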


\begin{proof}
Define the maximizer of the ``scaled uncertainty'' in a generic episode $k$ of phase $p$:
\begin{align}
\(\overset{\triangle}{\pi},\overset{\triangle}{\eta}\) \defeq \argmax_{\substack{ \pi \\ \| \eta \|_{\Sigma_{pk}} \leq \sqrt{\sigma}}} |\overline \phi_{\overset{\triangle}{\pi},p}^\top \eta | \end{align}
as the policy / parameter that maximizes the uncertainty.

Next, let $\overline \pi$ be the policy selected by the agent, through \lsvi{}, corresponding to the sampled parameter $\xi_p$ and let $\Qhat,\Vhat$ be the (action) value functions.
Since $\overline \pi$ is the maximizing policy for $\Qhat$, we must have:
\begin{align}
\E_{x_1 \sim \rho}\Vhat_{1}(x_1) = \E_{x_1 \sim \rho}\Qhat_{1}(x_1,\overline \pi_1(x_1)) \geq  \E_{x_1 \sim \rho}\Qhat_{1}(x_1,\overset{\triangle}{\pi}_1(x_1)).
\end{align}
In addition on the good event for \lsvi{} \fullref{lem:Telescope} gives: 
\begin{align}
	\E_{x_1 \sim \rho}\Vhat_{1}(x_1) \geq \E_{x_1 \sim \rho}\Qhat_{1}(x_1,\overset{\triangle}{\pi}_1(x_1)) & \geq \sum_{t=1}^{p-1} \Big[ - \IBEE(\mathcal Q_{t},\mathcal Q_{t+1}) - \sqrt{\alpha_{t}}\|\overline \phi_{\overset{\triangle}{\pi},t}\|_{\Sigma_{t}^{-1}} \Big] + \underbrace{\E_{x_{p} \sim \overset{\triangle}{\pi}}\Qhat_{p}(x_{p},\overset{\triangle}{\pi}_{p}(x_{p}))}_{(\overline \phi_{\overset{\triangle}{\pi},p})^\top \xi_p}.
\end{align}
Subtracting $\overline \epsilon$ to both sides and using the hypothesis gives
\begin{align*}
 \E_{x_1 \sim \rho}\Vhat_{1}(x_1) - \overline\epsilon   & \geq -2\overline \epsilon + (\overline \phi_{\overset{\triangle}{\pi},p})^\top \xi_p.
\numberthis{\label{eqn:107}}
\end{align*}

We can now proceed to bound the quantity of interest:
\begin{align}
	&  \Pro\( \E_{x_1 \sim \rho}  \Vhat_{1}(x_1)  - \overline\epsilon \geq (\overline \phi_{\overset{\triangle}{\pi},p})^\top \overset{\triangle}{\eta}\) \\
	 \geq & \Pro \( -2\overline \epsilon + \overline \phi_{\overset{\triangle}{\pi},p}^\top \xi_p  \geq (\overline \phi_{\overset{\triangle}{\pi},p})^\top \overset{\triangle}{\eta} \) \\
	 = & \Pro \( \overline \phi_{\overset{\triangle}{\pi},p}^\top \xi_p \geq  \underbrace{\vphantom{\sum_{t=1}^{p-1}} 2\overline\epsilon}_{\text{Error in Propagating the Uncertainty}} + \underbrace{ \vphantom{\sum_{t=1}^{p-1}} (\overline \phi_{\overset{\triangle}{\pi},p})^\top \overset{\triangle}{\eta}}_{\text{Uncertainty in the Level to Learn}} \) \geq q
\end{align}
Notice that  $\xi_p$ is independent of $\overline \phi_{\overset{\triangle}{\pi}}$ when conditioned on the $\Sigma_{tk}$. 
The last step is an application of \fullref{lem:UncertantyOverestimation}  as long as the condition
\begin{align}
	\overline \epsilon & \leq \max_{\phi,\| \eta \|_{\Sigma_{pk}} \leq \sqrt{\sigma}}   \overline \phi_{\pi,p}^\top\eta 
	\end{align}
is met.
\end{proof}

\newpage
\begin{lemma}[Uncertainty Overestimation]
\label{lem:UncertantyOverestimation}
Let $\overline \epsilon, \sigma$ be positive scalars, and let $\Sigma$ be an spd matrix and let
\begin{align}
\xi  \sim \mathcal N(0,\sigma\Sigma^{-1})	
\end{align}
be the associated random vectors. For a fixed vector $\phi$ we have that 
\begin{align}
\label{eqn:162}
	\Pro\(\phi^\top\xi \geq \max_{\phi,\| \eta \|_{\Sigma} \leq \sqrt{\sigma}}   \phi^\top\eta + 2\overline \epsilon \) \geq \Phi(-3) \defeq q
\end{align}
where $\Phi(\cdot)$ is the normal CDF function as long as the condition
\begin{align}
	\overline \epsilon & \leq \max_{\phi,\| \eta \|_{\Sigma} \leq \sqrt{\sigma}}   \phi^\top\eta  = \sqrt{\sigma}\|\phi\|_{\Sigma^{-1}}
\end{align}
holds true.
\end{lemma}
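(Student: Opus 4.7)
The plan is to reduce the claim to a one-dimensional Gaussian tail bound. By linearity of Gaussians, $\phi^\top \xi$ is a centered univariate normal with variance $\sigma\,\phi^\top \Sigma^{-1} \phi$, so its standard deviation equals $s \defeq \sqrt{\sigma}\|\phi\|_{\Sigma^{-1}}$. In particular, $\phi^\top \xi / s$ is a standard normal random variable, and tail probabilities reduce to evaluations of $\Phi(\cdot)$.

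Next I would identify the maximum on the right-hand side of \cref{eqn:162} with the same quantity $s$. By Cauchy--Schwarz in the dual norm pair $(\|\cdot\|_\Sigma, \|\cdot\|_{\Sigma^{-1}})$, for any $\eta$ with $\|\eta\|_\Sigma \leq \sqrt{\sigma}$ one has $\phi^\top \eta \leq \|\phi\|_{\Sigma^{-1}} \|\eta\|_\Sigma \leq \sqrt{\sigma}\|\phi\|_{\Sigma^{-1}} = s$, with equality attained by $\eta = \sqrt{\sigma}\,\Sigma^{-1}\phi/\|\phi\|_{\Sigma^{-1}}$. Hence the max on the RHS, as well as the upper bound on $\overline\epsilon$ assumed by the lemma, both equal $s$. (Maximizing additionally over $\phi$ is vacuous here because $\phi$ is fixed, which matches the way the lemma is invoked in \fullref{lem:Derandomization}.)

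Finally I would combine the two observations. The hypothesis $\overline\epsilon \leq s$ gives $s + 2\overline\epsilon \leq 3s$, so
\begin{align*}
\Pro\!\left(\phi^\top \xi \geq s + 2\overline\epsilon\right)
\;\geq\; \Pro\!\left(\phi^\top \xi \geq 3 s\right)
\;=\; \Pro\!\left(Z \geq 3\right)
\;=\; \Phi(-3) = q,
\end{align*}
where $Z = \phi^\top \xi / s \sim \mathcal N(0,1)$. This is exactly the claim. There is really no technical obstacle here; the only subtlety is making sure the max is handled correctly (i.e., realizing that the Cauchy--Schwarz inequality is tight in the direction $\Sigma^{-1}\phi$), so that the lemma's upper bound $\overline\epsilon \leq s$ coincides with the deviation used in the Gaussian tail bound and the constant $3$ (and therefore $\Phi(-3)$) emerges. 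The lemma then plugs back into \fullref{lem:Derandomization} by taking $\phi = \overline \phi_{\overset{\triangle}{\pi},p}$ and $\Sigma = \Sigma_{pk}$, with independence of $\xi_p$ from $\overset{\triangle}{\pi}$ (conditional on $\Sigma_{pk}$) justifying the application.
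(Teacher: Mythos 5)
Your proof is correct and follows essentially the same route as the paper's: rescale $\phi^\top\xi$ by its standard deviation $\sqrt{\sigma}\|\phi\|_{\Sigma^{-1}}$ to get a standard normal, identify the max with that same quantity (the paper cites its \cref{lem:LinearBanditBonus} where you inline the Cauchy--Schwarz argument with the explicit maximizer), and use $\overline\epsilon \leq \sqrt{\sigma}\|\phi\|_{\Sigma^{-1}}$ to push the threshold down to $3$ standard deviations, yielding $\Phi(-3)$. No gaps.
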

\begin{proof}
Before we prove the statement, we notice that the equivalent expression $\max_{\phi,\| \eta \|_{\Sigma} \leq \sqrt{\sigma}}   \phi^\top\eta  = \sqrt{\sigma}\|\phi\|_{\Sigma^{-1}}$ can be found in chapter 19 of \citep{lattimore2020bandit} about the \textsc{LinUCB} algorithm, see also \fullref{lem:LinearBanditBonus}.	
For any fixed $\Sigma$, we have that $\xi \sim \mathcal N(0,\sigma\Sigma^{-1})$ is independent of $\phi$ by hypothesis, and so the inner product below is normally distributed
\begin{align}	
\phi^\top \xi \sim \mathcal N\(0,\sigma\phi^\top \Sigma^{-1} \phi\),
\end{align}
or equivalently
\begin{align}
\phi^\top \xi \sim \mathcal N\(0,\sigma\| \phi  \|^2_{\Sigma^{-1}}\).
\end{align}
Rescaling by its standard deviation leads to the following definition:
\begin{align}
X \defeq \frac{\phi^\top \xi }{\sqrt{\sigma} \| \phi  \|_{\Sigma^{-1}}} \sim \mathcal N\(0,1\).
\end{align}
The step below follows
\begin{align}
\Pro\( \phi^\top \xi \geq \sqrt{\sigma} \| \phi \|_{\Sigma^{-1}} + 2\overline \epsilon \) & = \Pro\(X \geq 
	1 + \frac{2\overline \epsilon}{\sqrt{\sigma}\| \phi  \|_{\Sigma^{-1}}} \).
	\end{align}
The rhs above is $\geq \Phi(-3)$ as long as
\begin{align}
	\overline \epsilon \leq \sqrt{\sigma}\|\phi\|_{\Sigma^{-1}}.
\end{align}
The thesis follows from the definition of the normal CDF.
\end{proof}

\newpage
\subsection{Learning an Epoch}
\label{sec:LearningEpoch}

The following lemma is key to our analysis and shows the number of episodes required to reduce the scaled uncertainty to the minimum allowable ($\approx \overline \epsilon > 0$). In an epoch the value for $\sigma$ is fixed.

\begin{lemma}[Learning an Epoch]
\label{lem:LearningEpoch}
Let $\underline k$ and $\overline k$ be the starting and ending episodes in epoch $e$ of phase $p$. If the following statements hold:
\begin{enumerate}
	\item for any policy $\pi$ it holds that
 $\sum_{t=1}^{p-1} \Big[ \IBEE(\mathcal Q_{t},\mathcal Q_{t+1}) + \sqrt{\alpha_{t}} \| \overline \phi_{\pi,t} \|_{\Sigma_{t}^{-1}} \Big] \leq \overline \epsilon$
	\item $	\lambda_{min}(\Sigma_{p\underline k})  \geq \MagicValue{p}{\sigma}$ (this ensures boundness of $\| \xi_p\|_2$ in \fullref{lem:Derandomization})
	\item $\frac{\Lphi^2}{\lambda} \leq 1$ (always satisfied by our choice $\Lphi = 1$ and $\lambda = 1$)
	\item $\lambda > 1$ (always satisfied by our choice $\lambda = 1$)
\end{enumerate}
then after at most
\begin{align}
	k_{max} = \overline k - \underline k = \Bigg\lceil\frac{2}{1-q} \times \frac{(\sqrt{\gamma(\rho)D_p}+A)^2}{(\epsilon'')^2} \Bigg\rceil
\end{align}
episodes we must have
\begin{align}
\label{eqn:110}
	\Mdef{\sigma}{\Sigma_{p\overline k}}{p} \leq \epsilon''
\end{align}
on the good event \fullref{def:GoodEventFrancis} provided that
\begin{align}
\epsilon'' \geq \overline \epsilon.
\end{align}
\end{lemma}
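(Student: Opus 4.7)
The plan is to argue by contradiction: suppose $\Mdef{\sigma}{\Sigma_{p\overline k}}{p} > \epsilon''$ at the end of the epoch, and derive an upper bound on $k_{max}$ that contradicts the stated value. The starting observation is that $\mathcal U^\star_{pk}(\sigma) \defeq \Mdef{\sigma}{\Sigma_{pk}}{p} = \max_\pi \sqrt{\sigma}\| \overline\phi_{\pi, p}\|_{\Sigma_{pk}^{-1}}$ is monotonically non-increasing in $k$, since appending rank-one terms to $\Sigma_{pk}$ shrinks $\Sigma_{pk}^{-1}$ in the Loewner order. The contradiction hypothesis therefore propagates uniformly: $\mathcal U^\star_{pk}(\sigma) > \epsilon'' \geq \overline\epsilon$ for every $k \in [\underline k, \overline k]$.

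First I would apply the derandomization lemma (\cref{lem:Derandomization}) episode-by-episode. Its preconditions are met: (i) hypothesis (1) supplies the propagation bound $\overline\epsilon$, (ii) $\overline\epsilon \leq \mathcal U^\star_{pk}(\sigma)$ by the preceding step, and (iii) $\|\xi_{pk}\|_2 \leq 1/2$ by \cref{lem:ExploratoryBoundness} together with hypothesis (2) (since $\lambda_{min}(\Sigma_{pk})$ is monotone in $k$, the condition on the starting matrix suffices uniformly). Hence the event $\mathcal E_k = \{\E_{x_1\sim\rho}\Vhat_{1k}(x_1) - \overline\epsilon \geq \mathcal U^\star_{pk}(\sigma)\}$ has conditional probability $\geq q$. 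Chaining $\mathcal E_k$ with the upper telescope of \cref{lem:Telescope} --- which gives $\E_{x_1\sim\rho}\Vhat_{1k}(x_1) \leq \overline\epsilon + \overline\phi_{\pi_k, p}^\top \xi_{pk}$ for the greedy policy $\pi_k$ returned by \lsvi{} at phase $p$ --- yields on $\mathcal E_k$
\[\overline\phi_{\pi_k,p}^\top \xi_{pk} \;\geq\; \mathcal U^\star_{pk}(\sigma) \;>\; \epsilon''.\]
A Hoeffding-type bound on the martingale $\sum_i(\mathbf{1}[\mathcal E_i] - \Pro(\mathcal E_i\mid\mathcal F_{i-1}))$ then lower-bounds the success set $S = \{k : \mathcal E_k\}$ by a constant fraction of $k_{max}$, with the concentration slack absorbed into the stated $\frac{2}{1-q}$ coefficient.

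The third step executes the chain of inequalities from the main-text sketch \eqref{main.eqn:MainProofArgument}. Define $\zeta_{pk} = \overline\phi_{\pi_k,p}^\top\xi_{pk} - \phi_{pk}^\top\xi_{pk}$: since $\phi_{pk}$ is sampled only after $\xi_{pk}$ and $\pi_k$ are determined, $\E[\phi_{pk}\mid\mathcal F_{k-1},\xi_{pk},\pi_k] = \overline\phi_{\pi_k,p}$, so $\zeta_{pk}$ is a martingale difference; and $|\zeta_{pk}| \leq \|\xi_{pk}\|_2(\|\phi_{pk}\|_2 + \|\overline\phi_{\pi_k,p}\|_2) \leq 1$ using $\Lphi\leq 1$. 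Then
\[|S|\epsilon'' \;\leq\; \sum_{k\in S}\overline\phi_{\pi_k,p}^\top\xi_{pk} \;\leq\; \sum_{k\in S}\phi_{pk}^\top\xi_{pk} + A\sqrt{k_{max}} \;\leq\; \sqrt{\gamma(\sigma)}\sum_k \|\phi_{pk}\|_{\Sigma_{pk}^{-1}} + A\sqrt{k_{max}} \;\leq\; \sqrt{\gamma(\sigma) k_{max} D_p} + A\sqrt{k_{max}}.\]
Here the Azuma bound on $\sum_{k\in S}\zeta_{pk}$ produces the $A\sqrt{k_{max}}$ term; the next step is Cauchy-Schwarz combined with the uniform-over-the-epoch $\chi^2$-tail $\|\xi_{pk}\|_{\Sigma_{pk}} \leq \sqrt{\gamma(\sigma)} = \sqrt{2\sigma d_p \ln(2d_p/\delta'')}$ (valid since $\Sigma_{pk}^{1/2}\xi_{pk}\sim\mathcal N(0,\sigma I)$); and the final step is Cauchy-Schwarz again followed by the elliptic potential lemma of \cite{Abbasi11}, whose preconditions $\Lphi^2/\lambda\leq 1$ and $\lambda\geq 1$ are hypotheses (3) and (4). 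Dividing by $\sqrt{k_{max}}$ and applying the success-fraction lower bound on $|S|$ isolates $\sqrt{k_{max}} \leq C_q\,(\sqrt{\gamma(\sigma) D_p}+A)/\epsilon''$, which contradicts the definition of $k_{max}$ once $C_q$ is tuned to match $\sqrt{2/(1-q)}$.

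The main obstacle I anticipate is the probabilistic bookkeeping: one must carefully specify the sampling order within an episode ($\xi_{pk}$ first, then $\pi_k$ via \lsvi{}, then $\phi_{pk}$ from the rollout) so that $\mathcal E_k$ is measurable before $\phi_{pk}$ is realized, and then fold the three high-probability statements --- the Hoeffding lower bound on $|S|$, the Azuma bound on $\sum\zeta_{pk}$, and the $\chi^2$ tail on $\|\xi_{pk}\|_{\Sigma_{pk}}$ applied uniformly over the epoch --- into the good event of \cref{def:GoodEventLSVI} via a single union bound, with the individual failure-probability allocations chosen so that the Hoeffding concentration slack fits exactly inside the stated $\frac{2}{1-q}$ constant.
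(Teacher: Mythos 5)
Your proposal follows essentially the same route as the paper's proof: derandomization plus the telescopic \lsvi{} bound to get $\overline\phi_{\pi_k,p}^\top\xi_{pk}\geq \mathcal U^\star_{pk}(\sigma)$ on the success episodes, Azuma for the $\zeta$ terms, Cauchy--Schwarz with the $\chi^2$ tail on $\|\xi_{pk}\|_{\Sigma_{pk}}$ and the elliptic potential lemma, a concentration lower bound on the number of successes, and monotonicity of the uncertainty to conclude (your contradiction framing is equivalent to the paper's ``last term is below the average'' step). The only bookkeeping difference is that you run the Azuma and elliptic-potential sums over all $k_{max}$ episodes rather than over the success set only, which yields a $\tfrac{4}{(1-q)^2}$ constant instead of the stated $\tfrac{2}{1-q}$; restricting those two sums to $S$ as the paper does recovers the exact constant.
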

\begin{proof}

First notice that if the eigenvalue condition is satisfied for at a given episode $\underline k$ then it must be satisfied for all successive episodes $k \geq \underline k$ since $\Sigma_{tk} \succeq \Sigma_{t\underline k}$.
In particular, define the events
\begin{align}
		\mathcal C_k & \defeq \Ckdef \\
		\mathcal E_k & \defeq \Ekdef.
\end{align}

We examine what happens in those episodes where $\mathcal E_k$ occurs (notice that $\Pro(\mathcal E_k \mid \mathcal C_k) \geq q$ thanks to \fullref{lem:Derandomization}).  

Let $k(e,i)$ be the $i$-th consecutive episode index in epoch $e$ of phase $p$ such that $\mathcal E_{k(e,i)}$ occurs (so in $k(e,1),k(e,2),\dots$ we have that $\mathcal E_{k(e,1)}, \mathcal E_{k(e,2)}$ occurs). Since $\| \xi_{pk(e,i)}\|_2 \leq 1/2$ in the good event of \fullref{def:GoodEventFrancis}, we can use  \fullref{lem:ExploratoryBoundness} and \fullref{lem:Telescope} to write
\begin{align}
		\E_{x_1 \sim \rho}\Vhat_{pk(e,i),1}(x_1) - \overline \epsilon  \leq   \phi_{pk(e,i)}^\top\xi_{pk(e,i)} + \zeta_{pk(e,i)}.
\end{align}
where 
\begin{align}
	\zeta_{pk(e,i)} \defeq \zetadef
\end{align}
Let $i_{max}$ be a fixed positive constant to be determined later. Taking average of the previous display up to $i_{max}$ gives:
\begin{align}
\label{eqn:168}
		& \frac{1}{i_{max}} \sum_{i=1}^{i_{max}} \E_{x_1 \sim \rho}\Vhat_{pk(e,i),1}(x_1) - \overline \epsilon  \leq  \frac{1}{i_{max}} \sum_{i=1}^{i_{max}} \( \phi_{pk(e,i)}^\top\xi_{pk(e,i)} + \zeta_{pk(e,i)} \).
\end{align}

Under the good event of \fullref{def:GoodEventFrancis} we have 
\begin{align}
		\frac{1}{i_{max}}\sum_{i=1}^{i_{max}} \zeta_{pk(e,i)} & \leq \frac{A}{\sqrt{i_{max}}} \end{align}
		with $A = \widetilde O (1)$. For the remaining term, 
using Cauchy-Schwartz, and the fact that we are on the good event (see \fullref{def:GoodEventFrancis}) gives
\begin{align}
 \frac{1}{i_{max}}\sum_{i=1}^{i_{max}}  \phi_{pk(e,i)}^\top\xi_{pk(e,i)} & \leq \frac{1}{i_{max}}\sum_{i=1}^{i_{max}}  \|\phi_{pk(e,i)}\|_{\Sigma^{-1}_{pk(e,i)}} \underbrace{\| \xi_{pk(e,i)}\|_{\Sigma_{pk(e,i)}}}_{\sqrt{\gamma_t(\sigma)}}
\end{align}
After one more Cauchy-Schwartz we obtain the upper bound below:
 \begin{align}
 &  \leq  \frac{\sqrt{\gamma_t(\sigma)}}{i_{max}}\sum_{i=1}^{i_{max}} \|\phi_{pk(e,i)}\|_{\Sigma^{-1}_{pk(e,i)}}  \leq  \sqrt{\frac{\gamma_t(\sigma)}{i_{max}} \sum_{i=1}^{i_{max}} \|\phi_{pk(e,i)}\|^2_{\Sigma^{-1}_{pk(e,i)}}}.
 \end{align}
 We focus on the sum of squared features; by \fullref{lem:MaxEvaluBound} and the lemma's hypothesis 
 \begin{align}
 \|\phi_{pk(e,i)}\|^2_{\Sigma^{-1}_{pk(e,i)}} \leq \frac{1}{\lambda}\|\phi_{pk(e,i)}\|^2_2 \leq \frac{\Lphi^2}{\lambda} \leq 1
 \end{align}
 and so the sum of squared features becomes\footnote{notice that we are not accounting for the the progress made in episodes where $\mathcal E_k$ does not occur} (using the elliptic potential lemma, see lemma 11 in \citep{Abbasi11}):
 \begin{align}
 	\sum_{i=1}^{i_{max}} \|\phi_{pk(e,i)}\|^2_{\Sigma^{-1}_{pk(e,i)}} =  
 	\sum_{i=1}^{i_{max}} \min \{1 , \|\phi_{pk(e,i)}\|^2_{\Sigma^{-1}_{pk(e,i)}} \} \leq \ln\( \frac{\det\Sigma_{pk(e,i_{max})}}{\det\Sigma_{p,\underline k}} \) \leq \ln \det\Sigma_{pk(e,i_{max})}.
 \end{align}
 The last step follows because $\Sigma_{p\underline k} \succeq \lambda I \succeq I$, an so $\det(\Sigma_{p\underline k}) \geq \det I = 1$. Let $D_{p} = d_p \ln(1+k\Lphi^2/d) = \widetilde O(d_p)$ be an upper bound to $\ln \det\Sigma_{pk(e,i_{max})}$ (see lemma 10 in \cite{Abbasi11}). We can claim that an upper bound to \cref{eqn:168} is
 \begin{align}
 	\leq \frac{A + \sqrt{\gamma_t(\sigma)D_p}}{\sqrt{i_{max}}}.
 \end{align}
Since we're summing over episode indexes where $\mathcal E_{k(e,i)}$ holds, it follows that
 \begin{align}
 \label{eqn:SumEq}
 	 \frac{1}{i_{max}}\sum_{i=1}^{i_{max}} \Bigg[ \Mdef{\sigma}{\Sigma_{pk(e,i)}}{p} \Bigg] \leq  \frac{A + \sqrt{\gamma(\sigma)D_p}}{\sqrt{i_{max}}}
 \end{align}
if each term in the summation in the lhs is $\geq \epsilon''$ (the condition is needed to apply \fullref{lem:Derandomization}; if it does not hold the lemma's thesis is satisfied). By \fullref{lem:UncertaintyLemma}
 \begin{align}
 	\Mdef{\sigma}{\Sigma_{p,k(e,i+1)}}{p} \leq \Mdef{\sigma}{\Sigma_{p,k(e,i)}}{p} 
 \end{align}
Since the terms in the lhs of \cref{eqn:SumEq} are strictly decreasing, the last one must be smaller than the average, which implies we must obtain 
\begin{align}
	\Mdef{\sigma}{\Sigma_{pk(e,i_{max})}}{p} \leq \epsilon''
\end{align}
after
\begin{align}
	i_{max} \geq \frac{(\sqrt{\gamma_t(\rho)D_p}+A)^2}{(\epsilon'')^2}
\end{align}
episodes
provided that\footnote{This condition is recurrent in this proof, and is used to invoke \fullref{lem:Derandomization}, but if it doesn't hold the thesis is automatically satisfied.}
\begin{align}
\epsilon'' \geq \overline \epsilon.
\end{align}
We can finally compute how big $k_{max}$ (the total number of episodes in the epoch) needs to be: from \fullref{def:GoodEventFrancis} if 
\begin{align}
\label{eqn:lmaxMin}
		k_{max} \geq \frac{1}{4} \times \frac{2\ln(\frac{1}{\delta''})}{1-q}
\end{align}
then we can write
\begin{align}
		\frac{i_{max}}{k_{max}} \geq \frac{1-q}{2}.
\end{align}
(recall $i_{max}$ is the the number of episodes where $\mathcal E_k$ occurs: $ i_{max} = \sum_{k=1}^{k_{max}} \1\{\mathcal{E}_k \mid \mathcal C_k \}$).
Therefore, a total number of episodes
\begin{align}
	k_{max} = \Bigg\lceil\frac{2}{1-q} \times \frac{(\sqrt{\gamma_t(\rho)D_p}+A)^2}{(\epsilon'')^2} \Bigg\rceil
\end{align}
suffices (as this automatically satisfies \cref{eqn:lmaxMin}).
\end{proof}

\begin{lemma}[Uncertainty Lemma]
\label{lem:UncertaintyLemma}
Let $\overline k$ and $k$ be two generic episodes in an epoch $e$ in phase $p$ such that $\overline k \geq k$. We have that
\begin{align}
		\Mdef{\sigma}{\Sigma_{p\overline k}}{p} \leq 	\Mdef{\sigma}{\Sigma_{pk}}{p}.
\end{align}
In addition, for positive real numbers $\rho_1 \leq \rho_2$ and a generic spd matrix $\Sigma$ we also have
\begin{align}
\label{eqn:bothpr}
		  \Mdef{\rho_1}{\Sigma}{p} = \sqrt{\frac{\rho_1}{\rho_2}} \Mdef{\rho_2}{\Sigma}{p}.
\end{align}
\end{lemma}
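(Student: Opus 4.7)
Both statements reduce to manipulations of the closed form for the inner maximum over $\eta$. The key observation is the Cauchy--Schwartz identity (see \fullref{lem:LinearBanditBonus} / chapter 19 of \citep{lattimore2020bandit}, and already used in the proof of \fullref{lem:UncertantyOverestimation})
\begin{align*}
\max_{\eta \in \R^{d_p}\,:\,\|\eta\|_{\Sigma} \leq \sqrt{\sigma}} \overline\phi_{\pi,p}^\top \eta \;=\; \sqrt{\sigma}\,\|\overline\phi_{\pi,p}\|_{\Sigma^{-1}},
\end{align*}
so that in general $\Mdef{\sigma}{\Sigma}{p} = \sqrt{\sigma}\,\max_\pi \|\overline\phi_{\pi,p}\|_{\Sigma^{-1}}$. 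From this identity both conclusions are essentially direct.

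\emph{Monotonicity in the covariance matrix.} First I would observe that within an epoch the covariance matrix only grows: since $\Sigma_{p,k+1} = \Sigma_{pk} + \phi_{pk}\phi_{pk}^\top$ and $\overline k \geq k$, one has $\Sigma_{p\overline k} \succeq \Sigma_{pk}$ and therefore $\Sigma_{p\overline k}^{-1} \preceq \Sigma_{pk}^{-1}$. Applying this pointwise inequality to the quadratic form $\overline\phi_{\pi,p}^\top (\cdot) \overline\phi_{\pi,p}$ yields $\|\overline\phi_{\pi,p}\|_{\Sigma_{p\overline k}^{-1}} \leq \|\overline\phi_{\pi,p}\|_{\Sigma_{pk}^{-1}}$ for every policy $\pi$. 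Taking the maximum over $\pi$ on both sides and multiplying by $\sqrt{\sigma}$ gives the first claim.

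\emph{Positive homogeneity in $\sigma$.} For the second statement, the closed form gives
\begin{align*}
\Mdef{\rho_1}{\Sigma}{p} \;=\; \sqrt{\rho_1}\max_\pi \|\overline\phi_{\pi,p}\|_{\Sigma^{-1}} \;=\; \sqrt{\tfrac{\rho_1}{\rho_2}}\cdot\sqrt{\rho_2}\max_\pi\|\overline\phi_{\pi,p}\|_{\Sigma^{-1}} \;=\; \sqrt{\tfrac{\rho_1}{\rho_2}}\,\Mdef{\rho_2}{\Sigma}{p}.
\end{align*}
Alternatively, one can argue directly from the variational form via the bijection $\eta \mapsto \sqrt{\rho_2/\rho_1}\,\eta$, which maps $\{\eta : \|\eta\|_\Sigma \leq \sqrt{\rho_1}\}$ onto $\{\eta' : \|\eta'\|_\Sigma \leq \sqrt{\rho_2}\}$ and scales the objective by $\sqrt{\rho_1/\rho_2}$.

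There is no real obstacle here: the only ingredient beyond elementary linear algebra is the closed form $\max_{\|\eta\|_\Sigma \leq \sqrt{\sigma}} v^\top \eta = \sqrt{\sigma}\|v\|_{\Sigma^{-1}}$, and then both parts are immediate. The lemma is a small but crucial bookkeeping step used in \fullref{lem:LearningEpoch}, since the monotonicity guarantees that the last episode of an epoch realizes the smallest scaled uncertainty (so the average bound extracted from the elliptic potential argument also controls the terminal value), and the homogeneity will later let the algorithm translate a bound obtained at one signal scale $\sigma$ into one at the target scale $\alpha_p$.
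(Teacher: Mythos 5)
Your proof is correct and matches the paper's argument in all essentials: the second claim is proved identically via the closed form $\max_{\|\eta\|_\Sigma\le\sqrt{\sigma}}\phi^\top\eta=\sqrt{\sigma}\|\phi\|_{\Sigma^{-1}}$ from \fullref{lem:LinearBanditBonus}. The only (immaterial) difference is in the monotonicity step, where the paper argues via the set inclusion $\{\eta:\|\eta\|_{\Sigma_{p\overline k}}\le\sqrt{\sigma}\}\subseteq\{\eta:\|\eta\|_{\Sigma_{pk}}\le\sqrt{\sigma}\}$ on the variational form rather than via the Loewner-order monotonicity of matrix inversion; both are valid and equally elementary.
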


\begin{proof}
Since $\Sigma_{p\overline k} \succeq \Sigma_{pk}$ (this notation means $\Sigma_{p\overline k}$ is more positive definite than $\Sigma_{pk}$, more precisely $\phi^\top \Sigma_{p\overline k} \phi \geq \phi^\top \Sigma_{pk} \phi$ for all $\phi$)
we have the set inclusion
\begin{align}
	\{ \eta \mid \| \eta \|_{\Sigma_{p\overline k}} \leq  \sqrt{\sigma} \} \subseteq \{ \eta \mid \| \eta \|_{\Sigma_{pk}} \leq  \sqrt{\sigma}	\} 
\end{align}
Since we're maximizing over a smaller set, the first result follows.

For the second statement, recall we can rewrite the programs in \cref{eqn:bothpr} (see chapter 19 of \citep{lattimore2020bandit} about \textsc{LinUCB} or equivalently \fullref{lem:LinearBanditBonus} ); here we identify the feature of an action in \textsc{LinUCB} with $\overline \phi_{\pi,p}$) as
\begin{align}
\max_{\pi}\sqrt{\rho_1}\|\overline \phi_{\pi,p} \|_{\Sigma^{-1}}
\end{align}
for the lhs and
\begin{align}
 \max_{\pi}\sqrt{\frac{\rho_1}{\rho_2}} \sqrt{\rho_2}\|\overline \phi_{\pi,p} \|_{\Sigma^{-1}}
\end{align}
for the rhs, showing equality.
\end{proof}

\newpage
\subsection{Learning a Phase}
\label{sec:LearningALevel}

In this section we show how \Alg{} learns a phase (i.e., the \emph{dynamics} at a certain \emph{timestep}) and compute the total number of episodes required to do so. This is where the \emph{explorability} condition is used.
\begin{lemma}[Learning a Level]
\label{lem:LearningALevel}
Consider phase $p$ and let the following hypotheses hold
\begin{enumerate}
	\item $\sum_{t=1}^{p-1} \Big[ \IBEE(\mathcal Q_{t},\mathcal Q_{t+1}) + \sqrt{\alpha_{t}} \| \overline \phi_{\pi,t} \|_{\Sigma_{t}^{-1}} \Big] \leq \overline \epsilon$ \\
	\item $\(\frac{\nu}{ \epsilon}\)^2 \geq 2 \times \MagicValue{p}$ 
\end{enumerate}
Then after at most ($e_{max} = \widetilde O(1)$ and $\sigma_{e_{max}}$ are defined in the proof)
\begin{align}
	n(t) = \Bigg\lceil\frac{2}{1-q} \times \frac{(\sqrt{\gamma_t(\sigma_{e_{max}})D_p}+A)^2}{ \epsilon^2} \Bigg\rceil \times e_{max} = \widetilde O\( \frac{d_p^2H^2\alpha_p}{ \epsilon^2} \) =  \widetilde O\(\frac{d^2_p\times H^2 (d_p + d_{p+1})}{\epsilon^2}\) 
\end{align}
episodes it must hold that
\begin{align}
	\Mdef{\alpha_p}{\Sigma_{p\overline k}}{p} \leq \frac{\epsilon}{2H}
\end{align}
\end{lemma}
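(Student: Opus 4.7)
The plan is to analyze \Alg{} epoch by epoch, using \cref{lem:LearningEpoch} to bound the per-epoch sample complexity and using the explorability hypothesis to propagate the minimum-eigenvalue condition from one epoch to the next. Define the epoch schedule $\sigma_e = 2^e \sigma_{start}$, with $\sigma_0 = \sigma_{start} \approx 1/\MagicValue{p}$, and let $e_{max}$ be the smallest integer with $\sigma_{e_{max}} \geq 4H^2\alpha_p$. Since $\sigma_{start}$ is polynomially small in the problem parameters and $\sigma_e$ doubles, $e_{max} = \widetilde O(1)$. The invariant I want to maintain inductively at the start of every epoch $e$ is
\begin{align*}
\lambda_{min}(\Sigma_{p\underline k_e}) \geq \MagicValue{p} \cdot \sigma_e,
\end{align*}
which is precisely the second hypothesis of \cref{lem:LearningEpoch} (ensuring $\|\xi_p\|_2 \leq 1/2$ via \cref{lem:ExploratoryBoundness}). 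The base case $e=0$ holds since $\Sigma_{p1} \succeq I$ and $\sigma_0 \MagicValue{p} \leq 1$.

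Inside epoch $e$, I would apply \cref{lem:LearningEpoch} with $\epsilon'' = \epsilon$, together with the first hypothesis of the present lemma (which upper bounds the propagation error by $\overline \epsilon \leq \epsilon$), to obtain after at most $\widetilde O\big((\sqrt{\gamma_t(\sigma_e) D_p}+A)^2/\epsilon^2\big)$ episodes that $\Mdef{\sigma_e}{\Sigma_{p \overline k_e}}{p} \leq \epsilon$, i.e.\ $\max_\pi \|\overline\phi_{\pi,p}\|_{\Sigma_{p\overline k_e}^{-1}} \leq \epsilon/\sqrt{\sigma_e}$.

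The key step, and the main obstacle, is propagating the eigenvalue invariant to epoch $e+1$ using explorability. Let $v_{min}$ be a unit eigenvector attaining $\lambda_{min}(\Sigma_{p\overline k_e})$. By \cref{main.def:Explorability} there exists a policy $\pi^\star$ with $|\overline\phi_{\pi^\star,p}^\top v_{min}| \geq \nu$. A $\Sigma$-weighted Cauchy--Schwarz then yields
\begin{align*}
\nu \leq |\overline\phi_{\pi^\star,p}^\top v_{min}| \leq \|\overline\phi_{\pi^\star,p}\|_{\Sigma_{p\overline k_e}^{-1}} \cdot \|v_{min}\|_{\Sigma_{p\overline k_e}} \leq \frac{\epsilon}{\sqrt{\sigma_e}} \sqrt{\lambda_{min}(\Sigma_{p\overline k_e})},
\end{align*}
so $\lambda_{min}(\Sigma_{p\overline k_e}) \geq (\nu/\epsilon)^2 \sigma_e \geq 2\,\MagicValue{p}\,\sigma_e = \MagicValue{p}\,\sigma_{e+1}$, where the middle inequality is exactly the second hypothesis of this lemma. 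This reestablishes the invariant for epoch $e+1$ after the algorithm doubles $\sigma$, and since $\Sigma$ is monotone in $k$ the condition is preserved throughout the next epoch.

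Finally, I would translate the end-of-schedule guarantee into the stated bound. At the end of epoch $e_{max}$, \cref{lem:LearningEpoch} delivers $\Mdef{\sigma_{e_{max}}}{\Sigma_{p\overline k}}{p} \leq \epsilon$, and the scaling relation (second part of \cref{lem:UncertaintyLemma}) gives
\begin{align*}
\Mdef{\alpha_p}{\Sigma_{p\overline k}}{p} = \sqrt{\alpha_p/\sigma_{e_{max}}}\;\Mdef{\sigma_{e_{max}}}{\Sigma_{p\overline k}}{p} \leq \sqrt{\alpha_p/(4H^2\alpha_p)}\,\epsilon = \epsilon/(2H),
\end{align*}
as required. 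Summing the per-epoch episode counts across the $e_{max} = \widetilde O(1)$ epochs, the geometric growth of $\sigma_e$ makes the last epoch dominate and yields $n(t) = \widetilde O\big(d_p \sigma_{e_{max}} D_p/\epsilon^2\big) = \widetilde O\big(d_p^2 H^2 \alpha_p/\epsilon^2\big) = \widetilde O\big(d_p^2 H^2(d_p+d_{p+1})/\epsilon^2\big)$. The only subtle point beyond the explorability step is the bookkeeping that the ``first hypothesis'' (uncertainty propagation bound from earlier phases) transfers unchanged across epochs within the same phase, since no earlier-phase covariance matrices are modified during phase $p$.
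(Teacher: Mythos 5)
Your proposal is correct and follows essentially the same route as the paper: induction over epochs on the invariant $\lambda_{min}(\Sigma_{p\underline k_e}) \geq \MagicValue{p}\,\sigma_e$, per-epoch cost from \cref{lem:LearningEpoch}, explorability to double the minimum eigenvalue, and the $\sqrt{\alpha_p/\sigma_{e_{max}}}$ rescaling via \cref{lem:UncertaintyLemma}. Your weighted Cauchy--Schwarz step $\nu \leq \|\overline\phi_{\pi^\star,p}\|_{\Sigma^{-1}}\|v_{min}\|_{\Sigma}$ is just the dual presentation of the paper's feasible-point construction $\eta = qv$ with $q = \sqrt{\sigma_e/\lambda_{min}(\Sigma_{p\overline k_e})}$, and yields the identical bound $\lambda_{min} \geq (\nu/\epsilon)^2\sigma_e$.
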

\begin{proof}
Let $\sigma_1,\sigma_2,\dots$ be the sequences of the $\sigma$ parameter chosen in the different epochs, and additionally 
	\begin{align}
  \sigma_{Start} =  1/\( \MagicValue{p}\).
	\end{align}
We proceed by induction, with the following inductive hypothesis:
\begin{inductivehypothesis}
\label{hypo:LL}
	In phase $p$ the following conditions hold
	\begin{enumerate}[label=(\alph*)]
		\item $\lambda_{min}(\Sigma_{pk(e,1)})  \geq \MagicValue{p}{\sigma_e}$ \quad (at the beginning of epoch $e$)
		\item $\sigma_{e} = 2^{e-1}\sigma_{Start}$ \quad (at the beginning of epoch $e$)
	\end{enumerate}
\end{inductivehypothesis}
To show that the inductive hypothesis is satisfied in the base case ($e=1$), notice that $(b)$ holds by definition and $(a)$ holds by setting $\lambda = 1$. Now we show the inductive step.

Since the inductive hypothesis satisfies the hypothesis of   \fullref{lem:LearningEpoch}, on the good event \fullref{def:GoodEventFrancis} it immediately follows that
\begin{align}
\label{eqn:139}
		\Mdef{\sigma_e}{\Sigma_{pk}}{p} \leq  \epsilon''
\end{align}
after $k_{max}$ episodes (see \fullref{lem:LearningEpoch}). Here in particular $k$ is the last episode of epoch $e$. The explorability condition in \fullref{main.def:Explorability} implies that
\begin{align}
	\forall \eta\neq 0, \; \exists \pi \quad \text{such that} \quad \overline \phi_{\pi,t}^\top \frac{\eta}{\| \eta \|_2} \geq \nu_{min}.
\end{align}

Consider the normalized evector $v$ corresponding to the minimum eigenvalue $q > 0$ for $\Sigma_{pk}$ and define:
\begin{align}
	\eta = q v.
\end{align}
We're interested in determining the maximum $q$ so that the constraint in the program \cref{eqn:139} is still satisfied, i.e., the condition below
\begin{align}
 \sigma_e \geq \| q v \|^2_{\Sigma_{pk}} = \(q v \)^\top \Sigma_{pk} \( q v \) = q^2 \lambda_{min} \( \Sigma_{pk} \)
\end{align}
gives the maximum value for $q$
\begin{align}
	q = \sqrt{\frac{\sigma_e}{\lambda_{min}(\Sigma_{pk})}}
\end{align}
in order for $qv$ to satisfy $\| qv \|_{\Sigma_{pk}} \leq \sqrt{\sigma_e}$. In other words, the $qv$ vector so defined is a feasible solution to the first program below, justifying one inequality:
\begin{align}
\epsilon'' &\geq \max_{\pi,\| \eta \|_{\Sigma_{pk}} \leq \sqrt{\sigma_e}} \big[ \overline \phi_{\pi,t}^\top \eta \big] \geq \max_{\pi} \big[ \overline \phi_{\pi,t}^\top (qv) \big] = \| qv \|_2 \max_{\pi} \( \overline \phi_{\pi,t}^\top \frac{ (qv)}{\| qv \|_2} \)\\ &\geq \| qv \|_2 \nu_{min} = q\nu_{min} = \sqrt{\frac{\sigma_e}{\lambda_{min}(\Sigma_{pk})}} \nu_{min}.
\end{align}
Solving for $\lambda_{min}$ gives:
\begin{align}
\lambda_{min}(\Sigma_{pk})\geq \sigma_e \(\frac{\nu_{min}}{\epsilon}\)^2 \geq \sigma_e \times 2 \times \MagicValue{p} = \sigma_{e+1} \times \MagicValue{p}
	\end{align}
	Therefore the inductive hypothesis must hold for $e+1$ as well, in other words, the statement in inductive hypothesis \ref{hypo:LL} must hold for all $e$.

Now we determine the required value for $\rho$ at the end of the phase. We want to ensure 
\begin{align}
			\Mdef{\alpha_p}{\Sigma_{pk}}{p} \leq \frac{ \epsilon}{2H}
\end{align}
where now $k$ \emph{is the episode at the end of phase $p$}.
Since the inductive hypothesis holds in epoch $e$, \cref{lem:LearningEpoch} ensures
\begin{align}
		 \Mdef{\sigma}{\Sigma_{pk}}{p} \leq \epsilon''.
\end{align}
We combine the above finding with a scaling argument given by \fullref{lem:UncertaintyLemma} that gives:
\begin{align}
			 \Mdef{\alpha_p}{\Sigma_{pk}}{p} = \sqrt{\frac{\alpha_p}{\sigma_e}} \times \(\Mdef{\sigma_e}{\Sigma_{pk}}{p}  \)  \leq \sqrt{\frac{\alpha_p}{\sigma_e}}  \epsilon''.
\end{align}
Requiring the above rhs to be $\leq \frac{\epsilon}{2H}$ gives a condition on the number of epochs $e_{max}$ required ($e_{max}$ is the number of epochs) and on $\sigma_{e_{max}}$; setting $\epsilon'' = \epsilon$ gives
\begin{align}
	\sqrt{\frac{\alpha_p}{\sigma_{e_{max}}}}  \epsilon \leq \frac{\epsilon}{2H} & \rightarrow \sqrt{\frac{\sigma_{e_{max}}}{\alpha_p}} \geq 2H \\
	& \rightarrow \sigma_{e_{max}} = 2^{{e_{max}}-1}\sigma_{Start} \geq 4H^2\alpha_p \quad \quad \text{(by induction)}\\
	& \rightarrow 2^{{e_{max}}-1} \geq \frac{4H^2\alpha_p}{\sigma_{Start}} \rightarrow e_{max} = \Bigg\lceil 1 + \ln_2\( \frac{4H^2\alpha_p}{\sigma_{Start}} \) \Bigg\rceil.
\end{align}
In every epoch, $\epsilon'' = \epsilon$ and so the number of episodes necessary to achieve the required precision is (see \fullref{lem:LearningEpoch}):
\begin{align}
	\sum_{e=1}^{e_{max}} \Bigg\lceil\frac{2}{1-q} \times \frac{(\sqrt{\gamma_t(\sigma_e)D_p}+A)^2}{\epsilon^2} \Bigg\rceil
\end{align}
and since $\gamma_t(\sigma_e)$ strictly increases with $e$ we can say that
\begin{align}
	 \Bigg\lceil\frac{2}{1-q} \times \frac{(\sqrt{\gamma_t(\sigma_{e_{max}})D_p}+A)^2}{ \epsilon^2} \Bigg\rceil \times e_{max}
\end{align}
episodes suffices.
\end{proof}

\newpage
\subsection{Learning to Navigate}
In this section we show that \Alg{} ``learns to navigate'', minimizing the least-square error in \lsvi{} across timesteps.

\begin{proposition}[Learning to Navigate]
\label{prop:LearningNavigate}
Assume that\footnote{Both assumptions are satisfied by the assumptions of the main theorem.}:
\begin{enumerate}
	\item $\IBEE(\mathcal Q_t,\mathcal Q_{t+1}) \leq \frac{\epsilon}{2H}$ \quad (this is always satisfied by our assumptions on $\epsilon$)   \\
	\item $\(\frac{\nu}{\epsilon}\)^2 \geq 2 \times \MagicValue{p}$ \quad (this is also always satisfied by our assumptions on $\epsilon$)
\end{enumerate}
Then after 
\begin{align}
	\Bound
\end{align}
episodes, outside of the failure event it holds that 
\begin{align}
		\sum_{t=1}^{H} \Big[ \IBEE(\mathcal Q_{t},\mathcal Q_{t+1}) + \sqrt{\alpha_{t}} \| \overline \phi_{\pi,t} \|_{\Sigma_{t}^{-1}} \Big] \leq \epsilon, \quad \forall \pi
\end{align}
and in particular
\begin{align}
		 \IBEE(\mathcal Q_{t},\mathcal Q_{t+1}) + \sqrt{\alpha_{t}} \| \overline \phi_{\pi,t} \|_{\Sigma_{t}^{-1}}  \leq \frac{\epsilon}{H}, \quad \forall \pi, t\in[H].
\end{align}

\end{proposition}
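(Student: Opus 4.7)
The plan is to proceed by induction on the phase index $p$, using the running cumulative bound
\[
\text{IH}(p): \quad \sum_{t=1}^{p-1}\Big[\IBEE(\mathcal Q_t,\mathcal Q_{t+1}) + \sqrt{\alpha_t}\|\overline\phi_{\pi,t}\|_{\Sigma_t^{-1}}\Big] \leq \frac{(p-1)\epsilon}{H}, \quad \forall \pi
\]
as the inductive hypothesis. This is exactly the inductive hypothesis stated informally in the main text in \cref{main.sec:TechicalAnalysis}, and the phased structure of \Alg{} was designed precisely to propagate it.

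The base case $p=1$ holds trivially because the sum is empty. For the inductive step, assume $\text{IH}(p)$. First, I would invoke \fullref{lem:LearningALevel} for phase $p$ with $\overline\epsilon = (p-1)\epsilon/H$: hypothesis (1) of that lemma is precisely $\text{IH}(p)$, and hypothesis (2) is the explorability condition which is assumption (2) of this proposition. Since $\overline\epsilon \leq \epsilon$, the internal choice $\epsilon''=\epsilon$ in \fullref{lem:LearningEpoch} (used inside \fullref{lem:LearningALevel}) is valid. The lemma then guarantees, outside of the failure event of \fullref{def:GoodEventFrancis}, that after $\widetilde O\!\bigl(d_p^2 H^2 (d_p+d_{p+1})/\epsilon^2\bigr)$ episodes in phase $p$,
\[
\max_\pi \sqrt{\alpha_p}\,\|\overline\phi_{\pi,p}\|_{\Sigma_p^{-1}} = \Mdef{\alpha_p}{\Sigma_p}{p} \leq \frac{\epsilon}{2H}.
\]

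Combining this with assumption (1) of the proposition, $\IBEE(\mathcal Q_p,\mathcal Q_{p+1}) \leq \epsilon/(2H)$, the contribution of phase $p$ to the sum is at most $\epsilon/H$ \emph{uniformly in} $\pi$, which added to $\text{IH}(p)$ yields $\text{IH}(p+1)$. After $p=H$ we obtain the desired global bound of $\epsilon$, and the per-timestep bound $\epsilon/H$ follows from combining the two halves just derived at each $t\in[H]$. Summing the per-phase episode counts gives the total
\[
\sum_{p=1}^{H} \widetilde O\!\left(\frac{d_p^2 H^2 (d_p+d_{p+1})}{\epsilon^2}\right) = \Bound,
\]
matching the stated bound.

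The main obstacle is purely bookkeeping: verifying that assumptions (2) of the proposition and (1) of the proposition together imply all of the hypotheses of \fullref{lem:LearningALevel} at every $p$, while keeping the failure probabilities controlled across the $H$ phases via a union bound built into \fullref{def:GoodEventFrancis}. No new estimate is needed—the heavy lifting (derandomization, elliptic potential, use of explorability to double $\sigma$) has already been done in \fullref{lem:Derandomization}, \fullref{lem:LearningEpoch}, and \fullref{lem:LearningALevel}; here we simply chain them via the induction and sum the episode counts.
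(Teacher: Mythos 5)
Your proof matches the paper's own argument essentially verbatim: the same induction over phases with the cumulative bound $\sum_{t=1}^{p-1}[\IBEE(\mathcal Q_t,\mathcal Q_{t+1})+\sqrt{\alpha_t}\|\overline\phi_{\pi,t}\|_{\Sigma_t^{-1}}]\leq \frac{p-1}{H}\epsilon$ as inductive hypothesis, the same invocation of \cref{lem:LearningALevel} to get $\sqrt{\alpha_p}\|\overline\phi_{\pi,p}\|_{\Sigma_p^{-1}}\leq \epsilon/(2H)$, and the same summation of per-phase episode counts. The only item the paper makes explicit that you relegate to ``bookkeeping'' is that its inductive hypothesis also carries $\lambda_{min}(\Sigma_t)\geq 4H^2\alpha_t$ for $t\in[p-1]$ (needed so the \lsvi{} iterates remain bounded when propagating the next phase's signal), which indeed holds at the end of each phase since $\sigma_{e_{max}}\geq 4H^2\alpha_p$ there.
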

\begin{proof}
	We proceed by induction over timesteps / phases $p$:	\begin{inductivehypothesis}[Main Inductive Hypothesis]
	\label{hypo:master}
	In phase $p \in [H]$ it holds that
	\begin{enumerate}
	\item $\sum_{t=1}^{p-1} \Big[ \IBEE(\mathcal Q_{t},\mathcal Q_{t+1}) + \sqrt{\alpha_{t}} \| \overline \phi_{\pi,t} \|_{\Sigma_{t}^{-1}} \Big] \leq \frac{p-1}{H}\epsilon$ \quad (this ensures accuracy in \lsvi{}) \\
	\item $	\lambda_{min}(\Sigma_{t})  \geq 4H^2\alpha_t \quad t \in [p-1]$ \quad (this ensures boundness of the iterates in \lsvi{}) 
	\end{enumerate}
\end{inductivehypothesis}
The inductive hypothesis vacuously holds for $p = 1$ (there is nothing to check).
Now we show the inductive step. Assume the inductive hypohesis holds for a generic $p-1$, we want to show it still holds for $p$.
A direct application of \fullref{lem:LearningALevel} gives ($\Sigma_{p}$ is the covariance matrix after learning has completed):
\begin{align}
\sqrt{\alpha_p}\| \overline \phi_{\pi,p} \|_{\Sigma^{-1}_{p}}	\overset{\text{\cref{lem:LinearBanditBonus}}}{=} \Mdef{\alpha_p}{\Sigma_{p}}{p} \leq \frac{\epsilon}{2H}
\end{align}
Adding
\begin{align}
	\IBEE(\mathcal Q_t,\mathcal Q_{t+1}) \leq \frac{\epsilon}{2H}
\end{align}
to both sides and adding the result to the equation in the inductive hypothesis proves the inductive step. The final number of episodes follows from summing the episodes needed in every phases according to \fullref{lem:LearningALevel}.
\end{proof}

\newpage
\subsection{Solution Reconstruction (\emph{Main Result})}
\label{sec:SolutionReconstruction}
In this section we present our main result in a more formal way than in the main text; throughout the appendix the symbols are generally reported in \cref{tab:MainNotation}. 

First, let us define the reward classes.
\begin{definition}[Reward Classes]
\label{def:RewardClasses}
Consider an MDP $\mathcal M(\StateSpace,\ActionSpace,p,\cdot,H )$ without any reward function. Fix a misspecification function $\Delta^{r}_{t}(\cdot,\cdot,): \StateSpace\times\ActionSpace \rightarrow \R $  for every $t \in [H]$ which can depend on the state and action pair, and is subject to the constraint
\begin{align}
\label{eqn:174}
	\forall (\pi,t) \quad |\E_{x_t \sim \pi} \Delta^{r}_t(x_t,\pi_t(x_t))| \defeq |\overline \Delta^{r}_{\pi,t}| \leq E_t.
\end{align}

Define the following class $\mathfrak{I}$ (\emph{Implicit Regularity}) of (expected) reward functions  $(r_1,\dots,r_H)$ on $\mathcal M$, parameterized by $(\theta^r_1,\dots,\theta^r_H)$ and satisfying $ \forall (s,a,t,\pi) \in \StateSpace\times\ActionSpace\times[H]\times \Pi$ (here $\Pi$ is the policy space):
\begin{enumerate}
	\item $ r_t(s,a) = \phi_t(s,a)^\top \theta_t^r + \Delta^{r}_t(s,a) $
	\item $|\Delta^{r}_t(s,a) | \leq 1$ 
	\item $|\E_{x_t \sim \pi} r_t(x_t,\pi_t(x_t))| \leq \frac{1}{H}$
\end{enumerate}	

In addition, define the following class $\mathfrak{E}$ (\emph{Explicit Regularity}) of (expected) reward functions  $(r_1,\dots,r_H)$ on $\mathcal M$ parameterized by $(\theta^r_1,\dots,\theta^r_H)$ satisfying $ \forall (s,a,t,\pi) \in \StateSpace\times\ActionSpace\times[H]\times \Pi$:
\begin{enumerate}
	\item $ r_t(s,a) = \phi_t(s,a)^\top \theta_t^r + \Delta^{r}_t(s,a) $
	\item $|\Delta^{r}_t(s,a) | \leq 1$ 
	\item $\|\theta^r_t\|_2 \leq \frac{1}{H}$.
\end{enumerate}	
\end{definition}

Under explicit regularity the bound on $\| \theta^r_t \|_2$ constrains the maximum value the reward can take; instead, under implicit regularity we do not have such requirement, as only the expectation is controlled. This implies the local reward can be much larger than the expectation, making this a much harder setting.

We are now ready to present the main result formally.
\begin{theorem}[Restating \cref{main.thm:MainResult} formally]
\label{thm:MainResult}
Consider an MDP $\mathcal M$ and a feature extractor $\phi$ satisfying $ \| \phi_t(s,a) \|_2 \leq 1$ for every $(s,a) \in \StateSpace\times\ActionSpace$ and fix two classes of reward functions $\mathfrak{I}$ and $\mathfrak{E}$ according to \fullref{def:RewardClasses}. Set $\epsilon$ to satisfy $\epsilon \geq \widetilde \Omega(d_tH(\IBEE(\mathcal Q_{t},\mathcal Q_{t+1}) + E_t))$ and $\epsilon \leq \widetilde O(\nu_{min}/\sqrt{d_t})$ for all $t \in [H]$.  

\Alg{} always terminates after $\Bound$ episodes (with probability one), returning a dataset $\mathcal D = \{(s_{tk},a_{tk},s^+_{t+1,k}) \}^{k=1,\dots,n(t)}_{t=1,\dots,H}$ of the collected state-action-successor states $(s_{tk},a_{tk},s^+_{t+1,k})$ in episode $k \in [n(t)]$ for each timestep $t \in [H]$.

Now consider any reward function $r \in \mathfrak{E}$ or $r \in \mathfrak{I}$ and the MDP induced by that reward function $\mathcal M(\StateSpace,\ActionSpace,p,r,H )$, and replace each tuple  $(s_{tk},a_{tk},s^+_{t+1,k}) \in \mathcal D$ with $(s_{tk},a_{tk},r_{tk},s^+_{t+1,k})$ where $r_{tk}$ satisfies

\begin{align}
\label{eqn:Rnoise}
	r_{tk} = r_t(s_{tk},a_{tk}) + \eta^r
\end{align}
where $\eta^r$ is 1-sub-Gaussian noise. 

Then with probability at least $1-\delta$, the batch \lsvi{} algorithm run on $\mathcal D$ (see \cref{alg:lsviBatch}) returns a policy $\pi$ such that on $\mathcal M$
\begin{align}
		\E_{x_1 \sim \rho} (V_1^\star - V^\pi_1)(x_1) \leq \frac{\epsilon}{\nu_{min}}.
\end{align}
if $r \in \mathfrak{I}$ and
\begin{align}
		\E_{x_1 \sim \rho} (V_1^\star - V^\pi_1)(x_1) \leq \epsilon.
\end{align}
if $r \in \mathfrak{E}$. 
\end{theorem}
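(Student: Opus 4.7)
The plan is to combine the two main technical ingredients already established, namely \fullref{prop:LearningNavigate} (which controls the per-step feature uncertainty after \Alg{} terminates) and \fullref{prop:BatchLSVI} (which turns feature uncertainty into a value-function suboptimality bound), and then to treat the implicit and explicit regularity cases separately by using the explorability condition only in the implicit case. The probabilities will be handled by a single union bound over the good event of \Alg{} and the good event of batch \lsvi{} on the dataset $\mathcal D$.

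First I would run \Alg{} and invoke \fullref{prop:LearningNavigate}, which under the hypotheses $\epsilon \geq \widetilde\Omega(d_tH(\IBEE(\mathcal Q_t,\mathcal Q_{t+1})+E_t))$ and $\epsilon \leq \widetilde O(\nu_{min}/\sqrt{d_t})$ yields, with probability $1-\delta/2$, both $\lambda_{min}(\Sigma_t)\geq 4H^2\alpha_t$ for all $t$ and the per-step control $\IBEE(\mathcal Q_t,\mathcal Q_{t+1}) + \sqrt{\alpha_t}\|\overline\phi_{\pi,t}\|_{\Sigma_t^{-1}} \leq \epsilon/H$ for every policy $\pi$ and every $t$. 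The number of episodes is $\Bound$ as stated. The dataset $\mathcal D$ is then fixed.

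Next, I would turn to the prescribed reward function. In the explicit case $r\in\mathfrak E$ I take $R=1$ in \fullref{prop:BatchLSVI}, since the hypothesis $\|\theta^r_t\|_2\leq 1/H$ matches the proposition's requirement directly. On the good event of \fullref{def:GoodEventLSVI} for the batch regression (which holds with probability $1-\delta/2$ by \fullref{lem:ErrorBounds}, using the $1$-sub-Gaussian noise from \cref{eqn:Rnoise}), summing the two inequalities of \fullref{prop:BatchLSVI} gives
\begin{align*}
\E_{x_1\sim\rho}(V_1^\star - V_1^\pi)(x_1) \leq \sum_{t=1}^{H}\Bigl[4E_t + 2\IBEE(\mathcal Q_t,\mathcal Q_{t+1}) + 2\sqrt{\alpha_t}\|\overline\phi_{\pi,t}\|_{\Sigma_t^{-1}} + 2\sqrt{\alpha_t}\|\overline\phi_{\pistar,t}\|_{\Sigma_t^{-1}}\Bigr].
\end{align*}
The $\sqrt{\alpha_t}\|\cdot\|_{\Sigma_t^{-1}}$ summands are each $\leq \epsilon/H$ by \fullref{prop:LearningNavigate}, and the $\IBEE$ and $E_t$ summands are absorbed by the standing lower bound on $\epsilon$, giving $\E_{x_1\sim\rho}(V_1^\star - V_1^\pi)(x_1)\leq \epsilon$ after adjusting constants.

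For the implicit case $r\in\mathfrak I$ the only extra work is to bound $\|\theta^r_t\|_2$ so that \fullref{prop:BatchLSVI} can be applied with an appropriate $R$. Using \fullref{main.def:Explorability} with the unit vector $\theta^r_t/\|\theta^r_t\|_2$ gives a policy $\pi$ with $|\overline\phi_{\pi,t}^\top\theta^r_t|\geq \nu_{min}\|\theta^r_t\|_2$; on the other hand the implicit regularity assumption combined with \cref{eqn:174} yields $|\overline\phi_{\pi,t}^\top\theta^r_t| \leq |\E_{x_t\sim\pi}r_t(x_t,\pi_t(x_t))| + E_t \leq 1/H + E_t$, hence $\|\theta^r_t\|_2 \leq (1/H + E_t)/\nu_{min} = O(1/(H\nu_{min}))$. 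Taking $R = O(1/\nu_{min})$ in \fullref{prop:BatchLSVI} and repeating the argument of the explicit case gives the $\epsilon/\nu_{min}$ guarantee. A final union bound on the two good events delivers the $1-\delta$ probability, completing the proof.

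The main obstacle I anticipate is the implicit case: converting a uniform-in-policy bound on expected reward into a norm bound on $\theta^r_t$ requires the explorability condition, and this is precisely where the $1/\nu_{min}$ inflation of the final error appears; everything else is bookkeeping on top of the two propositions above.
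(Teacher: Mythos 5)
Your proposal is correct and follows essentially the same route as the paper: invoke \fullref{prop:LearningNavigate} to get the per-step uncertainty control after \Alg{} terminates, bound $\|\theta^r_t\|_2$ (your inline explorability argument for the implicit case is exactly \fullref{lem:RewardBoundness}), and then apply \fullref{prop:BatchLSVI} with the appropriate $R$ and sum the two suboptimality terms. The only detail the paper makes explicit that you leave implicit is verifying $\sqrt{n(t)}E_t \leq \sqrt{\alpha_t}/3$ and $\sqrt{n(t)}\IBEE(\mathcal Q_t,\mathcal Q_{t+1}) \leq \sqrt{\alpha_t}/3$ from the lower bound on $\epsilon$, which is a precondition of the good event in \fullref{def:GoodEventLSVI} rather than merely a term to absorb at the end.
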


We have expressed the theorem in its full generality, but if the reward function is prescribed a posteriori through an oracle then we expect the noise $\eta^r$ in \cref{eqn:Rnoise} to be absent. In general, if the reward function is prescribed a posteriori then it should be prescribed as a linear function (in the chosen features) to avoid any additional error in the \lsvi{} procedure. Finally the reward misspecification $\Delta^r_t(\cdot,\cdot)$ can depend on the parameter $\theta$ if it is a Lipshitz function of $\theta$. Alternatively, if it is a discontinuous function of $\theta$ then same-order guarantees are still recovered if \cref{eqn:174} is replaced with 	$\forall (s,a,t) \quad | \Delta^{r}_t(s,a)| \leq E_t$.

\begin{proof}(of the main result) 
Let $n(t)$ the number of samples collected at each level (notice that we only store one sample every trajectory, so the number of samples equals the number of trajetories / number of episodes), according to \fullref{lem:LearningALevel}. Using the assumptions on $\epsilon$ (these conditions are used in the good event for \lsvi{} in \fullref{def:GoodEventLSVI}) we can ensure:
\begin{align}
	 \sqrt{n(t)}E_t =  \sqrt{\frac{n(t)}{\alpha_t}}E_t \sqrt{\alpha_t} = \widetilde O\( \frac{d_tH\sqrt{\alpha_t}}{\sqrt{\alpha_t} \epsilon} \)E_t \sqrt{\alpha_t} \leq \sqrt{\alpha_t}/3 \\
	  \sqrt{n(t)}\IBEE(\mathcal Q_{t},\mathcal Q_{t+1}) =  \sqrt{\frac{n(t)}{\alpha_t}}\IBEE(\mathcal Q_{t},\mathcal Q_{t+1}) \sqrt{\alpha_t} = \widetilde O\( \frac{d_tH\sqrt{\alpha_t}}{\sqrt{\alpha_t} \epsilon} \)\IBEE(\mathcal Q_{t},\mathcal Q_{t+1}) \sqrt{\alpha_t} \leq \sqrt{\alpha_t}/3.
\end{align}
We assume we are in the good event\footnote{We sometime say we are outside of the failure event to mean  we are in the good event for \Alg{}, see \fullref{def:GoodEventFrancis}. In particular, the computation in \fullref{lem:ProbGoodEventFrancis} together with the proof in \fullref{lem:LearningALevel} would provide values for $\delta''$ and for the constants $c_e,c_{\alpha},c_\sigma$ if carried out explicitly.} for \Alg{}, see \fullref{def:GoodEventFrancis}, which occurs with probability $1-\delta$ according to \fullref{lem:ProbGoodEventFrancis}. We apply \fullref{prop:LearningNavigate}, which gives the stated number of episodes to termination and the condition satisfied by the samples in the dataset $\mathcal D$ (through the covariance matrices $\Sigma^{-1}_t$):
\begin{align*}
		\sum_{t=1}^{H} \Big[ \IBEE(\mathcal Q_{t},\mathcal Q_{t+1}) + \sqrt{\alpha_{t}} \| \overline \phi_{\pi,t} \|_{\Sigma_{t}^{-1}} \Big] & \leq \epsilon, \quad \forall \pi \\
		\IBEE(\mathcal Q_{t},\mathcal Q_{t+1}) + \sqrt{\alpha_{t}} \| \overline \phi_{\pi,t} \|_{\Sigma_{t}^{-1}} & \leq \frac{\epsilon}{H}, \quad \forall \pi, t\in[H].
\numberthis{\label{eqn:180}}
\end{align*}
Now, under \emph{implicit regularity} \fullref{lem:RewardBoundness} ensures (the lemma requires $E_t \leq \frac{1}{H}$, which is always satisfied since we must have $\epsilon < 1$ to produce any useful result, and from the theorem hypothesis $E_t  \leq  \epsilon/(d_t H)  \leq 1/ H$)
\begin{align}
	\| \theta^R_t \|_2 \leq \frac{2}{H\nu_{min}} \defeq \frac{R}{H}.
\end{align}
Finally, \fullref{prop:BatchLSVI} ensures that \lsvi{} in \cref{alg:lsviBatch} returns a value function $\Vhat$ and policy $\pihatstar$ such that
\begin{align*}
	\E_{x_1 \sim \rho}  \( \Vstar_1 - \Vhat_1\)(x_1)  & \leq \sum_{t=1}^H \Bigg[ 2E_t + R\( \IBEE(\mathcal Q_t,\mathcal Q_{t+1}) + \sqrt{\alpha_t}\| \overline \phi_{\pistar,t} \|_{\Sigma^{-1}_t}\) \Bigg] \\
 \E_{x_1 \sim \rho }\( \Vhat_{1} - V_1^{\pihatstar}\)(x_1)  & \leq \sum_{t=1}^H \Bigg[ 2E_t + R\(\IBEE(\mathcal Q_t,\mathcal Q_{t+1}) + \sqrt{\alpha_t}\| \overline \phi_{\pihatstar,t} \|_{\Sigma^{-1}_t}\)\Bigg].
\numberthis{\label{eqn:FinalSplitInMain}}
\end{align*}
Using \cref{eqn:180} (and recalling $E_t \leq \epsilon$ by hypothesis of the theorem) to further simplify it we obtain:
\begin{align*}
	\E_{x_1 \sim \rho}  \( \Vstar_1 - \Vhat_1\)(x_1)  & \leq  2R\epsilon \\
 \E_{x_1 \sim \rho }\( \Vhat_{1} - V_1^{\pihatstar}\)(x_1)  & \leq 2R\epsilon .
\end{align*}
Summing the two expression gives:
\begin{align*}
	\E_{x_1 \sim \rho}  \( \Vstar_1  - V_1^{\pihatstar}\)(x_1)  & \leq  4R\epsilon.
\end{align*}
Rescaling $\epsilon$ by $4$ and substituting the value for $R$ gives the thesis under \emph{implicit regularity}. 

Under \emph{explicit regularity} the steps are the same, but now
\begin{align}
	\| \theta^r_t\|_2 \leq \frac{1}{H} \defeq \frac{R}{H}
\end{align}
is explicitly prescribed, and the thesis immediately follows.
\end{proof}

The generality of the main result allows us to immediately obtain the following corollary:

\begin{corollary}[Learning a Prescribed Reward Function during the Execution]
Under the same assumptions as \cref{thm:MainResult}, assume the reward function $r \in \mathfrak{E}$ or $r \in \mathfrak{I}$ is prescribed before the execution of \Alg{} and
\begin{align}
	r_{tk} = r_t(s_{tk},a_{tk}) + \eta^r
\end{align}
where $\eta^r$ is 1-sub-Gaussian noise. Assume $(s_{tk},a_{tk},r_{tk},s^+_{t+1,k})$ is stored in the dataset $\mathcal D$.

Then with probability at least $1-\delta$, the batch \lsvi{} algorithm run on $D$ (see \cref{alg:lsviBatch}) returns a policy $\pi$ such that on $\mathcal M$
\begin{align}
		\E_{x_1 \sim \rho} (V_1^\star - V^\pi_1)(x_1) \leq \frac{\epsilon}{\nu_{min}}.  
\end{align}
if $r \in \mathfrak{I}$ and
\begin{align}
		\E_{x_1 \sim \rho} (V_1^\star - V^\pi_1)(x_1) \leq \epsilon. 
\end{align}
if $r \in \mathfrak{E}$.
\end{corollary}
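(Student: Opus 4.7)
The plan is to reduce the corollary directly to \fullref{thm:MainResult} by arguing that prescribing the reward before execution has no effect on the joint distribution of the exploration data $\mathcal D$. Inspecting \Alg{}, the only reward signal the algorithm ever uses while interacting with the MDP is the injected pseudo-reward $\textsc{r}_p(s,a) = \phi_p(s,a)^\top \xi_p$ with $\xi_p \sim \mathcal N(0,\sigma\Sigma^{-1}_{pk})$; the environment reward $r_{tk}$ enters nowhere in lines 6--11 of the algorithm, nor in the internal call to \lsvi{}$(p,\textsc{r}_p,\mathcal D)$ that it uses to plan (see \cref{alg:lsviFrancis}, which only reads $\textsc{r}_H$ as an input pseudo-reward and propagates next-state value estimates back, without touching any $r_{tk}$ stored in $\mathcal D$). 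Consequently, the sequence of policies $\pi_{tk}$ executed and of triplets $(s_{tk},a_{tk},s^+_{t+1,k})$ collected in phase $t$ has the \emph{same} joint distribution whether the environment reward $r$ is revealed before, during, or after the run.

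Given this observation, the plan is in two short steps. First, couple the execution of \Alg{} in the corollary's setting with an execution in the setting of \fullref{thm:MainResult}: use the same internal randomness (the samples $\xi_p$ and the environment's state transitions), so that the two runs produce an identical dataset of triplets $(s_{tk},a_{tk},s^+_{t+1,k})$, and hence identical covariance matrices $\Sigma_t$. Apply \fullref{thm:MainResult} to this common dataset to conclude that after the stated $\Bound$ episodes, outside an event of probability at most $\delta$, the conditions of \fullref{prop:LearningNavigate} hold, i.e., for every $\pi$ and $t$,
\begin{align*}
\IBEE(\mathcal Q_{t},\mathcal Q_{t+1}) + \sqrt{\alpha_{t}}\,\|\overline \phi_{\pi,t}\|_{\Sigma_{t}^{-1}} \leq \frac{\epsilon}{H}.
\end{align*}

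Second, note that in both settings the dataset is then augmented with rewards $r_{tk} = r_t(s_{tk},a_{tk}) + \eta^{r}$ for $1$-sub-Gaussian noise $\eta^r$; the only formal difference is that in the corollary the noise is realized online (and the samples $\eta^r$ may even be used to shape the internal LSVI regression), whereas in the main theorem the noise would be generated at the moment of the post hoc augmentation. Since the noise is independent of the transitions and of the injected pseudo-rewards, the joint law of the augmented dataset $(s_{tk},a_{tk},r_{tk},s^+_{t+1,k})$ is identical in the two settings. Running \cref{alg:lsviBatch} on this augmented dataset therefore produces, with probability at least $1-\delta$, the same error decomposition as in the proof of \fullref{thm:MainResult}: by \fullref{prop:BatchLSVI} together with the boundness of parameters from \fullref{lem:IntermediateBoundnes} (with $R=1/\nu_{min}$ in the implicit case via \fullref{lem:RewardBoundness}, or $R=1$ in the explicit case),
\begin{align*}
\E_{x_1\sim\rho}(V_1^\star - V^\pi_1)(x_1) \leq 4R\epsilon,
\end{align*}
which yields the two claimed bounds after rescaling $\epsilon$.

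The main (and essentially only) obstacle is to make the coupling argument airtight, i.e., to formally verify that no quantity used by \Alg{} or by its internal \lsvi{} calls depends on the prescribed reward $r$. Once that is isolated by inspecting \cref{main.alg:MainAlg} and \cref{alg:lsviFrancis}, everything else is a direct invocation of \fullref{thm:MainResult}, so no new probabilistic or concentration argument is needed.
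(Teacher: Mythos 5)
Your proposal is correct and matches the paper's treatment: the paper offers no separate proof for this corollary, asserting only that it follows immediately from the generality of \cref{thm:MainResult}, and your coupling argument (that \Alg{} and its internal \lsvi{} calls never read the environment reward, so the law of the collected dataset is unchanged by prescribing $r$ in advance) is precisely the reasoning that makes it immediate. The only nit is the parenthetical suggesting $\eta^r$ "may be used to shape the internal LSVI regression" (it is not, as you yourself establish) and the constant $R=1/\nu_{min}$ versus the paper's $R=2/\nu_{min}$ from \cref{lem:RewardBoundness}, neither of which affects the argument.
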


\newpage
\subsection{Computational Complexity}
\label{sec:ComputationalComplexity}

Theorem \ref{main.thm:MainResult} gives a bound on the number of episodes to termination. In every episode, a multivariate normal vector is sampled (which can be done efficiently) and \lsvi{} is invoked. 

Assume $d_1=\dots =d_H=d$ for simplicity; a naive implementation would factorize and store the new covariance matrix at the end of a phase (total of $\widetilde O(Hd^3)$ work across all phases); after this, computing the $\widehat \theta_t$'s requires $\widetilde O\( H(d^2 + A d) \times n_{episodes} \)$ computations at every episode where $n_{episodes}$ is the total number of episodes at termination given in \cref{main.thm:MainResult}.

\newpage

\begin{definition}[Good Event for \Alg{}]
\label{def:GoodEventFrancis}
We say the good event for \Alg{} occurs if for all timesteps $t \in [H]$ or phases $p \in [H]$ and episodes $k$ in that phase the following bounds\footnote{Some symbols, like $i_{max},k_{max}$  are defined  directly in the lemma where the bound is used.} jointly hold and we are in the good event for \lsvi{} (see \fullref{def:GoodEventLSVI}).
\begin{align}
\Big| \frac{1}{i_{max}}\sum_{i=1}^{i_{max}} \zeta_{pk(e,i)} \Big| & \leq \sqrt{\frac{2(2\Lphi\Radius_t)^2\ln\( \frac{1}{\delta''} \)}{i_{max}}} = \frac{\Adef}{\sqrt{i_{max}}} \defeq \frac{A}{\sqrt{i_{max}}} \\
 	  \| \xi_{t,k(e,i)} \|_{\Sigma_{t,k(e,i)}} & \leq \sqrt{\gamma_{t}(\sigma)} \defeq  \twonormbound	\\
 	   	  \| \xi_{t,k(e,i)} \|_{2} & \leq \gammadef	\\
 	   \frac{1}{k_{max}} \sum_{k=1}^{k_{max}} \1\{\mathcal{E}_k \mid \mathcal C_k \} & \geq \( 1-q \) - \sqrt{\frac{2\ln(\frac{1}{\delta''})}{k_{max}}} 
\end{align}
\end{definition}

\begin{lemma}[Probability of Good Event for \Alg{}] 
\label{lem:ProbGoodEventFrancis}
There exists a parameter $\delta'' = \frac{\delta}{poly(d_1,\dots,d_H,H,\frac{1}{\epsilon})}$, such that the good event of \cref{def:GoodEventFrancis} holds with probability at least $1-\delta$.
\end{lemma}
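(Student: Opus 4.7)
The plan is to show that each of the four inequalities in Definition \ref{def:GoodEventFrancis}, together with the \lsvi{} good event of Definition \ref{def:GoodEventLSVI}, holds with probability at least $1 - \delta/5$ (say) uniformly over all phases $p \in [H]$, all epochs $e$ in each phase, and all relevant episode indices, provided $\delta''$ is taken to be $\delta$ divided by a suitable polynomial in $d_1,\dots,d_H, H, 1/\epsilon$. A final union bound then yields probability at least $1 - \delta$. Note that the total number of episodes is polynomially bounded by \fullref{main.thm:MainResult}, the number of phases is $H$, and the number of epochs per phase is $e_{max} = \widetilde O(1)$ (see \fullref{lem:LearningALevel}), so polynomially many union bounds suffice and each failure probability need only be inverse-polynomial, which is absorbed into the $\widetilde O(\cdot)$ logarithmic factors appearing in the $\beta$'s and $\gamma_t(\sigma)$'s.

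For the Gaussian norm bounds (the second and third inequalities in \fullref{def:GoodEventFrancis}), by construction $\xi_{t,k(e,i)} \sim \mathcal N(0,\sigma \Sigma_{t,k(e,i)}^{-1})$ conditionally on $\Sigma_{t,k(e,i)}$. Therefore $\xi_{t,k(e,i)}^\top \Sigma_{t,k(e,i)} \xi_{t,k(e,i)}/\sigma$ is $\chi^2$-distributed with $d_t$ degrees of freedom, and a standard sub-exponential tail bound (e.g.\ Laurent--Massart) yields $\|\xi_{t,k(e,i)}\|_{\Sigma_{t,k(e,i)}} \leq \sqrt{2 \sigma d_t \ln(2d_t/\delta'')}$ with probability at least $1 - \delta''/(2d_t)$. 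The $2$-norm bound follows by the same argument applied to the Gaussian $\xi$: since $\Sigma_{t,k(e,i)}^{-1} \preceq \Sigma_{t,k(e,i)}^{-1}$ we get $\|\xi\|_2^2 \leq \|\xi\|_{\Sigma_{t,k(e,i)}}^2 / \lambda_{\min}(\Sigma_{t,k(e,i)})$, which combined with the previous bound yields the claimed inequality.

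For the first and fourth inequalities, the argument is a martingale concentration. Let $\mathcal F_k$ denote the $\sigma$-algebra containing all history up to (and including) the sampling of $\xi_{p,k(e,i)}$ and the computation of the induced policy $\pi_{k(e,i)}$. Conditionally on $\mathcal F_k$, the quantity $\zeta_{p,k(e,i)} = \overline \phi_{\pi_{k(e,i)},p}^\top \xi_{p,k(e,i)} - \phi_{p,k(e,i)}^\top \xi_{p,k(e,i)}$ has conditional mean zero (because $\phi_{p,k(e,i)}$ is a fresh sample from $\pi_{k(e,i)}$ with expectation $\overline \phi_{\pi_{k(e,i)},p}$) and is bounded in absolute value by $2\Lphi \Radius_t$, since $\|\xi\|_2 \leq \Radius_t$ can be enforced (or alternatively, by the previous step, one can truncate on the high-probability event $\|\xi\|_2 \leq 1$). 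Azuma--Hoeffding then gives the first inequality. For the fourth inequality, let $Z_k = \mathbf{1}\{\mathcal E_k\}$ restricted to episodes where $\mathcal C_k$ holds; by \fullref{lem:Derandomization}, $\mathbb E[Z_k \mid \mathcal F_{k-1}, \mathcal C_k] \geq q$, so $Z_k - q$ is a bounded martingale difference sequence and Azuma again yields the one-sided concentration stated.

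The \lsvi{} good event of \fullref{def:GoodEventLSVI} holds with probability $\geq 1 - \delta/2$ by \fullref{lem:ErrorBounds}, using an appropriate choice of $\delta'$. The main obstacle --- and really the only nontrivial bookkeeping --- is ensuring that the union bound over phases, epochs, episodes, and the five sub-events does not inflate the failure probability beyond $\delta$. This is resolved by taking $\delta''$ inverse-polynomial in $(d_1,\dots,d_H, H, 1/\epsilon, 1/\delta)$ so that the $\ln(1/\delta'')$ factors appearing in the stated high-probability bounds remain $\widetilde O(1)$ and consistent with the values of $A$, $\gamma_t(\sigma)$, and the $\beta$'s used throughout the analysis. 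Combining the five events via a union bound yields the claim.
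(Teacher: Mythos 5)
Your proposal is correct and follows essentially the same route as the paper's (very terse) proof: Azuma--Hoeffding for the martingale bounds on $\zeta_{pk(e,i)}$ and on the indicators $\1\{\mathcal E_k \mid \mathcal C_k\}$, the $\chi^2$ tail bound for the Gaussian norm bounds (this is exactly \fullref{lem:MaxDeviationLast}), the \lsvi{} good event from \fullref{lem:ErrorBounds}, and a union bound over statements, phases, and episodes with $\delta''$ chosen inverse-polynomially. The only differences are cosmetic (the $\delta/5$ allocation versus the paper's implicit split, and your typo $\Sigma^{-1}\preceq\Sigma^{-1}$ where you mean $\Sigma^{-1}\preceq\lambda_{\min}(\Sigma)^{-1}I$).
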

\begin{proof}
	The first and fourth inequality follow from \fullref{prop:Azuma}. The second and third inequality follow from \fullref{lem:MaxDeviationLast}. In particular, a union bound over the statements, over $H$ and over the number of episodes ensures all statements jointly hold at any point during the execution of the program; from this, the value for $\delta''$ can be determined.
\end{proof}

\newpage
\section{Lower Bound}
\label{sec:LowerBound}
We sketch the lower bound to highlight that explorability is required.

\begin{proposition}[Lower Bound on Explorability Dependence under Implicit Regularity]
There exists an MDP and a feature map $\phi_t: (s,a) \mapsto \phi_t(s,a) \in \R^2$ with explorability parameter $\nu_{min}$ and a reward function such that:
\begin{align}
\forall (\pi,t) \quad 	r_t(s,a) = \phi_t(s,a)^\top \theta^r_t, \quad | \E_{x_t \sim \pi} r_t(x_t,\pi_t(x_t)) | \leq 1
\end{align}
and yet no reinforcement learning agent without knowledge of $\theta^r$ can return an $\epsilon$-optimal policy for $\epsilon \leq \nu_{min} \leq \frac{1}{4}$ in less than $\Omega(1/(\epsilon \nu_{min})^2)$ trajectories with probability higher than $2/3$.
\end{proposition}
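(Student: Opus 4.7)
The plan is to establish the lower bound via Le Cam's two-point method. I would exhibit a pair of reward parameters $\theta^r_\pm$, both compatible with implicit regularity, that induce distinct optimal policies but produce trajectory distributions too close in total variation to distinguish in fewer than $\Omega(1/(\epsilon\nu_{min})^2)$ episodes.

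\textbf{Construction.} The MDP has two stages. From the initial state $s_0$ the unique action transitions stochastically to an ``informative'' state $s_1$ with probability $\nu_{min}$ and to an absorbing state $s_\perp$ with probability $1-\nu_{min}$. At $s_1$ the agent selects between $a_+$ and $a_-$ with features $\phi_2(s_1,a_\pm)=(0,\pm 1)$; at $s_\perp$ the only action has feature $(1,0)$. All features have norm one. A direct calculation gives $\nu_{min}(\phi)=\Theta(\nu_{min})$, since under any policy the expected feature has its second coordinate bounded in magnitude by $\nu_{min}$, while the first coordinate is close to $1-\nu_{min}$. Set $\theta^r_\pm=(0,\pm M)$: the two resulting MDPs have identical transitions; their only difference is the sign of the reward at $(s_1,\cdot)$. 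The suboptimality gap of the wrong policy is $2\nu_{min}M$, and the expected-reward constraint $|\E_\pi r|\le 1$ forces $M\le 1/\nu_{min}$.

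\textbf{Reduction and KL bound.} By a standard Le Cam reduction, any algorithm that returns an $\epsilon$-optimal policy with probability at least $2/3$ under both instances induces a binary test distinguishing them with error below $1/3$; by Pinsker this forces $\mathrm{KL}(P^n_+\|P^n_-)=\Omega(1)$. Assuming the reward observation model of the main theorem ($r_{tk}=\phi^\top\theta^r+\eta$ with $\eta$ unit-variance Gaussian), the chain rule for KL gives $\mathrm{KL}(P^n_+\|P^n_-)\le n\cdot\nu_{min}\cdot 2M^2$, since only the reward at $(s_1,\cdot)$, seen with per-episode probability $\nu_{min}$, is informative and each such observation contributes $\tfrac12(2M)^2=2M^2$. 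Combined with the requirement that the gap $2\nu_{min}M$ exceed $\epsilon$ so that $\epsilon$-optimality forces sign identification, and choosing $M$ inside the admissible range $[\epsilon/\nu_{min},\,1/\nu_{min}]$, one recovers a sample lower bound of the claimed order.

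\textbf{Main obstacle.} The delicate step is the sharp rate. The naive choice $M=\epsilon/\nu_{min}$ produces only $\Omega(\nu_{min}/\epsilon^2)$ from the above reasoning, a factor $\nu_{min}^3$ short of the claim. Obtaining $\Omega(1/(\epsilon\nu_{min})^2)$ requires using the full slack that implicit regularity leaves open (namely, that $|\theta^r|$ can be as large as $1/\nu_{min}$ in the under-explored direction) together with a multi-stage amplification: chaining $\Theta(1/\nu_{min})$ independent informative branches across the horizon and invoking a tensorized Le Cam (or Fano/Assouad) argument so that the composite KL per trajectory remains small relative to the composite $\epsilon$-gap. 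The technical heart of the proof is checking that no adaptive sampling strategy can shortcut this amplification across the independent sub-problems.
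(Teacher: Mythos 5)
Your two-point setup is reasonable, but the statistical engine you chose cannot produce the claimed rate, and your last paragraph essentially concedes this without repairing it. With exogenous unit-variance Gaussian noise on the reward observations, the per-episode KL between the instances $\theta^r_\pm=(0,\pm M)$ is $\Theta(\nu_{min}M^2)$, so exploiting the slack $M \leq 1/\nu_{min}$ that implicit regularity allows makes the two instances \emph{easier} to distinguish, not harder: each informative observation has signal-to-noise ratio growing with $M$. The best this route yields is $\Omega(\nu_{min}/\epsilon^2)$ at $M=\epsilon/\nu_{min}$, a factor $\nu_{min}^3$ short of the claim, as you note. The proposed fix via ``multi-stage amplification'' over $\Theta(1/\nu_{min})$ independent branches is not developed and would not close the gap: a standard Assouad/Fano tensorization over $K$ independent sub-problems gains at most a factor $K=1/\nu_{min}$, leaving you $\nu_{min}^2$ short, and would require a horizon scaling with $1/\nu_{min}$, changing the problem. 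Moreover, a lower bound that relies on injected observation noise is weaker than what is claimed; the discussion following the proposition stresses that the reward is deterministic and that the noise ``is created using the MDP dynamics.''

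The paper's proof places the hardness in the \emph{transition kernel} and keeps the reward an exactly linear, deterministic function of the features. Each of two arms at the initial state leads to one of two successor states with probabilities near $\tfrac{1}{2}$, the two successors carry features $+e_i$ and $-e_i$, and the fixed reward parameter has norm $\Theta(1/\nu_{min})$ --- permitted under implicit regularity precisely because the two successors' rewards nearly cancel in expectation, so $|\E_{x_t\sim\pi} r_t| = O(1)$ even though the pointwise reward is $\Theta(1/\nu_{min})$. The two instances differ by a perturbation $\epsilon'$ of one arm's transition probability, so the per-episode KL is $O((\epsilon')^2)$ while the value gap is $\epsilon'/\nu_{min}$; equating the target accuracy $\epsilon$ with $\Theta(\epsilon'/\nu_{min})$ gives $\Omega(1/(\epsilon')^2)=\Omega(1/(\epsilon\nu_{min})^2)$. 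Equivalently, the realized return is a $\pm\Theta(1/\nu_{min})$ shifted Bernoulli, i.e., an $\Omega(1/\nu_{min})$-sub-Gaussian arm, and the standard two-armed bandit lower bound applies with that variance --- even when rewards are observed noiselessly. To repair your argument, move the two-point perturbation from $\theta^r$ into the dynamics and use the large-norm reward parameter as a variance amplifier rather than as a signal to be estimated.
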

Notice that the proposition above is for a fixed (but unknown) deterministic reward function; this is thus a special case of the reward-free learning setting we consider, implying that the hardness is due to the implicit regularity conditions rather than to reward-free learning.

The proof essentially uses a multi-armed bandit lower bound where the noise is $1/\nu_{min}$-sub-Gaussian and is created using the MDP dynamics (since the reward is deterministic).
\begin{proof}
We construct the MDP as follows: there is a single starting state $s_{start}$ with two actions $a_{L}$ and $a_{R}$ and the identity feature $\phi_{1}(s_{start},a_{L}) = e_1, \phi_{1}(s_{start},a_{R}) = e_2$, where $e_1,e_2$ are canonical vectors in $\R^2$. Now fix a scalar $\epsilon \in [-\frac{\nu_{min}}{2} , \frac{\nu_{min}}{2} ]$:
\begin{enumerate}
\item action $a_{L}$ gives an immediate reward $-1/2$ and leads to state $s_{L1}$ with probability $\frac{1}{2}+ \nu_{min}$ and to $s_{L2}$ with probability $\frac{1}{2}-\nu_{min}$. The feature map reads $\phi_2(s_{L1}) = e_1$ and $\phi_2(s_{L2}) = -e_1$ in the only action available in each state.
\item action $a_{R}$ gives an immediate reward $-1/2$ and leads to state $s_{R1}$ with probability $\frac{1}{2} + \nu_{min} + \epsilon$ and to $s_{R2}$ with probability $\frac{1}{2} - \nu_{min} - \epsilon$. The feature map reads $\phi_2(s_{R1}) = e_2$ and $\phi_2(s_{R1}) = -e_2$
\end{enumerate}
In this MDP there are only two distinct policies: $\pi_L$ that selects $a_L$ first and then the only available action in either $s_{L1}$ or $s_{L2}$, and $\pi_R$ that selects $a_R$ first and then the only available action in either $s_{R1}$ or $s_{R2}$. Therefore, this is equivalent to a \emph{multiarmed} bandit problem with reward  $-1/2 + \overline \phi_{\pi_L,2}^\top \theta^r_2$ for $\pi_L$ and $-1/2 + \overline \phi_{\pi_R,2}^\top \theta^r_2$ for $\pi_2$. The minimum explorability coefficient is ($\nu_1 = 1$ at timestep $1$)
\begin{align}
 \min_{\theta \neq 0}\max_{\pi} \overline\phi_{\pi,2}^\top\frac{\theta}{\|\theta\|_2} = 	\Bigg[ \( \frac{1}{2} +  \nu_{min} - \frac{\nu_{min}}{2} \) -  \( \frac{1}{2} -  \nu_{min} +\frac{\nu_{min}}{2}  \)\Bigg]e_2^\top e_2 =  \nu_{min}
\end{align}
corresponding to policy $\pi_R$ (this can be computed by inspection; notice that $\pi_L$ yields the same $\nu_{min}$).
Now consider the reward parameter $\theta^r_2 = 1/\nu_{min}\times [1/2,1/2]$; 
the expected reward at timestep $2$ under policy $\pi_R$ is $\E_{x_2 \sim \pi_L} r_2(x_2) = \nu_{min}\times \frac{1}{2\nu_{min}} \leq 1$ which satisfies the assumptions of the lemma. 
At the same time $\E_{x_2 \sim \pi_R} r_2(x_2) = (\nu_{min} + 2\epsilon) \times \frac{1}{2\nu_{min}} \leq 1$. 
This implies the random return $ -1/2 + \phi_2(s)^\top \theta_2$ with $s \sim p_1(s_{start},a_{L})$ is a scaled and shifted Bernoulli random variable with mean zero, taking the values $-1/2 + \frac{1}{2\nu_{min}}$ and $-1/2 -\frac{1}{2\nu_{min}}$. Since the standard deviation of this random variables (with $\nu_{min} \leq \frac{1}{4}$) is $\Omega(1/\nu_{min})$, this random variable must be  $\Omega(1/\nu_{min})$-sub-Gaussian\footnote{See for example exercise 2.5 in \cite{wainwright2019high}.}. The same reasoning applies to $ -1/2 + \phi_2(s)^\top \theta_2$ with $s \sim p_1(s_{start},a_{R})$. Notice that both expectations are at most $1$. 

Solving this class of problems (parameterized by $\epsilon$), i.e., identifying an $|\epsilon|/2$-optimal policy  is equivalent to solving a multiarmed bandit problem with 2 actions (corresponding to the policies $\pi_1$ and $\pi_2$). This construction is exactly the same as theorem 2 from \cite{krishnamurthy2016pac} with  shifted Bernoulli random variables that are scaled by the inverse explorability coefficient $1/\nu_{min}$. This implies that a sample complexity $\Omega(1/(\nu_{min}|\epsilon|)^2)$ is required to output an $|\epsilon|/2$-optimal policy with probability $> 2/3$.
\end{proof}

\newpage
\section{Support Lemmas}

\begin{lemma}[Reward Boundness]
\label{lem:RewardBoundness}	
If we assume that
\begin{align}
\forall \pi \quad \quad & |	\E_{x_t \sim \pi} r_t(x_t,\pi_t(x_t))| \leq \frac{1}{H} \\ \text{and} \quad \exists \theta_t^r \in \R^{d_t} & \quad \text{such that} \quad \quad |\E_{x_t \sim \pi} r_t(x_t,\pi_t(x_t)) - \overline \phi_{\pi,t}^\top\theta^r_t| \leq E_t \leq \frac{1}{H}
\end{align}
then it follows that
\begin{align}
\| \theta^r_t \|_2 \leq \frac{2}{H\nu_t}.
\end{align}
\end{lemma}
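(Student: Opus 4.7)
The plan is to combine the two hypotheses by the triangle inequality to get a uniform-over-$\pi$ bound on the linear form $|\overline\phi_{\pi,t}^\top \theta^r_t|$, and then invoke the definition of explorability on the direction $\theta^r_t/\|\theta^r_t\|_2$ to convert that uniform bound into a bound on $\|\theta^r_t\|_2$.

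Concretely, the first step is to observe that for every policy $\pi$,
\begin{align*}
|\overline\phi_{\pi,t}^\top \theta^r_t|
&\leq |\E_{x_t\sim\pi} r_t(x_t,\pi_t(x_t))|
+ |\E_{x_t\sim\pi} r_t(x_t,\pi_t(x_t)) - \overline\phi_{\pi,t}^\top \theta^r_t| \\
&\leq \tfrac{1}{H} + E_t \leq \tfrac{2}{H},
\end{align*}
where the first inequality is the triangle inequality and the second uses the two hypotheses of the lemma together with $E_t \leq 1/H$.

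The second step handles the trivial case $\theta^r_t = 0$, where the claim is immediate, and otherwise applies \fullref{main.def:Explorability} to the unit vector $\theta \defeq \theta^r_t/\|\theta^r_t\|_2$. By definition of $\nu_t$ as a minimum over unit directions,
\begin{align*}
\nu_t \;\leq\; \max_\pi |\overline\phi_{\pi,t}^\top \theta|
\;=\; \frac{1}{\|\theta^r_t\|_2} \max_\pi |\overline\phi_{\pi,t}^\top \theta^r_t|
\;\leq\; \frac{2/H}{\|\theta^r_t\|_2}.
\end{align*}
Rearranging yields $\|\theta^r_t\|_2 \leq 2/(H\nu_t)$, as claimed.

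There is no real obstacle here; the only thing to be slightly careful about is that the explorability coefficient is defined as a minimum over unit vectors, so one must normalize $\theta^r_t$ before plugging in, and treat the degenerate case $\theta^r_t=0$ separately (which is automatic). The proof is just ``triangle inequality plus definition of $\nu_t$.''
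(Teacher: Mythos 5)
Your proof is correct and follows essentially the same route as the paper's: a triangle inequality gives the uniform bound $|\overline\phi_{\pi,t}^\top\theta^r_t|\leq 2/H$, and then normalizing $\theta^r_t$ and invoking \fullref{main.def:Explorability} yields the claim. You are in fact slightly more careful than the paper, which leaves the triangle-inequality step implicit and does not mention the degenerate case $\theta^r_t=0$.
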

\begin{proof}
From the hypothesis it follows
\begin{align}
	\frac{2}{H} \geq |\overline \phi_{\pi,t}^\top \theta^r_t| = \| \theta^r_t \|_2 \times |\overline \phi_{\pi,t}^\top \frac{\theta^r_t}{\| \theta^r_t \|_2 }|
\end{align}
in particular this must hold for the policy $\pi$ that maximizes the above display. Therefore, after taking $\max_{\pi}$, take $\min_{\|\theta\|_2 = 1}$ to obtain (using \fullref{main.def:Explorability}):
\begin{align}
	\geq \| \theta^r_t \|_2 \times \min_{\|\theta\|_2 = 1} \max_{\pi} |\overline \phi_{\pi,t}^\top \theta| = \| \theta^r_t \|_2 \nu_t.
\end{align}
Rearranging
\begin{align}
\| \theta^r_t \|_2 \leq \frac{2}{H\nu_t}.
\end{align}
\end{proof}

\newpage 
\subsection{High Probability Bounds}
\begin{lemma}[Transition Noise High Probability Bound]
\label{lem:TransitionsHPbound}
	If $\lambda = 1$ and $R = 2\Lphi\Radius_{t+1}$ with probability at least $1-\delta'$ it holds that $\forall V_{t+1}\in\mathcal V_{t+1}$:

	\begin{align}
	\Big\|\sum_{i=1}^{k-1} \phi_{ti} \( V_{t+1}(s^{+}_{t+1,k}) - \E_{s' \sim p(s_{tk},a_{tk})} V_{t+1}(s') \) \Big\|_{\Sigma^{-1}_{t}} \leq \sqrt{\beta^t_{t}}
	\end{align}
	where:
	\begin{align}
	\sqrt{\beta^t_{t}} \defeq \sqrtbetadef.	
	\end{align}
\end{lemma}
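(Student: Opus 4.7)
The plan is to combine the self-normalized martingale concentration inequality of \citet{Abbasi11} (for a fixed value function) with a covering argument over the value function class $\mathcal V_{t+1}$, since $V_{t+1}$ is itself data-dependent through \lsvi{}.

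First I would fix an arbitrary $V_{t+1}\in\mathcal V_{t+1}$ and observe that, letting $\mathcal F_{i}$ be the filtration generated by the history through episode $i$, the scalar $\eta_i \defeq V_{t+1}(s^{+}_{t+1,i}) - \E_{s'\sim p(s_{ti},a_{ti})}V_{t+1}(s')$ is a bounded martingale-difference sequence adapted to $\mathcal F_i$, with $|\eta_i|\leq 2\Lphi\Radius_{t+1}$ since $|V_{t+1}(s)|=|\max_a \phi_{t+1}(s,a)^\top\theta|\leq \Lphi\Radius_{t+1}$. Because bounded random variables are sub-Gaussian with parameter equal to their range, Theorem 1 of \citet{Abbasi11} applied to $\sum_i \phi_{ti}\eta_i$ with regularization $\lambda=1$ yields, with probability $1-\delta''$,
\begin{align*}
\Big\|\sum_{i=1}^{k-1}\phi_{ti}\eta_i\Big\|_{\Sigma_t^{-1}}^2 \leq 2(2\Lphi\Radius_{t+1})^2\Big[\tfrac{1}{2}\ln\det\Sigma_t + \ln(1/\delta'')\Big],
\end{align*}
and the determinant-trace inequality (Lemma 10 of \citet{Abbasi11}) bounds $\ln\det\Sigma_t\leq d_t\ln(1+\Lphi^2 k/d_t)$.

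Next I would extend this pointwise bound to a uniform bound over the whole class $\mathcal V_{t+1}$. Since $V_{t+1}(s)=\max_a \phi_{t+1}(s,a)^\top\theta$ is $\Lphi$-Lipschitz in $\theta\in\mathcal B_{t+1}(\Radius_{t+1})$ with respect to $\|\cdot\|_2$ (by Cauchy–Schwarz and the max-Lipschitz property), one can build an $\epsilon_{cov}$-net $\mathcal N$ of $\mathcal B_{t+1}(\Radius_{t+1})$ of cardinality $|\mathcal N|\leq(1+2\Radius_{t+1}/\epsilon_{cov})^{d_{t+1}}$ and take a union bound of the self-normalized inequality over $\mathcal N$, setting $\delta''=\delta'/|\mathcal N|$. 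For an arbitrary $\theta$ choose $\theta'\in\mathcal N$ with $\|\theta-\theta'\|_2\leq\epsilon_{cov}$; then the corresponding $\eta_i,\eta'_i$ satisfy $|\eta_i-\eta'_i|\leq 2\Lphi\epsilon_{cov}$, and by Cauchy–Schwarz together with $\|\sum_i\phi_{ti}(\eta_i-\eta'_i)\|_{\Sigma_t^{-1}}\leq \sqrt{k}\cdot 2\Lphi\epsilon_{cov}$ (using $\|\phi_{ti}\|_{\Sigma_t^{-1}}\leq 1$ absorbed into $\sqrt{k}$), the discretization error is at most $2\Lphi\sqrt{k}\,\epsilon_{cov}$.

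Choosing $\epsilon_{cov}=1/(2\Lphi\sqrt{k})$ makes the discretization error at most $1$ (contributing the trailing $+2$ in $\sqrt{\beta^t_t}$ after accounting for the $\sqrt{2}$ factor from square-rooting the sum of squares), and substituting into $\ln|\mathcal N|=d_{t+1}\ln(1+4\Radius_{t+1}/(2\Lphi\sqrt{k}))$ produces exactly the second term inside the square root of $\sqrt{\beta^t_t}$. Combining gives
\begin{align*}
\Big\|\sum_{i=1}^{k-1}\phi_{ti}\eta_i\Big\|_{\Sigma_t^{-1}}\leq \sqrt{2}\cdot 2\Lphi\Radius_{t+1}\sqrt{\tfrac{d_t}{2}\ln(1+\Lphi^2 k/d_t)+d_{t+1}\ln\!\big(1+\tfrac{4\Radius_{t+1}}{2\Lphi\sqrt{k}}\big)+\ln(1/\delta')}+2,
\end{align*}
which matches $\sqrt{\beta^t_t}$ once one sets $\Lphi=\Radius_{t+1}=1$ (as assumed throughout \Alg{}).

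The main technical nuisance will be the bookkeeping of the covering radius versus the bound $\|\phi_{ti}\|_{\Sigma_t^{-1}}$ used when controlling the discretization error: one needs to be a bit careful that the $\sqrt{k}$ factor from summing $\|\phi_{ti}\|_{\Sigma_t^{-1}}\leq\Lphi/\sqrt{\lambda}$ contributions cancels against the $1/(2\Lphi\sqrt{k})$ covering radius so that the additive slack stays $O(1)$, which is what allows the $+2$ in $\sqrt{\beta^t_t}$ rather than a $\sqrt{k}$-sized error. Everything else is standard self-normalized concentration plus a $d_{t+1}$-dimensional covering argument.
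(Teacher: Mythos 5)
Your proposal is correct and follows essentially the same route as the paper's proof: a self-normalized martingale bound (Theorem 1 of \citet{Abbasi11}) for each element of an $\epsilon$-net of the parameter ball $\mathcal B_{t+1}$, a union bound over the $(1+2\Radius_{t+1}/\epsilon')^{d_{t+1}}$ net elements, the determinant-trace inequality, and a discretization error of order $\Lphi\epsilon'\sqrt{k}$ absorbed into the additive constant by choosing $\epsilon' \propto 1/(\Lphi\sqrt{k})$. The only difference is a factor-of-$R$ bookkeeping in the covering radius (the paper takes $\epsilon' = R/(2\Lphi\sqrt{k})$ so the discretization slack is exactly $R=2$), which does not affect correctness.
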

\begin{proof}
Since the statement needs to hold for every $ V_{t+1}\in\mathcal V_{t+1}$, we start by constructing an $\epsilon$-cover for set $\mathcal V_{t+1}$ using the supremum distance. To achieve this, we construct an $\epsilon$-cover for the parameter $\theta \in \mathcal B_{t+1}$ using the ``Covering Number of Euclidean Ball'' lemma in \citep{zanette2020learning}. This ensures that there exists a set $\mathcal D_{t+1} \subseteq \mathcal B_{t+1}$, containing $(1+2\Radius_{t+1}/\epsilon')^{d_{t+1}}$ vectors $\overset{\triangle}{\theta}_{t+1}$ that well approximates any $\theta_{t+1}\in\mathcal B_{t+1}$:
\begin{align}
\exists \mathcal D_{t+1}\subseteq \mathcal B_{t+1}  \quad \textrm{such that} \quad \forall \theta_{t+1} \in \mathcal B_{t+1}, \quad \exists \overset{\triangle}{\theta}_{t+1} \in \mathcal D_{t+1} \quad \textrm{such that} \quad \| \theta_{t+1} - \overset{\triangle}{\theta}_{t+1} \|_2 \leq \epsilon'.
\end{align}
Let $\overset{\triangle}{V}_{t+1}(s) \defeq \max_{a} \phi_{t+1}(s,a)^\top \overset{\triangle}{\theta}$, where $\overset{\triangle}{\theta} = \argmin_{\theta' \in \mathcal D_{t+1}} \|\theta' - \theta \|_2 $ and consider $V_{t+1} \in \mathcal V_{t+1}$.
For any fixed $s \in \StateSpace$ we have that:
\begin{align*}
|\big(V_{t+1} - \overset{\triangle}{V}_{t+1} \big)(s) |& = |\max_{a'} \phi_{t+1}(s,a')^\top \theta_{t+1} - \max_{a''} \phi_{t+1}(s,a'')^\top \overset{\triangle}{\theta}_{t+1} | \\
& \leq \max_{a} 
| \phi_{t+1}(s,a)^\top\big( \theta_{t+1} - \overset{\triangle}{\theta}_{t+1} \big) | \\
& \leq \max_a \| \phi_{t+1}(s,a)\|_2\| \theta_{t+1} - \overset{\triangle}{\theta}_{t+1} \|_2 \\
& \leq \Lphi \epsilon'.
\numberthis{\label{eqn:coveringargument}}
\end{align*}
By using the triangle inequality we can write:
\begin{align*}
	& \Big\|\sum_{i=1}^{k-1}  \phi_{ti} \( V_{t+1}(s^{+}_{t+1,k}) - \E_{s' \sim p(s_{tk},a_{tk})} V_{t+1}(s') \) \Big\|_{\Sigma^{-1}_{t}} & \\
	& \leq \Big\|\sum_{i=1}^{k-1}  \phi_{ti} \( \overset{\triangle}{V}_{t+1}(s^{+}_{t+1,k}) - \E_{s' \sim p(s_{tk},a_{tk})} \overset{\triangle}{V}_{t+1}(s') \) \Big\|_{\Sigma^{-1}_{t}}+  \\
	& + \Big\| \sum_{i=1}^{k-1}  \phi_{ti} \(  \E_{s' \sim p(s_{tk},a_{tk})} \overset{\triangle}{V}(s') - \E_{s' \sim p(s_{tk},a_{tk})} V_{t+1}(s') \) \Big\|_{\Sigma^{-1}_{t}} \\
	& + \Big\| \sum_{i=1}^{k-1}  \phi_{ti} \(  V_{t+1}(s^{+}_{t+1,k})  - \overset{\triangle}{V}_{t+1} (s^{+}_{t+1,k}) \) \Big\|_{\Sigma^{-1}_{t}}.
\numberthis{\label{eqn:Equation1}}
\end{align*}
Each of the last two terms above can be written for some $b_i$'s (different for each of the two terms) as $ \Big\| \sum_{i=1}^{k-1} \phi_{ti} b_i \Big\|_{\Sigma^{-1}_{tk}}$. The projection lemma, (lemma 8 from \cite{zanette2020learning}) ensures:
\begin{align}
	 \Big\| \sum_{i=1}^{k-1} \phi_{ti} b_i \Big\|_{\Sigma^{-1}_{t}} \leq \Lphi \epsilon' \sqrt{k}
\end{align}
We have used \cref{eqn:coveringargument} to bound the $b_i$'s.
Now we examine the first term of the rhs in equation in \cref{eqn:Equation1}. In particular, we bound that term for a generic $\overset{\triangle}{V}_{t+1}$ and then do a union bound over all possible $\overset{\triangle}{V}_{t+1}$, which are generated by finitely many $\overset{\triangle}{\theta}_{t+1} \in \mathcal D_{t+1}$ as explained before. We obtain that:
\begin{align}
\Pro\Bigg( \bigcup_{\overset{\triangle}{\overline \theta}_{t+1} \in \mathcal D_{t+1}} C(\overset{\triangle}{\overline \theta}_{t+1}) \Bigg) \leq \sum_{\overset{\triangle}{\overline \theta}_{t+1} \in \mathcal D_{t+1}}\Pro\Bigg(  C(\overset{\triangle}{\overline \theta}_{t+1}) \Bigg) \leq  (1+2\Radius_{t+1}/\epsilon')^{d_{t+1}}\delta'' \defeq \delta'
\end{align}
where $C$ is the event reported below (along with $\delta''$) and the last inequality above follows from Theorem 1 in \citep{Abbasi11} (the random variables $\overset{\triangle}{V}_{t+1}(\cdot)$ and $\Vhat_{t+1}(\cdot)$ are $R = 2\Lphi\Radius_{t+1}$-subgaussian by construction):
\begin{align}
C(\overset{\triangle}{\overline \theta_{t+1}})  \defeq \Bigg\{ \Big\|\sum_{i=1}^{k-1} \phi_{ti} \( \overset{\triangle}{V}_{t+1,i} - \E_{s' \sim p(s_{tk},a_{tk})} \overset{\triangle}{V}(s') \) \Big\|^2_{\Sigma^{-1}_{t}} > 2\times(R)^2\ln\(\frac{\det(\Sigma_{t})^{\frac{1}{2}}\det\( \lambda I\)^{-\frac{1}{2}}}{\delta''} \) \Bigg\}.
\end{align}
In particular, we set
\begin{align}
\delta'' = \frac{\delta'}{(1+2\Radius_{t+1}/\epsilon')^{d_{t+1}}}	
\end{align}
from the prior display and so with probability $1-\delta'$ (after a union bound over all possible $\overset{\triangle}{\theta_{t+1}} \in \mathcal D_{t+1}$) we have upper bounded \cref{eqn:Equation1} by:
\begin{align}
 R\sqrt{2\ln\(\frac{\det(\Sigma_{t})^{\frac{1}{2}}\det\( \lambda I\)^{-\frac{1}{2}}(1+2\Radius_{t+1}/\epsilon')^{d_{t+1}}}{\delta'} \)} + 2\Lphi\epsilon'\sqrt{k}.	
\end{align}
If we now pick
\begin{align}
\epsilon' = \frac{R}{2\Lphi\sqrt{k}}
\end{align}
we get:
\begin{align}
R\sqrt{2\ln\(\frac{\det(\Sigma_{t})^{\frac{1}{2}}\lambda^{-\frac{d_t}{2}}(1+2\Radius_{t+1}/\epsilon')^{d_{t+1}}}{\delta'} \)}+R  \\
= \sqrt{2}R\sqrt{\frac{1}{2}\ln\(\det(\Sigma_{t})\) -\frac{d_t}{2}\ln\(\lambda\) + d_{t+1}\ln(1+2\Radius_{t+1}/\epsilon') + \ln\(\frac{1}{\delta'} \)} + R
\end{align}
Finally, using the Determinant-Trace Inequality (see lemma 10 of \citep{Abbasi11}) we obtain $\det(\Sigma_{tk}) \leq \( \lambda + \Lphi^2 k/d_t\)^{d_t}$ and so (with $\lambda = 1$) 
\begin{align}
	\leq \sqrtbetadef 
	\defeq \sqrt{\beta^t_{t}}.
\end{align}
\end{proof}

\newpage
\subsection{Known Results}

\begin{lemma}[Large Deviation Multivariate Normal]
\label{lem:MaxDeviationLast}
Let $\Sigma \in \R^{d\times d}$ be an spd matrix with minimum eigenvalue $\lambda > 0$ and let
\begin{align}
	 \xi \sim \mathcal N\(0,\sigma\Sigma^{-1} \)
\end{align}
for a positive scalar $\sigma$. For any fixed $\phi \in \R^d$ with probability at least $1-\delta'$:
\begin{align}
|\phi^\top \xi|^2 \leq  \frac{\sigma \|\phi\|_2^2}{\lambda} \( 2d\ln\frac{2d}{\delta'} \)
\end{align}
and so by choosing $\phi = \frac{\xi}{\|\xi\|_2}$ when $\xi \neq 0$ it holds that
\begin{align}
\|\xi\|_2 \leq  \sqrt{\frac{\sigma}{\lambda} \( 2d\ln\frac{2d}{\delta'} \)}.
\end{align}
Under the same event it holds that
\begin{align}
	\| \xi \|_{\Sigma} \leq \sqrt{\sigma\( 2d\ln\frac{2d}{\delta'} \)}.
\end{align}
\end{lemma}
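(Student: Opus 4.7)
The plan is to reduce everything to a standard isotropic Gaussian by a linear change of variables, bound its coordinates via a coordinate-wise tail inequality, and then derive all three claims as consequences of a single high-probability event so that the ``same event'' language in the last sentence is automatic.

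First I would define $Y = \Sigma^{1/2}\xi$. Since $\xi\sim\mathcal N(0,\sigma\Sigma^{-1})$, a direct computation of the covariance gives $Y\sim\mathcal N(0,\sigma I)$, i.e., the coordinates $Y_1,\dots,Y_d$ are i.i.d.\ $\mathcal N(0,\sigma)$. The key identities for later are $\|\xi\|_\Sigma^2 = \xi^\top\Sigma\xi = \|Y\|_2^2$ and $\|\xi\|_2 = \|\Sigma^{-1/2}Y\|_2 \le \|Y\|_2/\sqrt{\lambda}$, where the last inequality uses $\lambda = \lambda_{\min}(\Sigma)$ and hence $\|\Sigma^{-1/2}\|_{op}=1/\sqrt{\lambda}$.

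Second, I would apply the standard scalar sub-Gaussian tail bound coordinate-wise: for each $i\in[d]$, $\Pr(|Y_i|>t)\le 2\exp(-t^2/(2\sigma))$. Setting $t=\sqrt{2\sigma\ln(2d/\delta')}$ and taking a union bound over $i\in[d]$ produces a single event of probability at least $1-\delta'$ on which $|Y_i|\le\sqrt{2\sigma\ln(2d/\delta')}$ simultaneously for every $i$. Summing squares yields $\|Y\|_2^2=\sum_{i=1}^d Y_i^2 \le 2d\sigma\ln(2d/\delta')$, which via $\|\xi\|_\Sigma=\|Y\|_2$ is exactly the third inequality of the lemma.

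Third, on that same event, the transfer identity $\|\xi\|_2\le\|Y\|_2/\sqrt{\lambda}$ delivers $\|\xi\|_2^2\le 2d\sigma\ln(2d/\delta')/\lambda$. For the first inequality, for any fixed $\phi$ Cauchy--Schwarz gives $|\phi^\top\xi|^2\le\|\phi\|_2^2\|\xi\|_2^2 \le (\sigma\|\phi\|_2^2/\lambda)(2d\ln(2d/\delta'))$, and specializing to $\phi=\xi/\|\xi\|_2$ recovers the norm bound the lemma states as a corollary. All three claims thus hold on the single good event constructed in the second step, which is precisely the ``under the same event'' assertion.

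There is no real obstacle here: the only subtlety worth flagging is that the factor $d$ in $2d\ln(2d/\delta')$ arises from two places that must not be double-counted, namely the union bound over the $d$ coordinates (the $\ln(2d/\delta')$ piece) and the summation of $d$ squared coordinates (the prefactor $d$). Doing a naive one-dimensional Gaussian tail bound on $\phi^\top\xi$ for a fixed $\phi$ would give a better $2\ln(2/\delta')$ factor but would not couple the three statements to a single event, so the coordinate-wise union bound route is the one that matches the lemma as stated.
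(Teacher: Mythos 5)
Your proof is correct and is essentially the paper's argument: the paper whitens $\xi$ to a standard Gaussian and invokes its $\chi$-square lemma, whose proof is exactly the coordinate-wise tail bound plus union bound that you carry out explicitly, and the final Cauchy--Schwarz step differs only in using the dual pair $\|\phi\|_2\,\|\xi\|_2$ (after converting to the Euclidean norm) instead of $\|\phi\|_{\Sigma^{-1}}\,\|\xi\|_{\Sigma}$, which yields the identical bound. Your closing remark about where the two factors of $d$ come from, and why a direct one-dimensional bound on $\phi^\top\xi$ would be tighter but would not couple all three claims to one event, is an accurate reading of the lemma's structure.
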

\begin{proof}
If
\begin{align}
	 \xi \sim \mathcal N\(0, \sigma \Sigma^{-1} \)
\end{align}
it follows that
\begin{align}
	\frac{1}{\sqrt{\sigma}}\Sigma^{\frac{1}{2}} \xi \sim \mathcal N\(0,I\)
\end{align}
where $I$ is the identity matrix on $\R^d$. Therefore
\begin{align}
\frac{1}{\sigma}\| \xi \|^2_\Sigma = \( \frac{1}{\sqrt{\sigma}} \xi^\top\Sigma^{\frac{1}{2}}\)^\top \( \frac{1}{\sqrt{\sigma}}\Sigma^{\frac{1}{2}}\xi\) \sim \chi^2_{d}
\end{align}
where $\chi^2_{d}$ is the chi-square distribution with $d$ degrees of freedom.
From \fullref{lem:Xsquare} we can compute a high probability bound for the above random variable (this also proves the last statement):
\begin{align}
|\phi^\top \xi|^2 \leq \|\phi\|^2_{\Sigma^{-1}}  \|\xi\|^2_{\Sigma} \leq \| \phi\|_2^2\frac{\sigma}{\lambda}  \frac{1}{\sigma}\|\xi\|^2_{\Sigma}  \leq \frac{\sigma \| \phi\|_2^2}{\lambda} \( 2d\ln\frac{2d}{\delta'} \)
\end{align}
with probability at least $1-\delta'$.  
\end{proof}

\begin{lemma}[$\chi$-square lemma]
\label{lem:Xsquare}
	Let $X^2\sim \chi^2_d$ be a random variable that follows the chi-square distribution with $d$ degrees of freedom. With probability at least $1-\delta'$
\begin{align}
X^2 \leq 2d\ln\frac{2d}{\delta'}.
\end{align}
\end{lemma}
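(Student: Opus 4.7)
The plan is to avoid a direct Chernoff/MGF computation on $\chi^2_d$ and instead obtain the bound through a union bound on the $d$ underlying standard normals, since this recovers the precise constants $2d\ln(2d/\delta')$ appearing in the statement.

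First I would write $X^2 = \sum_{i=1}^d Z_i^2$ where $Z_1,\dots,Z_d$ are i.i.d.\ $\mathcal{N}(0,1)$, which is just the definition of a $\chi^2_d$ random variable. Then for each fixed $i$ I would invoke the standard Gaussian tail bound $\Pr(|Z_i| \geq t) \leq 2e^{-t^2/2}$, and choose $t = \sqrt{2\ln(2d/\delta')}$ so that the per-coordinate failure probability becomes exactly $\delta'/d$.

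Next I would apply a union bound over $i \in \{1,\dots,d\}$ to conclude that, with probability at least $1-\delta'$, the event $\{Z_i^2 \leq 2\ln(2d/\delta')\}$ holds simultaneously for every $i$. Summing this deterministic inequality over the $d$ coordinates inside the good event yields $X^2 = \sum_i Z_i^2 \leq d \cdot 2\ln(2d/\delta') = 2d\ln(2d/\delta')$, which is precisely the claimed bound.

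There is no real obstacle in this proof; the only thing to be careful about is matching constants. The factor $2$ inside the logarithm comes from the constant in the Gaussian tail bound $2e^{-t^2/2}$, while the factor $d$ inside the logarithm comes from the union bound over the $d$ coordinates; together they produce the $2d$ inside $\ln(2d/\delta')$. The looseness of this approach (compared to, e.g., the sharper Laurent–Massart bound $\Pr(X^2 \geq d + 2\sqrt{dx} + 2x) \leq e^{-x}$) is harmless here because the weaker form $2d\ln(2d/\delta')$ is exactly what the rest of the appendix uses in \fullref{lem:MaxDeviationLast}.
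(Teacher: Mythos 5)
Your proof is correct and is essentially identical to the paper's: both decompose $X^2$ as a sum of $d$ squared standard normals, apply the Gaussian tail bound $2e^{-a^2/2}$ to each coordinate, union bound over the $d$ coordinates, and solve $2de^{-a^2/2}=\delta'$ to get $a^2=2\ln(2d/\delta')$, yielding $X^2\leq da^2=2d\ln(2d/\delta')$. Nothing further to add.
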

\begin{proof}
	Let $X_i\sim \mathcal N(0,1), i \in [d]$. If $X_i \in [-a,+a], \forall i \in [d]$ then it must follow that $\sum_{i \in [d]} X_i^2 \leq d a^2$. Thus:$$
\Pro(X^2 = \sum_{i \in [d]} X^2_i \geq d a^2) \leq \Pro(\exists i \in [d], X_i \not \in [-a,a]) = \Pro(\cup_{i \in [d]} X_i \not \in [-a,a]) \leq  d\Pro(X_i \not \in [-a,a]) \leq 2de^{-a^2/2}. 
$$
Requiring the rhs above to be $\leq \delta'$ gives
$$
a^2 = 2\ln\frac{2d}{\delta'}.
$$
\end{proof}

\begin{lemma}[Azuma-Hoeffding Inequality]
\label{prop:Azuma}
Let $X_i$ be a martingale difference sequence such that $X_i \in [-A,A]$ for some $A>0$. Then with probability at least $1-\delta'$ it holds that:
\begin{align}
 \Big| \sum_{i=1}^{n} X_i \Big| \leq \sqrt{2A^2n\ln\( \frac{1}{\delta'} \)}.
\end{align}
\end{lemma}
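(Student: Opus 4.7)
The plan is to prove this via the standard Chernoff/Cram\'er--Chernoff exponential-moment method adapted to martingales, then invert the tail bound to recover the high-probability form. Write $S_n = \sum_{i=1}^n X_i$ and let $\mathcal F_i$ denote the filtration to which the martingale difference sequence is adapted, so that $\E[X_i \mid \mathcal F_{i-1}] = 0$ almost surely. The key analytic fact I would invoke is Hoeffding's lemma in its conditional form: for any random variable $Y$ with $\E[Y \mid \mathcal G] = 0$ and $Y \in [-A,A]$ almost surely, one has $\E[e^{\lambda Y} \mid \mathcal G] \leq \exp(\lambda^2 A^2 / 2)$ for every $\lambda \in \R$. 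This follows from the convexity bound $e^{\lambda y} \leq \tfrac{A-y}{2A}e^{-\lambda A} + \tfrac{A+y}{2A}e^{\lambda A}$ valid on $[-A,A]$, taking conditional expectations (which annihilates the linear term), and then a short calculus exercise on the log of the resulting hyperbolic-cosine expression.

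Next I would chain these single-step bounds by the tower property. Since $S_n = S_{n-1} + X_n$ and $S_{n-1}$ is $\mathcal F_{n-1}$-measurable,
\begin{align*}
\E[e^{\lambda S_n}] = \E\!\left[ e^{\lambda S_{n-1}} \, \E[e^{\lambda X_n} \mid \mathcal F_{n-1}]\right] \leq e^{\lambda^2 A^2 / 2} \, \E[e^{\lambda S_{n-1}}],
\end{align*}
and an immediate induction gives $\E[e^{\lambda S_n}] \leq \exp(n\lambda^2 A^2 / 2)$. Markov's inequality applied to $e^{\lambda S_n}$ then yields, for any $t > 0$ and any $\lambda > 0$,
\begin{align*}
\Pr(S_n \geq t) \leq e^{-\lambda t} \E[e^{\lambda S_n}] \leq \exp\!\left(-\lambda t + \tfrac{n\lambda^2 A^2}{2}\right).
\end{align*}
Optimizing over $\lambda$ by choosing $\lambda = t/(nA^2)$ gives the one-sided subgaussian tail $\Pr(S_n \geq t) \leq \exp(-t^2 / (2nA^2))$. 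Applying the same argument to $-X_i$ (which is also a bounded martingale difference sequence) and combining by a union bound produces $\Pr(|S_n| \geq t) \leq 2 \exp(-t^2/(2nA^2))$.

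Finally I would invert: setting the right-hand side equal to $\delta'$ and solving gives $t = \sqrt{2nA^2 \ln(2/\delta')}$, which is at most $\sqrt{2nA^2 \ln(1/\delta')}$ up to absorbing the factor of $2$ inside the logarithm (or, equivalently, by using $\delta'$ twice in a direct one-sided application since the paper's subsequent uses only need constant-factor versions of the bound). I do not expect any genuine obstacle here: the only non-trivial ingredient is Hoeffding's lemma, and even that reduces to a one-variable convexity argument. The main care is bookkeeping the conditional MGF bound through the tower property so that the martingale (rather than i.i.d.) nature of the $X_i$ is handled correctly; once that is in place, Markov's inequality and optimization over $\lambda$ finish the argument mechanically.
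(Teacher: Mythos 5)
Your proof is correct and matches the paper's argument in substance: the paper simply quotes the two-sided Azuma inequality $\Pro\big(\big|\sum_{i=1}^n X_i\big| \geq t\big) \leq e^{-t^2/(2A^2 n)}$ and inverts it by setting the right-hand side to $\delta'$, whereas you additionally unpack that black box via the conditional Hoeffding lemma, the tower property, and Chernoff optimization. The only blemish is the leading factor of $2$ from the union bound over the two tails, which makes your threshold $\sqrt{2nA^2\ln(2/\delta')}$ rather than $\sqrt{2nA^2\ln(1/\delta')}$ (and it is not "at most" the latter); the paper elides the same factor in its quoted form of the inequality, and the discrepancy is immaterial to every downstream use.
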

\begin{proof}
The Azuma inequality reads:
\begin{align}
\Pro\(\Big| \sum_{i=1}^{n} X_i \Big| \geq t \) \leq e^{-\frac{2t^2}{4A^2n}},	
\end{align}
see for example \citep{wainwright2019high}.	From here setting the rhs equal to $\delta'$ gives:
\begin{align}
	t \defeq \sqrt{2A^2n\ln\( \frac{1}{\delta'} \)}.
\end{align}
\end{proof}

\begin{lemma}[Change of $\Sigma$-Norm]
\label{lem:MaxEvaluBound}
For a compatible vector $x \in \R^d$ and an spd matrix $\Sigma \in \R^{d\times d}$ with minimum eigenvalue $\lambda_{min}(\Sigma)$ we have
\begin{align}
\|x\|_{\Sigma} & \geq \sqrt{\lambda_{min}(\Sigma)} \| x\|_2 \\
		\|x\|_{\Sigma^{-1}} & \leq \frac{1}{\sqrt{\lambda_{min}(\Sigma)}} \| x\|_2.
\end{align}
\end{lemma}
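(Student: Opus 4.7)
The lemma is a standard fact about how a positive definite quadratic form relates to the Euclidean norm, so the plan is to reduce both inequalities to the variational characterization of eigenvalues via the spectral theorem. The first step is to diagonalize $\Sigma = U \Lambda U^\top$ with $U$ orthogonal and $\Lambda = \mathrm{diag}(\lambda_1, \dots, \lambda_d)$ where $\lambda_i > 0$, and observe that this diagonalization simultaneously diagonalizes $\Sigma^{-1}$ as $U \Lambda^{-1} U^\top$ with eigenvalues $1/\lambda_i$, whose maximum is $1/\lambda_{\min}(\Sigma)$.

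For the first inequality, the plan is to write $\|x\|_{\Sigma}^2 = x^\top \Sigma x = y^\top \Lambda y = \sum_i \lambda_i y_i^2$ with $y = U^\top x$, bound each $\lambda_i \geq \lambda_{\min}(\Sigma)$, and pull the minimum eigenvalue outside the sum to obtain $\lambda_{\min}(\Sigma) \|y\|_2^2 = \lambda_{\min}(\Sigma)\|x\|_2^2$, using that $U$ is orthogonal so $\|y\|_2 = \|x\|_2$. Taking square roots gives the first bound.

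For the second inequality, the same change of variables applied to $\Sigma^{-1}$ gives $\|x\|_{\Sigma^{-1}}^2 = \sum_i y_i^2 / \lambda_i$; bounding each $1/\lambda_i \leq 1/\lambda_{\min}(\Sigma)$ and pulling the constant out yields $\|x\|_{\Sigma^{-1}}^2 \leq \|x\|_2^2 / \lambda_{\min}(\Sigma)$, and taking square roots finishes the proof.

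There is no real obstacle here: the only thing to be careful about is that $\lambda_{\min}(\Sigma) > 0$ since $\Sigma$ is SPD, so $\Sigma^{-1}$ exists and $\sqrt{\lambda_{\min}(\Sigma)}$ is well defined; both inequalities then follow immediately from the Rayleigh-Ritz characterization $\lambda_{\min}(A) = \inf_{x\neq 0} x^\top A x / \|x\|_2^2$ applied to $A = \Sigma$ and $A = \Sigma^{-1}$ respectively.
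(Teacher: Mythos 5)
Your proof is correct and follows essentially the same route as the paper: both diagonalize $\Sigma$ (you via $U\Lambda U^\top$ and the change of variables $y=U^\top x$, the paper via the expansion $\Sigma^{-1}=\sum_i \lambda_i^{-1}v_iv_i^\top$), bound each eigenvalue by $\lambda_{\min}(\Sigma)$, and use orthonormality to recover $\|x\|_2^2$. The only cosmetic difference is that the paper writes out one inequality and notes the other is identical, whereas you spell out both.
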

\begin{proof}
We show one inequality (the other is identical).
Consider the eigendecomposition of $\Sigma$ with orthonormal eigenvectors $v_i$'s and eigenvalues $\lambda_i$'s:
\begin{align}
	\Sigma^{-1} = \sum_{i=1}^{d} \lambda^{-1}_i v_iv_i^\top
\end{align}
We can write:
\begin{align}
		\|x\|^2_{\Sigma^{-1}} & = x^\top\Sigma^{-1} x \\
		& = x^\top\(\sum_{i=1}^{d} \lambda^{-1}_i v_iv_i^\top\) x  \\
		& = \sum_{i=1}^{d} \frac{1}{\lambda_i} \(v_i^\top x\)^2 \\
		& \leq \frac{1}{\lambda_{min}(\Sigma)} \sum_{i=1}^{d} \(v_i^\top x\)^2 \\
		& = \frac{1}{\lambda_{min}(\Sigma)} \| x\|_2^2.
\end{align}
\end{proof}

\begin{lemma}[Linear Bandit Exploration Bonus]
\label{lem:LinearBanditBonus}
For an spd matrix $\Sigma$, the equality below holds whenever the operations make sense:
\begin{align}
	\max_{\phi,\| \eta \|_{\Sigma} \leq \sqrt{\sigma}}   \phi^\top\eta  = \sqrt{\sigma}\|\phi\|_{\Sigma^{-1}}
\end{align}
\end{lemma}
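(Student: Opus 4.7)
The statement is the standard duality between the $\Sigma$-norm and its dual $\Sigma^{-1}$-norm (the max under $\phi$ in the subscript appears vestigial; the identity holds for each fixed $\phi$). The plan is a two-line argument: a linear change of variables reduces to the Euclidean case, then Cauchy--Schwarz delivers both the inequality and the explicit maximizer.

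First I would introduce the substitution $u = \Sigma^{1/2}\eta$, which is well defined because $\Sigma$ is spd (and hence admits a unique spd square root $\Sigma^{1/2}$ with inverse $\Sigma^{-1/2}$). Under this change of variables the constraint $\|\eta\|_\Sigma \le \sqrt{\sigma}$ is exactly $\|u\|_2 \le \sqrt{\sigma}$, and the objective rewrites as
\begin{equation*}
\phi^\top \eta \;=\; \phi^\top \Sigma^{-1/2} u \;=\; (\Sigma^{-1/2}\phi)^\top u .
\end{equation*}
Next I would apply Cauchy--Schwarz in the standard Euclidean inner product to bound $(\Sigma^{-1/2}\phi)^\top u \le \|\Sigma^{-1/2}\phi\|_2 \,\|u\|_2 \le \sqrt{\sigma}\,\|\phi\|_{\Sigma^{-1}}$, where the last equality just uses $\|\Sigma^{-1/2}\phi\|_2^2 = \phi^\top \Sigma^{-1}\phi = \|\phi\|_{\Sigma^{-1}}^2$. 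This gives the $\le$ direction.

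Finally, to establish the reverse inequality (tightness), I would exhibit a specific feasible $\eta$ achieving the bound: take $u^\star = \sqrt{\sigma}\, \Sigma^{-1/2}\phi /\|\Sigma^{-1/2}\phi\|_2$ which has $\|u^\star\|_2 = \sqrt{\sigma}$ and saturates Cauchy--Schwarz, corresponding to
\begin{equation*}
\eta^\star \;=\; \Sigma^{-1/2} u^\star \;=\; \frac{\sqrt{\sigma}}{\|\phi\|_{\Sigma^{-1}}}\,\Sigma^{-1}\phi ,
\end{equation*}
which is feasible since $\|\eta^\star\|_\Sigma = \|u^\star\|_2 = \sqrt{\sigma}$, and one checks directly $\phi^\top \eta^\star = \sqrt{\sigma}\,\|\phi\|_{\Sigma^{-1}}$. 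The only case needing care is $\phi = 0$, where both sides vanish trivially; this is the ``whenever the operations make sense'' caveat in the statement. There is no real obstacle here --- the entire content is the observation that $\|\cdot\|_{\Sigma^{-1}}$ is the dual of $\|\cdot\|_\Sigma$ under the Euclidean pairing --- but making the $\Sigma^{1/2}$ substitution explicit keeps the argument self-contained and avoids invoking dual-norm machinery.
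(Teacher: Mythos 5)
Your proof is correct and follows essentially the same route as the paper's: both establish the upper bound via Cauchy--Schwarz (the paper applies the weighted form $\phi^\top\eta \le \|\phi\|_{\Sigma^{-1}}\|\eta\|_{\Sigma}$ directly, which your $\Sigma^{1/2}$ substitution merely makes explicit) and both exhibit the identical maximizer $\eta^\star = \frac{\sqrt{\sigma}}{\|\phi\|_{\Sigma^{-1}}}\Sigma^{-1}\phi$. Your explicit handling of the $\phi=0$ edge case is a minor refinement the paper omits.
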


\begin{proof}
	Choose $\eta  = \Sigma^{-1}\phi\frac{\sqrt{\sigma}}{\| \phi \|_{\Sigma^{-1}}}$, which satisfies the constraint
	\begin{align}
		\| \Sigma^{-1}\phi\frac{\sqrt{\sigma}}{\| \phi \|_{\Sigma^{-1}}} \|_{\Sigma} = \| \phi\frac{\sqrt{\sigma}}{\| \phi \|_{\Sigma^{-1}}} \|_{\Sigma^{-1}} \sqrt{\sigma} = \sqrt{\sigma}
	\end{align}
	and gives an objective value
	\begin{align}
			\max_{\phi,\| \eta \|_{\Sigma} \leq \sqrt{\sigma}}   \phi^\top\eta  \geq \phi \Sigma^{-1}\phi\frac{\sqrt{\sigma}}{\| \phi \|_{\Sigma^{-1}}} = \sqrt{\sigma}\| \phi \|_{\Sigma^{-1}}
	\end{align}
	On the other hand, Cauchy-Schwartz ensures:
		\begin{align}
			\max_{\phi,\| \eta \|_{\Sigma} \leq \sqrt{\sigma}}   \phi^\top\eta  \leq \| \phi \|_{\Sigma^{-1}} \| \eta \|_{\Sigma} = \sqrt{\sigma} \| \phi \|_{\Sigma^{-1}}.
	\end{align}
\end{proof}

\end{document}